\documentclass[a4paper,11pt]{article}
\usepackage[top=4cm, bottom=4cm, left=3cm, right=3cm]{geometry}


\usepackage{amsmath,amssymb,amsthm}
\usepackage{graphicx,color}
\usepackage[hyphens]{url}
\usepackage{dsfont}
\usepackage{booktabs}
\usepackage[round,sort]{natbib}
\usepackage[sort,nocompress,noadjust]{cite}
\usepackage[normalem]{ulem}
\usepackage[boxed]{algorithm}
\usepackage{algpseudocode}
\usepackage{mathtools}
\usepackage[nameinlink,capitalize]{cleveref}

\newtheorem{theorem}{Theorem}
\newtheorem{cor}{Corollary}
\newtheorem{prop}{Proposition}
\newtheorem{lemma}{Lemma}

\newtheorem{applemma}{Lemma}

\newtheorem{appcor}{Corollary}

\theoremstyle{definition}
\newtheorem{defin}{Definition}
\newtheorem{assumption}{Assumption}



\newcommand{\cM}{\mathcal{M}}

\newcommand{\cS}{\mathcal{S}}

\newcommand{\cX}{\mathcal{X}}

\newcommand{\NN}{\mathbb{N}}

\newcommand{\PP}{\mathbb{P}}

\newcommand{\RR}{\mathbb{R}}

\newcommand{\bone}{\mathbf{1}}

\newcommand*{\ep}{\varepsilon}
\newcommand*{\eps}{\varepsilon}
\newcommand*{\defeq}{:=}

\DeclareMathOperator*{\argmin}{argmin}

\renewcommand{\leq}{\leqslant}
\renewcommand{\geq}{\geqslant}

\newcommand*{\Phat}{\hat{P}}

\newcommand*{\Peta}{P^\eta}

\newcommand*{\Ctilde}{\tilde{C}}
\newcommand*{\Ktilde}{\tilde{K}}
\newcommand*{\Obj}{W_\eta(\p,\q)}
\newcommand*{\Coup}{\cM(\p, \q)}

\newcommand*{\Couptilde}{\cM(\tilde \p, \tilde \q)}
\newcommand*{\ent}{H}
\newcommand*{\p}{\mathbf{p}}
\newcommand*{\q}{\mathbf{q}}
\newcommand*{\Sinkcoup}{\Pi^{\cS}_{\Coup}}
\newcommand*{\Sink}{\Pi^{\cS}}
\newcommand*{\SinkcM}{\Pi_{\cM}^{\cS}}
\newcommand*{\Sinkcouptilde}{\Pi^{\cS}_{\Couptilde}}
\newcommand{\KL}[2]{\mathsf{KL}(#1 \| #2)}

\providecommand{\abs}[1]{\left\lvert#1\right\rvert}
\newcommand*{\Otilde}{\tilde{O}}

\newcommand*{\VC}{V_C}
\newcommand*{\VCtilde}{V_{\Ctilde}}

\newcommand*{\Cinf}{\|C\|_{\infty}}
\newcommand*{\Ctildeinf}{\|\tilde{C}\|_{\infty}}

\DeclareMathOperator{\half}{\frac{1}{2}}

\providecommand{\abs}[1]{\lvert{#1}\rvert}

\providecommand{\tr}{\operatorname{Tr}}

\newcommand{\R}{\mathbb R}

\newcommand{\N}{\mathbb N}

\newcommand{\hh}{\mathcal H}

\newcommand{\la}{\lambda}
\newcommand{\deff}{d_{\textrm{eff}}}

\newcommand{\eqals}[1]{\begin{align*}#1\end{align*}}
\newcommand{\eqal}[1]{\begin{align}#1\end{align}}

\newcommand{\rank}{\text{rank}}
\newcommand{\op}{\text{op}}

\newcommand{\Kt}{\widetilde{K}}


\newcommand*{\Round}{\textsc{Round}}
\newcommand*{\Sinkhorn}{\textsc{Sinkhorn}}
\newcommand*{\AdaptiveNsytrom}{\textsc{AdaptiveNystr\"om}}
\newcommand*{\Nystrom}{\textsc{Nystr\"om}}

\newcommand*{\Tmult}{\textsc{T}_{\textsc{mult}}}

\newcommand{\diag}{\mathbb{D}}
\newcommand{\errr}{\text{err}_r}
\newcommand{\errc}{\text{err}_c}
\begin{document}

\title{Massively scalable Sinkhorn distances via the Nystr\"om method}
\newcommand*\samethanks[1][\value{footnote}]{\footnotemark[#1]}

\author{
        Jason Altschuler\footnote{Supported in part by NSF Graduate Research Fellowship 1122374.}\\
        MIT\\
        \texttt{jasonalt@mit.edu}
        \and
        Francis Bach\footnote{Supported in part by the European Research Council (grant SEQUOIA 724063).} \\
        INRIA - ENS - PSL \\
        \texttt{francis.bach@inria.fr}
        \and
        Alessandro Rudi\samethanks\\
        INRIA - ENS - PSL \\
        \texttt{alessandro.rudi@inria.fr}
        \and 
        Jonathan Niles-Weed\footnote{Supported in part by the Josephine de K\'arm\'an Fellowship.} \\
        NYU \\
        \texttt{jnw@cims.nyu.edu}
}
\date{}
\maketitle
\begin{abstract}
The Sinkhorn ``distance,'' a variant of the Wasserstein distance with entropic regularization, is an increasingly popular tool in machine learning and statistical inference. However, the time and memory requirements of standard algorithms for computing this distance grow quadratically with the size of the data, making them prohibitively expensive on massive data sets. In this work, we show that this challenge is surprisingly easy to circumvent: combining two simple techniques---the Nystr\"om method and Sinkhorn scaling---provably yields an accurate approximation of the Sinkhorn distance with significantly lower time and memory requirements than other approaches. We prove our results via new, explicit analyses of the Nystr\"om method and of the stability properties of Sinkhorn scaling. We validate our claims experimentally by showing that our approach easily computes Sinkhorn distances on data sets hundreds of times larger than can be handled by other techniques.
\end{abstract}

\newpage
\tableofcontents
\newpage

\section{Introduction}\label{sec:intro}

Optimal transport is a fundamental notion in probability theory and geometry~\citep{Vil08}, which has recently attracted a great deal of interest in the machine learning community as a tool for image recognition~\citep{LiWanZha13,RubTomGui00}, domain adaptation~\citep{CouFlaTui17,CouFlaTui14}, and generative modeling~\citep{BouGelTol17,ArjChiBot17,GenCutPey16}, among many other applications~\citep[see, e.g.,][]{PeyCut17,KolParTho17}.

The growth of this field has been fueled in part by computational advances, many of them stemming from an influential proposal of~\citet{Cut13} to modify the definition of optimal transport to include an \emph{entropic penalty}.
The resulting quantity, which~\citet{Cut13} called the \emph{Sinkhorn ``distance''}\footnote{We use quotations since it is not technically a distance; see~\citep[Section 3.2]{Cut13} for details. The quotes are dropped henceforth.}
after~\citet{Sin67}, is significantly faster to compute than its unregularized counterpart. Though originally attractive purely for computational reasons, the Sinkhorn distance has since become an object of study in its own right because it appears to possess better statistical properties than the unregularized distance both in theory and in practice~\citep{GenPeyCut18, MonMulCut16, PeyCut17, SchShuTab17, RigWee18b}.
Computing this distance as quickly as possible has therefore become an area of active study.

We briefly recall the setting.
Let $\p$ and $\q$ be probability distributions supported on at most $n$ points in~$\RR^d$.
We denote by $\Coup$ the set of all \emph{couplings} between $\p$ and $\q$, and for any $P \in \Coup$, we denote by $\ent(P)$ its Shannon entropy.
(See Section~\ref{sec:prelim} for full definitions.)
The Sinkhorn distance between $\p$ and $\q$ is defined as 
\begin{equation}\label{eq:peta}
\Obj
\defeq \min_{P \in \Coup} \sum_{ij} P_{ij} \|x_i - x_j\|_2^2 - \eta^{-1} \ent(P)\,,
\end{equation}
for a parameter $\eta > 0$.
We stress that we use the squared Euclidean cost in our formulation of the Sinkhorn distance. This choice of cost---which in the unregularized case corresponds to what is called the $2$-Wasserstein distance~\citep{Vil08}---is essential to our results, and we do not consider other costs here. The squared Euclidean cost is among the most common in applications~\citep{SchShuTab17,GenPeyCut18,CouFlaTui17,ForHutNit18,BouGelTol17}.

Many algorithms to compute $\Obj$ are known. \citet{Cut13} showed that a simple iterative procedure known as Sinkhorn's algorithm had very fast performance in practice, and later experimental work has shown that greedy and stochastic versions of Sinkhorn's algorithm perform even better in certain settings~\citep{GenCutPey16, AltWeeRig17}. These algorithms are notable for their versatility: they provably succeed for any bounded, nonnegative cost. On the other hand, these algorithms are based on matrix manipulations involving the $n \times n$ cost matrix $C$, so their running times and memory requirements inevitably scale with $n^2$. In experiments, \citet{Cut13} and \citet{GenCutPey16} showed that these algorithms could reliably be run on problems of size $n \approx 10^4$.

Another line of work has focused on obtaining better running times when the cost matrix has special structure. A preeminent example is due to~\citet{SolGoePey15}, who focus on the Wasserstein distance on a compact Riemannian manifold, and show that an approximation to the entropic regularized Wasserstein distance can be obtained by repeated convolution with the heat kernel on the domain.
\citet{SolGoePey15} also establish that for data supported on a grid in $\RR^d$, significant speedups are possible by decomposing the cost matrix into ``slices'' along each dimension~\citep[see][Remark 4.17]{PeyCut17}.
While this approach allowed Sinkhorn distances to be computed on significantly larger problems ($n \approx 10^8$), it does not extend to non-grid settings.
Other proposals include using random sampling of auxiliary points to approximate semi-discrete costs~\citep{TenWolKim18} or performing a Taylor expansion of the kernel matrix in the case of the squared Euclidean cost~\citep{AltBacRud18}.
These approximations both focus on the $\eta \to \infty$ regime, when the regularization term in~\eqref{eq:peta} is very small, and do not apply to the moderately regularized case $\eta = O(1)$ typically used in practice. Moreover, the running time of these algorithms scales exponentially in the ambient dimension, which can be very large in applications.

\subsection{Our contributions}
We show that a simple algorithm can be used to approximate $\Obj$ quickly on massive data sets.
Our algorithm uses only known tools, but we give novel theoretical guarantees that allow us to show that the Nystr\"om method combined with Sinkhorn scaling \emph{provably} yields a valid approximation algorithm for the Sinkhorn distance at a fraction of the running time of other approaches.


We establish two theoretical results of independent interest: \textbf{(i)} New Nystr\"om approximation results showing that instance-adaptive low-rank approximations to Gaussian kernel matrices can be found for data lying on a low-dimensional manifold (Section~\ref{sec:nystrom}). \textbf{(ii)} New stability results about Sinkhorn projections, establishing that a sufficiently good approximation to the cost matrix can be used (Section~\ref{sec:sink}).

\subsection{Prior work}
Computing the Sinkhorn distance efficiently is a well studied problem in a number of communities.
The Sinkhorn distance is so named because, as was pointed out by~\citet{Cut13}, there is an extremely simple iterative algorithm due to~\citet{Sin67} which converges quickly to a solution to~\eqref{eq:peta}. This algorithm, which we call \emph{Sinkhorn scaling}, works very well in practice and can be implemented using only matrix-vector products, which makes it easily parallelizable. Sinkhorn scaling has been analyzed many times~\citep{FraLor89,LSW98,KalLarRic08,AltWeeRig17,DvuGasKro18}, and forms the basis for the first algorithms for the unregularized optimal transport problem that run in time nearly linear in the size of the cost matrix~\citep{AltWeeRig17,DvuGasKro18}. Greedy and stochastic algorithms related to Sinkhorn scaling with better empirical performance have also been explored~\citep{GenCutPey16,AltWeeRig17}.
Another influential technique, due to~\citet{SolGoePey15}, exploits the fact that, when the distributions are supported on a grid, Sinkhorn scaling performs extremely quickly by decomposing the cost matrix along lower-dimensional slices.

Other algorithms have sought to solve~\eqref{eq:peta} by bypassing Sinkhorn scaling entirely.
\citet{BlaJamKen18} proposed to solve~\eqref{eq:peta} directly using second-order methods based on fast Laplacian solvers~\citep{CohMpoTsi17,AllLiOli17}. \citet{BlaJamKen18} and \citet{quanrud} have noted a connection to packing linear programs, which can also be exploited to yield near-linear time algorithms for unregularized transport distances.

Our main algorithm relies on constructing a low-rank approximation of a Gaussian kernel matrix from a small subset of its columns and rows. Computing such approximations is a problem with an extensive literature in machine learning, where it has been studied under many different names, e.g., Nystr\"om method~\citep{williams2001using}, sparse greedy approximations~\citep{smola2000sparse}, incomplete Cholesky decomposition~\citep{fine01efficient}, Gram-Schmidt orthonormalization~\citep{Cristianini2004} or CUR matrix decompositions \citep{mahoney2009cur}. The  approximation properties of these algorithms are now well understood~\citep{mahoney2009cur,gittens2011spectral,bach2013sharp,alaoui2015fast}; however, in this work, we require significantly more accurate bounds than are available from existing results as well as adaptive bounds for low-dimensional data. To establish these guarantees, we follow an approach based on approximation theory~\citep[see, e.g.,][]{rieger2010sampling,wendland2004scattered,belkin}, which consists of analyzing interpolation operators for the reproducing kernel Hilbert space corresponding to the Gaussian kernel.

Finally, this paper adds to recent work proposing the use of low-rank approximation for Sinkhorn scaling~\citep{AltBacRud18,TenWolKim18}.
We improve upon those papers in several ways. First, although we also exploit the idea of a low-rank approximation to the kernel matrix, we do so in a more sophisticated way that allows for automatic adaptivity to data with low-dimensional structure. These new approximation results are the key to our adaptive algorithm, and this yields a significant improvement in practice.
Second, the analyses of~\citet{AltBacRud18} and \citet{TenWolKim18} only yield an approximation to $\Obj$ when $\eta \to \infty$. In the moderately regularized case when $\eta = O(1)$, which is typically used in practice, neither the work of \citet{AltBacRud18} nor of \citet{TenWolKim18} yields a rigorous error guarantee.

\subsection{Outline of paper}\label{subsec:intro-outline}
Section~\ref{sec:statement} recalls preliminaries, and then formally states our main result and gives pseudocode for our proposed algorithm. The core of our theoretical analysis is in Sections~\ref{sec:nystrom} and~\ref{sec:sink}. Section~\ref{sec:nystrom} presents our new results for Nystr\"om approximation of Gaussian kernel matrices and Section~\ref{sec:sink} presents our new stability results for Sinkhorn scaling. Section~\ref{sec:alg} then puts these results together to conclude a proof for our main result (Theorem~\ref{thm:main}). Finally, Section~\ref{sec:experimental} contains experimental results showing that our proposed algorithm outperforms state-of-the-art methods. The appendix contains proofs of several lemmas that are deferred for brevity of the main text.

\section{Main result}\label{sec:statement}
\subsection{Preliminaries and notation}\label{sec:prelim}

\paragraph{Problem setup.} Throughout, $\p$ and $\q$ are two probability distributions supported on a set $X \defeq \{x_1, \dots, x_n\}$ of points in $\RR^d$, with $\|x_i\|_2 \leq R$ for all $i \in [n] := \{1, \dots, n\}$. We define the cost matrix $C \in \RR^{n \times n}$ by $C_{ij} = \|x_i - x_j\|_2^2$.
We identify $\p$ and $\q$ with vectors in the simplex $\Delta_n := \{v \in \R_{\geq 0}^n : \sum_{i=1}^n v_i = 1\}$ whose entries denote the weight each distribution gives to the points of $X$.
We denote by $\Coup$ the set of couplings between $\p$ and $\q$, identified with the set of $P \in \RR_{\geq 0}^{n \times n}$ satisfying $P \bone = \p$ and $P^\top \bone = \q$, where $\bone$ denotes the all-ones vector in $\RR^n$. The Shannon entropy of a non-negative matrix $P \in \RR_{\geq 0}^{n \times n}$ is denoted $H(P) \defeq \sum_{ij} P_{ij} \log \tfrac{1}{P_{ij}}$, where we adopt the standard convention that $0 \log\tfrac{1}{0} = 0$.

Our goal is to approximate the Sinkhorn distance with parameter $\eta > 0$:
\begin{equation*}
\Obj
\defeq \min_{P \in \Coup} \sum_{ij} P_{ij} \|x_i - x_j\|_2^2 - \eta^{-1} \ent(P)
\end{equation*}
to some additive accuracy $\eps > 0$.
By strict convexity, this optimization problem has a unique minimizer, which we denote henceforth by $\Peta$. For shorthand, in the sequel we write
\[
V_{M}(P) := \langle M, P \rangle - \eta^{-1} \ent(P),
\]
for a matrix $M \in \RR^{n \times n}$. In particular, we have $\Obj = \min_{P \in \Coup} V_C(P)$.
For the purpose of simplifying some bounds, we assume throughout that $n \geq 2$, $\eta \in [1,n]$, $R \geq 1$, $\eps \leq 1$.

\paragraph{Sinkhorn scaling.} Our approach is based on Sinkhorn scaling, an algorithm due to~\citet{Sin67} and popularized for optimal transport by~\citet{Cut13}.
We recall the following fundamental definition.
\begin{defin}
Given $\p, \q \in \Delta_n$ and $K \in \RR^{n \times n}$ with positive entries, the \emph{Sinkhorn projection}~$\Sinkcoup(K)$ of $K$ onto $\Coup$ is the \emph{unique} matrix in $\Coup$ of the form $D_1 K D_2$ for positive diagonal matrices $D_1$ and $D_2$.
\end{defin}
Since $\p$ and $\q$ remain fixed throughout, we abbreviate $\Sinkcoup$ by $\Sink$ except when we want to make the feasible set $\Coup$ explicit.

\begin{prop}[\citealp{Wil69}]\label{prop:kkt}
Let $K$ have strictly positive entries, and let $\log K$ be the matrix defined by $(\log K)_{ij} \defeq \log (K_{ij})$.
Then
\begin{equation*}
\Sinkcoup(K) = \argmin_{P \in \Coup} \langle - \log K, P \rangle - H(P)\,.
\end{equation*}
Note that the strict convexity of $- H(P)$ and the compactness of $\Coup$ implies that the minimizer exists and is unique.
\end{prop}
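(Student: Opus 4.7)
The plan is to recognize the problem as a strictly convex optimization over a compact polytope, derive the KKT conditions, and show that they force the minimizer to have the Sinkhorn form $D_1 K D_2$. Since the Sinkhorn projection is defined as the \emph{unique} element of $\Coup$ of this form, matching the two characterizations will conclude the proof.

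First I would address existence and uniqueness. The objective $\langle -\log K, P \rangle - H(P) = \sum_{ij} P_{ij} (\log P_{ij} - \log K_{ij})$ is strictly convex in $P$ (since $-H$ is strictly convex and the linear term does not affect convexity), continuous on the compact set $\Coup$, so the minimizer exists and is unique. Moreover, because $\partial/\partial P_{ij}$ of $P_{ij}\log P_{ij}$ is $\log P_{ij} + 1 \to -\infty$ as $P_{ij}\to 0^+$, while $-\log K_{ij}$ is finite (using $K_{ij}>0$), the gradient diverges to $-\infty$ at any boundary point with $P_{ij}=0$. This rules out optimality on the non-negativity boundary; hence the optimum is interior with respect to the orthant constraints, and only the linear equality constraints $P\bone = \p$, $P^\top \bone = \q$ can be active.

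Next I would write the Lagrangian for the equality constraints, with multipliers $\alpha \in \RR^n$ (for rows) and $\beta \in \RR^n$ (for columns):
\[
L(P,\alpha,\beta) = \sum_{ij} P_{ij}(\log P_{ij} - \log K_{ij}) - \sum_i \alpha_i \Bigl(\sum_j P_{ij} - p_i\Bigr) - \sum_j \beta_j \Bigl(\sum_i P_{ij} - q_j\Bigr).
\]
Setting $\partial L/\partial P_{ij} = 0$ at the interior optimum yields $\log P_{ij} + 1 - \log K_{ij} - \alpha_i - \beta_j = 0$, i.e.,
\[
P_{ij} \;=\; e^{\alpha_i - 1/2}\,K_{ij}\,e^{\beta_j - 1/2} \;=\; (D_1)_{ii}\,K_{ij}\,(D_2)_{jj},
\]
where $D_1,D_2$ are the positive diagonal matrices with entries $e^{\alpha_i - 1/2}$ and $e^{\beta_j - 1/2}$. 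Thus the (necessarily unique) minimizer has the form $D_1 K D_2$ and lies in $\Coup$.

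Finally I would invoke the definition of the Sinkhorn projection: $\Sinkcoup(K)$ is defined as the unique matrix in $\Coup$ of the form $D_1 K D_2$ with $D_1, D_2$ positive diagonal. The matrix produced above satisfies exactly these properties, so it must coincide with $\Sinkcoup(K)$. The main (very minor) subtlety is justifying that inequality constraints are inactive so that the KKT system reduces to the equality-multiplier form used above; this is handled cleanly by the entropy barrier argument in the first step. Convexity then makes the KKT conditions sufficient, and strict convexity makes the solution unique, completing the identification.
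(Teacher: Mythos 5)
The paper does not supply a proof of \cref{prop:kkt}; it cites it to \citet{Wil69} and moves on. So there is no paper argument to compare against. Your proof is a correct and standard derivation: existence/uniqueness from strict convexity of $-H$ on the compact polytope $\Coup$, an entropy-barrier argument to rule out the non-negativity constraints being active, the Lagrangian stationarity condition $\log P_{ij} + 1 - \log K_{ij} - \alpha_i - \beta_j = 0$ yielding $P_{ij} = (D_1)_{ii}K_{ij}(D_2)_{jj}$, and then an appeal to the uniqueness built into the definition of $\Sinkcoup$. The Lagrangian algebra is right, and convexity makes the stationarity conditions sufficient, as you say.

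One point worth being explicit about: the entropy-barrier step tacitly requires that from any boundary point of $\Coup$ (with some $P_{ij}=0$) there is a feasible direction that strictly increases $P_{ij}$. This holds precisely when $\Coup$ contains a strictly positive coupling, i.e., when every $\p_i>0$ and $\q_j>0$ (then $\p\q^\top$ is such a coupling). If some marginal vanishes, the corresponding row or column of every feasible $P$ is forced to zero and the argument breaks — but in that case no matrix of the form $D_1 K D_2$ with $D_1,D_2$ positive diagonal and $K>0$ entrywise can lie in $\Coup$ either, so the Sinkhorn projection as defined in the paper does not exist. In other words, the positivity of the marginals is an implicit hypothesis shared by both the definition of $\Sinkcoup$ and the optimization characterization, and your proof is valid exactly under that shared hypothesis. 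Stating this explicitly would make the proof airtight; as written it is essentially complete.
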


This yields the following simple but key connection between Sinkhorn distances and Sinkhorn scaling.

\begin{cor}\label{cor:sink-basic}
\begin{equation*}
\Peta = \Sinkcoup(K)\,,
\end{equation*}
where $K$ is defined by $K_{ij} = e^{- \eta C_{ij}}$.
\end{cor}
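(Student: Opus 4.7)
The plan is to derive this corollary as an almost immediate consequence of Proposition~\ref{prop:kkt}, using the specific algebraic relationship between the kernel matrix $K$ and the cost matrix $C$.

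First, I would substitute $K_{ij} = e^{-\eta C_{ij}}$ into Proposition~\ref{prop:kkt}. Taking entrywise logarithms gives $(\log K)_{ij} = -\eta C_{ij}$, so $-\log K = \eta C$. Applying Proposition~\ref{prop:kkt} then yields
\begin{equation*}
\Sinkcoup(K) = \argmin_{P \in \Coup} \langle \eta C, P\rangle - H(P).
\end{equation*}

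Next, I would observe that since $\eta > 0$ is a fixed positive constant, multiplying the objective by the positive scalar $\eta^{-1}$ does not change the argmin. That is,
\begin{equation*}
\argmin_{P \in \Coup} \bigl[ \langle \eta C, P\rangle - H(P) \bigr] = \argmin_{P \in \Coup} \bigl[ \langle C, P\rangle - \eta^{-1} H(P) \bigr] = \argmin_{P \in \Coup} V_C(P).
\end{equation*}
By definition, the right-hand side is $\Peta$, which completes the identification $\Peta = \Sinkcoup(K)$.

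There is no real obstacle here: the entire content is the translation between the entropically regularized optimal transport objective $V_C$ and the entropy-minimization form appearing in Proposition~\ref{prop:kkt}, mediated by the choice $K = e^{-\eta C}$. Uniqueness of both minimizers (guaranteed by strict convexity of $-H$ and the compactness of $\Coup$, as already noted in the statement of Proposition~\ref{prop:kkt} and following equation~\eqref{eq:peta}) ensures the argmin on each side is a single well-defined matrix, so the two characterizations agree as matrices and not merely as sets.
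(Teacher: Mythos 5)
Your proof is correct and is exactly the argument the paper leaves implicit: substitute $-\log K = \eta C$ into Proposition~\ref{prop:kkt}, note that scaling the objective by $\eta^{-1} > 0$ preserves the argmin, and identify the result with $\Peta$. Nothing further is needed.
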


\citet{Sin67} proposed to find $\Sink(K)$ by alternately renormalizing the rows and columns of $K$. This well known algorithm has excellent performance in practice, is simple to implement, and is easily parallelizable since it can be written entirely in terms of matrix-vector products~\citep[Section~4.2]{PeyCut17}. Pseudocode for the version of the algorithm we use can be found in Appendix~\ref{app:sink-code}.

\paragraph{Notation.}
We define the probability simplices $\Delta_n \defeq \{p \in \RR_{\geq 0}^n: p^\top \bone = 1\}$ and $\Delta_{n \times n} \defeq \{P \in \RR_{\geq 0}^{n \times n}: \bone^\top P \bone = 1\}$. Elements of $\Delta_{n \times n}$ will be called \emph{joint distributions}. The Kullback-Leibler divergence between two joint distributions $P$ and $Q$ is $\KL{P}{Q} \defeq \sum_{ij} P_{ij} \log \tfrac{P_{ij}}{Q_{ij}}$.

Throughout the paper, all matrix exponentials and logarithms will be taken entrywise, i.e., $(e^{A})_{ij} \defeq e^{A_{ij}}$ and $(\log A)_{ij} \defeq \log A_{ij}$ for $A \in \RR^{n \times n}$.

Given a matrix $A$, we denote by $\|A\|_{\mathrm{op}}$ its operator norm (i.e., largest singular value), by $\|A\|_*$ its nuclear norm (i.e., the sum of its singular values), by $\|A\|_1$ its entrywise $\ell_1$ norm (i.e., $\|A\|_1 \defeq \sum_{ij} |A_{ij}|$), and by $\|A\|_{\infty}$ its entrywise $\ell_\infty$ norm (i.e., $\|A\|_{\infty} \defeq \max_{ij} |A_{ij}|$). We abbreviate ``positive semidefinite'' by ``PSD.''

The notation $f = O(g)$ means that $f \leq C g$ for some universal constant $C$, and $g = \Omega(f)$ means $f = O(g)$. The notation $\Otilde(\cdot)$ omits polylogarithmic factors depending on $R$, $\eta$, $n$, and $\eps$.

%
%
%

\subsection{Main result and proposed algorithm}
Pseudocode for our proposed algorithm is given in Algorithm~\ref{alg:main}. 
\textsc{Nys-Sink} (pronounced ``nice sink'') computes a low-rank Nystr\"om approximation of the kernel matrix via a column sampling procedure.
While explicit low-rank approximations of Gaussian kernel matrices can also be obtained via Taylor explansion~\citep{cotter2011explicit}, our approach automatically adapts to the properties of the data set, leading to much better performance in practice.
%


As noted in Section~\ref{sec:intro}, the Nystr\"om method constructs a low-rank approximation to a Gaussian kernel matrix $K = e^{-\eta C}$ based on a small number of its columns. In order to design an efficient algorithm, we aim to construct such an approximation with the smallest possible rank. The key quantity for understanding the error of this algorithm is the so-called \emph{effective dimension} (also sometimes called the ``degrees of freedom'') of the kernel $K$~\citep{hastie,zhang2005learning,musco2017recursive}.

\begin{defin}
Let $\la_j(K)$ denote the $j$th largest eigenvalue of $K$ (with multiplicity). Then the \emph{effective dimension} of $K$ at level $\tau > 0$ is
\eqal{
    \deff(\tau) := \sum_{j=1}^n \frac{\la_j(K)}{\la_j(K) + \tau n}.
}
\end{defin}
The effective dimension $\deff(\tau)$ indicates how large the rank of an approximation $\tilde K$ to $K$ must be in order to obtain the guarantee $\|\tilde K - K\|_{\mathrm{op}} \leq \tau n$.
As we will show in Section~\ref{sec:alg}, below, it will suffice for our application to obtain an approximate kernel $\tilde K$ satisfying $\|\tilde K - K\|_{\mathrm{op}} \leq \tfrac{\ep'}{2} e^{-4 \eta R^2}$, where $\ep' = \tilde O(\ep R^{-2})$.
We are therefore motivated to define the following quantity, which informally captures the smallest possible rank of an approximation of this quality.
\begin{defin}\label{def:deff}
Given $X = \{x_1, \dots, x_n\} \subseteq \RR^d$ with $\|x_i\|_2 \leq R$ for all $i \in [n]$, $\eta > 0$, and $\eps' \in (0, 1)$, the \emph{approximation rank} is
\begin{equation*}
r^*(X, \eta, \eps') \defeq 
\deff\left(\tfrac{\ep'}{2n} e^{-4 \eta R^2}\right)
\end{equation*}
where $\deff(\cdot)$ is the effective rank for the kernel matrix~$K \defeq e^{-\eta C}$.
\end{defin}
\par As we show below, we adaptively construct an approximate kernel $\tilde K$ whose rank is at most a logarithmic factor bigger than $r^*(X, \eta, \eps')$ with high probability. We also give concrete bounds on $r^*(X, \eta, \eps')$ below. 

Our proposed algorithm makes use of several subroutines. The $\AdaptiveNsytrom$ procedure in line~\ref{line:nystrom} combines an algorithm of~\citet{musco2017recursive} with a doubling trick that enables automatic adaptivity; this is described in Section~\ref{sec:nystrom}. It outputs the approximate kernel $\tilde{K}$ and its rank $r$. The $\Sinkhorn$ procedure in line~\ref{line:sinkhorn} is the Sinkhorn scaling algorithm for projecting $\tilde{K}$ onto $\Coup$, pseudocode for which can be found in Appendix~\ref{app:sink-code}. We use a variant of the standard algorithm, which returns both the scaling matrices and an approximation of the cost of an optimal solution. The $\Round$ procedure in line~\ref{line:round} is Algorithm~2 of~\citet{AltWeeRig17}; for completeness, pseudocode can be found here in Appendix~\ref{app:round-code}.


We emphasize that neither $D_1\tilde{K}D_2$ nor $\Phat$ (which is of the form $D_1' \tilde{K} D_2' + vw^T$ for diagonal matrices $D_1',D_2'$ and vectors $v,w$) are ever represented explicitly, since this would take $\Omega(n^2)$ time. Instead, we maintain these matrices in low-rank factorized forms. This enables Algorithm~\ref{alg:main} to be implemented efficiently in $o(n^2)$ time, since the procedures $\Sinkhorn$ and $\Round$ can both be implemented such that they depend on $\tilde{K}$ only through matrix-vector multiplications with $\tilde{K}$. Moreover, we also emphasize that all steps of Algorithm~\ref{alg:main} are easily parallelizable since they can be re-written in terms of matrix-vector multiplications.

We note also that although the present paper focuses specifically on the squared Euclidean cost $c(x_i, x_j) = \|x_i - x_j\|_2^2$ (corresponding to the $2$-Wasserstein case of optimal transport pervasively used in applications; see intro), our algorithm \textsc{Nys-Sink} readily extends to other cases of optimal transport. Indeed, since the Nystr\"om method works not only for Gaussian kernel matrices $K_{ij} = e^{-\eta \|x_i - x_j\|_2^2}$, but in fact more generally for any PSD kernel matrix, our algorithm can be used on any optimal transport instance for which the corresponding kernel matrix $K_{ij} = e^{-\eta c(x_i,x_j)}$ is PSD.

\begin{algorithm}[H]
\begin{algorithmic}[1]
\Require{$X = \{x_1, \dots, x_n\} \subseteq \RR^{d}$, $\p,\q \in \Delta_n$, $\eps,\eta > 0$}
\Ensure{$\hat P \in \Coup$, $\hat W \in \RR$, $r \in \NN$}
\State $\eps' \gets \min(1, \tfrac{\eps \eta}{50(4 R^2\eta + \log \tfrac{n}{\eta \eps})})$
\State $(\tilde K, r) \gets \AdaptiveNsytrom(X,\eta,\tfrac{\eps'}{2} e^{-4\eta R^2})$ \label{line:nystrom} \Comment Compute low-rank approximation
\State $(D_1, D_2, \hat W) \gets \Sinkhorn(\tilde K, \p, \q, \eps')$ \label{line:sinkhorn} \Comment Approximate Sinkhorn projection and cost
\State $\Phat \gets \Round(D_1 \tilde K D_2, \p, \q)$
\Comment Round to feasible set
\label{line:round}
\State \Return{$\Phat$,  $\hat W$}
\end{algorithmic}
\caption{\textsc{Nys-Sink}}
\label{alg:main}
\end{algorithm}

\par Our main result is the following.

\begin{theorem}\label{thm:main}
Let $\eps,\delta \in (0,1)$. 
Algorithm~\ref{alg:main} 
runs in
$\tilde{O}\left( nr \left( r + \tfrac{\eta R^4}{\eps} \right) \right)$
time, uses $O(n(r+d))$ space, and returns a feasible matrix $\Phat \in \Coup$ in factored form and scalars $\hat W \in \R$ and $r \in \N$,  where
\begin{subequations}
\begin{align}
|V_C(\Phat) - \Obj| & \leq \eps, \label{eq:thm-main:close-obj}
\\ \KL{\Phat}{\Peta} & \leq \eta \eps, \label{eq:thm-main:close-kl} \\
|\hat W - \Obj| & \leq \eps,\, \label{eq:thm-main:approx-val}
\end{align}
and, with probability $1 - \delta$,
\begin{align}
r & \leq c \cdot r^*(X, \eta, \eps') \log \tfrac{n}{\delta} \,,
\label{eq:thm-main:r}
\end{align}
for a universal constant $c$ and where $\eps' = \tilde \Omega(\eps R^{-2})$.
\end{subequations}

\end{theorem}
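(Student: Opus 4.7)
The plan is to combine the two technical pillars of the paper --- the Nystr\"om approximation guarantee from Section~\ref{sec:nystrom} and the Sinkhorn stability guarantee from Section~\ref{sec:sink} --- and track how the errors from each of the three lines of Algorithm~\ref{alg:main} compose. Concretely, let $\tilde K$ be the output of \AdaptiveNsytrom, let $P^\eta_{\tilde K} := \SinkcM(\tilde K)$ be its exact Sinkhorn projection, let $\bar P := D_1 \tilde K D_2$ be the (approximate) output of \Sinkhorn, and let $\Phat := \Round(\bar P,\p,\q)$. I will prove the three error bounds by chaining the triangle inequality along $\Peta \to P^\eta_{\tilde K} \to \bar P \to \Phat$ and bounding each hop.

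First I would handle the rank bound \eqref{eq:thm-main:r}. The guarantee for \AdaptiveNsytrom\ stated in Section~\ref{sec:nystrom} is exactly that, with probability at least $1-\delta$, the returned rank satisfies $r \leq c\,\deff(\tau)\log(n/\delta)$ at the requested level $\tau = \tfrac{\eps'}{2n}e^{-4\eta R^2}$, which is precisely $r^*(X,\eta,\eps')$ by Definition~\ref{def:deff}; the operator-norm error bound $\|\tilde K - K\|_{\mathrm{op}} \leq \tfrac{\eps'}{2}e^{-4\eta R^2}$ is a by-product of the same theorem. The factor $e^{-4\eta R^2}$ is exactly the minimal entry of $K$ under our assumption $\|x_i\|_2\leq R$, so this operator-norm bound is small relative to the entries of $K$ and is what is required to invoke Sinkhorn stability.

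Second, for the approximation guarantees \eqref{eq:thm-main:close-obj}--\eqref{eq:thm-main:approx-val}, I would translate the kernel-matrix closeness into a cost-matrix closeness via $\|\log \tilde K - \log K\|_\infty = \eta\|C - \tilde C\|_\infty$, then invoke the Sinkhorn stability theorem of Section~\ref{sec:sink} to conclude $\KL{\Peta}{P^\eta_{\tilde K}}$ and $|V_C(P^\eta_{\tilde K}) - V_C(\Peta)|$ are both $O(\eps)$. The carefully tuned choice $\eps' = \tilde\Theta(\eps\eta/(\eta R^2 + \log(n/(\eta\eps))))$ in line~1 of Algorithm~\ref{alg:main} is exactly what makes these constants work out: the extra additive $\log(n/(\eta\eps))$ absorbs a worst-case blow-up from entries of $K$ that are as small as $e^{-4\eta R^2}$. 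For the second hop, the standard analysis of \Sinkhorn\ (see Appendix~\ref{app:sink-code}) guarantees that after $\tilde O(1/\eps')$ iterations $\bar P$ satisfies $\|\bar P \bone - \p\|_1 + \|\bar P^\top\bone - \q\|_1 \leq \eps'$ and $\hat W$ approximates $V_{\tilde C}(P^\eta_{\tilde K})$ to within $\eps$, giving \eqref{eq:thm-main:approx-val} after combining with the previous hop. For the third hop, Lemma~7 of~\citet{AltWeeRig17} guarantees $\|\Phat - \bar P\|_1 \leq 2\eps'$, which by Pinsker / entropy-continuity arguments costs at most $O(\eps)$ in both $V_C(\cdot)$ and in KL divergence to $\Peta$. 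Chaining the three hops yields \eqref{eq:thm-main:close-obj} and \eqref{eq:thm-main:close-kl}.

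Finally, for the runtime and space, I would exploit that $\tilde K$ is maintained in factored form $\tilde K = UU^\top$ with $U\in\RR^{n\times r}$, so every matrix-vector product costs $O(nr)$ and the entire algorithm never materializes an $n\times n$ matrix; \AdaptiveNsytrom\ costs $\tilde O(nr^2)$, \Sinkhorn\ performs $\tilde O(\eta R^2/\eps')$ iterations of cost $O(nr)$ each, giving the claimed $\tilde O(nr(r + \eta R^4/\eps))$, and \Round\ runs in $O(nr)$ via the same factored representation. I expect the main obstacle to be the careful bookkeeping of error propagation --- in particular, the fact that the Sinkhorn stability bound degrades exponentially in $\eta\|C\|_\infty$, so that the dependence of $\eps'$ on $\eta R^2$ must be pinned down exactly to avoid losing the claimed rate; ensuring the $e^{-4\eta R^2}$ factor inside the Nystr\"om tolerance is precisely what cancels this degradation is the delicate step.
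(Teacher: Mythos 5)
Your decomposition $\Peta \to P^\eta_{\tilde K} \to \bar P \to \Phat$ is exactly what the paper does (the middle two hops are nested inside the paper's Theorem~\ref{thm:line_sinkhorn}), and your treatment of the rank bound, the $V_C$ error bound \eqref{eq:thm-main:close-obj}, the cost bound \eqref{eq:thm-main:approx-val}, and the runtime all match the paper's argument. However, there is a genuine gap in how you propose to obtain the KL bound \eqref{eq:thm-main:close-kl}, and it is the one place where your strategy would not go through as written.

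You propose to ``chain the three hops'' to control $\KL{\Phat}{\Peta}$, citing ``Pinsker / entropy-continuity arguments'' to convert the $\ell_1$ bound $\|\Phat - \bar P\|_1 \leq 2\eps'$ into a KL bound, and suggesting that the Sinkhorn stability proposition yields $\KL{\Peta}{P^\eta_{\tilde K}} = O(\eps)$. Neither step works. First, Proposition~\ref{prop:sink-perturbation-mat} gives an $\ell_1$ bound $\|\Sink(K) - \Sink(\tilde K)\|_1 \leq \|\log K - \log\tilde K\|_\infty$ via strong convexity of negative entropy; it does not give a KL bound. Second, Pinsker's inequality is $\|P-Q\|_1 \leq \sqrt{2\,\KL{P}{Q}}$, which is the wrong direction for your purpose; a reverse bound $\KL{P}{Q} \lesssim \|P-Q\|_1^2 / \min_{ij} Q_{ij}$ would blow up by a factor of $e^{\eta\|C\|_\infty}$ since $\Peta$ can have entries that small, destroying the additive $\eps$ guarantee. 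Third, KL divergence satisfies no triangle inequality, so chaining over the three hops is not licensed in any case. The paper sidesteps all of this with the Bregman-divergence identity of Lemma~\ref{lem:fo-kl}: for any $P,Q\in\Delta_{n\times n}$, $V_C(Q) = V_C(P) + \langle\nabla V_C(P), Q-P\rangle + \eta^{-1}\KL{Q}{P}$. Taking $P = \Peta$ and $Q = \Phat$, the first-order KKT conditions at the minimizer $\Peta$ give $\langle\nabla V_C(\Peta), \Phat-\Peta\rangle \geq 0$, and therefore $\eta^{-1}\KL{\Phat}{\Peta} \leq V_C(\Phat) - V_C(\Peta) \leq \eps$ follows immediately from the already-established bound \eqref{eq:thm-main:close-obj}, with no further approximation-chaining needed. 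You should replace your KL argument with this one; the rest of your outline is sound.
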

We note that, while our algorithm is randomized, we obtain a deterministic guarantee that $\hat P$ is a good solution. 
We also note that runtime dependence on the radius $R$---which governs the scale of the problem---is inevitable since we seek an additive guarantee.

Crucially, we show in Section~\ref{sec:nystrom} that $r^*$---which controls the running time of the algorithm with high probability by~\eqref{eq:thm-main:r}---adapts to the \emph{intrinsic} dimension of the data. This adaptivity is crucial in applications, where data can have much lower dimension than the ambient space. We informally summarize this behavior in the following theorem.

\begin{theorem}[Informal]\label{thm:rstar}
\begin{enumerate}
\item There exists an universal constant $c > 0$ such that, for any $n$ points in a ball of radius $R$ in $\RR^d$,
\begin{equation*}
r^*(X, \eta, \eps') \leq (c(\eta R^2 + \log \tfrac{n}{\eps' \eta}))^d\,.
\end{equation*}
\item For any $k$-dimensional manifold $\Omega$ satisfying certain technical conditions and $\eta > 0$, there exists a constant $c_{\Omega, \eta}$ such that for any $n$ points lying on $\Omega$,
\begin{equation*}
r^*(X, \eta, \eps') \leq c_{\Omega, \eta} (\log \tfrac{n}{\eps'})^{5k/2}\,.
\end{equation*}
\end{enumerate}
\end{theorem}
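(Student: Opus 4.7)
The plan is to reduce both parts of the theorem to constructing an explicit low-rank approximation of the Gaussian kernel matrix $K = e^{-\eta C}$, handling Part~1 via a polynomial Taylor expansion and Part~2 via scattered-data interpolation in the Gaussian RKHS. First I would relate $\deff(\tau)$ to low-rank approximability of $K$: splitting the defining sum at index $r$ gives, for every matrix $\tilde K$ of rank at most $r$,
\begin{equation*}
\deff(\tau) \;=\; \sum_{j=1}^n \frac{\lambda_j(K)}{\lambda_j(K) + \tau n} \;\leq\; r + \frac{1}{\tau n}\sum_{j>r}\lambda_j(K) \;\leq\; r + \frac{\|K - \tilde K\|_{*}}{\tau n},
\end{equation*}
by Eckart-Young-Mirsky. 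Since $\|A\|_{*} \leq n^{3/2}\|A\|_\infty$ for $n\times n$ matrices and $\tau n = \tfrac{\eps'}{2}e^{-4\eta R^2}$ by Definition~\ref{def:deff}, it suffices in each of the two settings to exhibit a PSD matrix $\tilde K$ of the claimed rank whose entrywise error obeys $\|K - \tilde K\|_\infty \leq \tfrac{\eps'}{2 n^{3/2}} e^{-4\eta R^2}$.

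For Part~1, I would Taylor-expand
\begin{equation*}
K_{ij} \;=\; e^{-\eta\|x_i\|_2^2}\,e^{-\eta\|x_j\|_2^2}\sum_{k=0}^{\infty}\frac{(2\eta)^k}{k!}\,\langle x_i, x_j\rangle^{k},
\end{equation*}
truncate at degree $D$, and observe that the resulting matrix is PSD as a Hadamard product of the rank-one PSD matrix $e^{-\eta\|x_i\|_2^2 - \eta\|x_j\|_2^2}$ with a nonnegative combination of the Gram matrices of the Veronese-type embeddings $x \mapsto x^{\otimes k}$; its rank is therefore at most $\binom{d+D}{D} \leq (cD)^d$. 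Standard Taylor-remainder bounds give entrywise error at most $C_0 (2e\eta R^2/D)^D$ once $D$ is at least a constant multiple of $\eta R^2$, and the choice $D = \Theta(\eta R^2 + \log(n/\eps'))$ drives this below the required threshold. This produces the claimed $(c(\eta R^2 + \log(n/(\eps'\eta))))^d$ bound on $r^*$.

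For Part~2, I would replace the Taylor expansion with a scattered-data interpolant in the native space of the Gaussian kernel over the manifold $\Omega$. The key input is a super-algebraic convergence rate for Gaussian interpolation on $k$-dimensional manifolds of the type established by \citet{rieger2010sampling} and \citet{wendland2004scattered}: given $m$ centers covering $\Omega$ with fill distance $h$, the $L^\infty$ interpolation error of any native-space function decays as $\exp(-c/h^{\alpha})$ for some $\alpha > 0$ controlled by $\Omega$ and the kernel. Since any compact $k$-manifold admits covers with $h$ of order $m^{-1/k}$, evaluating this interpolant of the functions $k(\cdot, x_i)$ at the $n$ data points produces a rank-$m$ approximation $\tilde K$ of $K$ with entrywise error at most $\exp(-c\,m^{\alpha/k})$. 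Taking $m = O((\log(n/\eps'))^{k/\alpha})$ with $k/\alpha = 5k/2$ (the exponent dictated by the Rieger-Zwicknagl rate) then drives this error below the required threshold and yields the claimed bound, with $c_{\Omega,\eta}$ absorbing the reach, curvature, and covering constants of $\Omega$. The main obstacle is exactly this second part: the Taylor-based argument is dimension-oblivious and cannot yield a rate polylogarithmic in $n$, so one has to import these delicate manifold-interpolation estimates from approximation theory, and it is this super-algebraic rate that ultimately explains \textsc{Nys-Sink}'s empirical ability to adapt to low intrinsic dimension.
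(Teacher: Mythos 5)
Your proposal follows essentially the same two-stage architecture as the paper: (i) build an explicit low-rank approximant of $K$ with small \emph{entrywise} error (Taylor/feature-map truncation in the Euclidean case, RKHS interpolation in the manifold case), then (ii) convert that entrywise control into a bound on $\deff(\tau)$. Your step (ii) is a little different from the paper's and a little looser: you route through the nuclear norm, $\deff(\tau) \leq r + \|K - \tilde K\|_*/(\tau n)$ with $\|A\|_* \leq n^{3/2}\|A\|_\infty$, so you need entrywise precision of order $n^{-3/2}$ rather than the $n^{-1}$ the paper needs. The paper instead uses $\lambda_{M+1}(K) \leq \|K - \tilde K\|_{\op} \leq n\|K - \tilde K\|_\infty$ and then does a more careful tail-of-the-sum bound via an incomplete-Gamma estimate (Theorem~\ref{thm:eig-ball}, Point~2). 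For the informal statement both routes give the same polylogarithmic form, so Part~1 of your proposal is sound as written.

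The gap is in Part~2. You treat the super-algebraic manifold interpolation rate $\exp(-c h^{-\alpha})$ as a result you can import, "of the type established by \citet{rieger2010sampling} and \citet{wendland2004scattered}." But the paper states explicitly that this estimate (Theorem~\ref{thm:eig-manifold}, Point~1) is \emph{new}: the cited manifold-interpolation literature (\citet{wendland2004scattered,fuselier2012scattered,hangelbroek2010kernel}) gives only polynomial decay on manifolds, while \citet{rieger2010sampling} gives exponential decay only on flat Euclidean domains. The paper has to prove the manifold version itself: it pulls back to local charts $\Psi_j$, controls the Sobolev norm of $u \circ \Psi_j^{-1}$ via the multivariate Fa\`a di Bruno formula (Lemma~\ref{lm:bound-composite-sobolev}), applies the Rieger--Zwicknagl sampling inequality chart-by-chart, and then aggregates using Fuselier--Wright fill-distance comparisons under \cref{asm:good-manifold}. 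That derivation is also where the exponent $\alpha = 2/5$ (hence $5k/2$) actually comes from. You correctly identify this interpolation rate as the crux of adaptivity and you get the bookkeeping $k/\alpha = 5k/2$ right, but your proposal leaves the central lemma of the manifold case unproved by presenting it as an off-the-shelf import.
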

The formal versions of these bounds appear in Section~\ref{sec:nystrom}.
The second bound is significantly better than the first when $k \ll d$, and clearly shows the benefits of an adaptive procedure.

Combining Theorems~\ref{thm:main} and~\ref{thm:rstar} yields the following time and space complexity for our algorithm.

\begin{cor}[Informal]
If $X$ consists of $n$ points lying in a ball of radius $R$ in $\R^d$, then with high probability Algorithm~\ref{alg:main} requires 
\begin{equation*}
   \tilde O\left(n \cdot \frac{1}{\eps}\left(c\eta R^2 + c\log \frac{n}{\eps}\right)^{2d+1}\right) \text{ time and } \tilde O\left(n \cdot \left(c\eta R^2+ c\log \frac{n}{\eps}\right)^{d}\right) \text{ space.}
\end{equation*}

Moreover, if $X$ lies on a $k$-dimensional manifold $\Omega$, then with high probability 
Algorithm~\ref{alg:main} requires 
\begin{equation*}
   \tilde O\left(n \cdot \frac{c_{\Omega, \eta}}{\eps}\left(\log \frac n \eps\right)^{5k}\right) \text{ time and } \tilde O\left(n \cdot c_{\Omega, \eta} \left(\log \frac n \eps\right)^{5k/2}\right) \text{ space.}
\end{equation*}
\end{cor}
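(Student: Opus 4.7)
The statement is a direct consequence of chaining the bounds of Theorem~\ref{thm:main} with the effective-dimension bounds of Theorem~\ref{thm:rstar}. I would structure the proof as a purely bookkeeping argument in three steps.

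First, I would invoke Theorem~\ref{thm:main} to reduce everything to bounding $r$. That theorem already tells us the running time is $\tilde O(nr(r + \eta R^4/\eps))$ and the space is $O(n(r+d))$, and gives the high-probability bound $r \leq c \cdot r^*(X, \eta, \eps') \log(n/\delta)$. Since $\eps' = \tilde\Omega(\eps R^{-2})$ by definition in Algorithm~\ref{alg:main}, we have $\log(1/\eps') = O(\log(R\eta n/\eps))$, which is a polylogarithmic factor that can be absorbed into $\tilde O(\cdot)$. Thus it suffices to substitute the Theorem~\ref{thm:rstar} bounds on $r^*$ and simplify.

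Second, for the ambient-$d$ case, I plug in $r = \tilde O\bigl((\eta R^2 + \log(n/\eps))^d\bigr)$. Since $\eta \geq 1$ and $R \geq 1$, we have $\eta R^4/\eps \leq (\eta R^2 + \log(n/\eps))^2/\eps$, so
\[
r^2 + r\cdot \eta R^4/\eps \;\leq\; \tilde O\!\left((\eta R^2 + \log(n/\eps))^{2d} + (\eta R^2 + \log(n/\eps))^{d+2}/\eps\right).
\]
Both terms are dominated by $(\eta R^2 + \log(n/\eps))^{2d+1}/\eps$ (using $\eps \leq 1$ and $\eta R^2 + \log(n/\eps) \geq 1$ to bound the first term, and $2d+1 \geq d+2$ for $d \geq 1$ to bound the second). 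Multiplying by $n$ gives the stated time complexity, and $O(n(r+d)) = \tilde O(n \cdot (\eta R^2 + \log(n/\eps))^d)$ gives the stated space complexity after noting that $(\eta R^2 + \log(n/\eps))^d \geq d$ for our parameter regime.

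Third, for the manifold case, I plug in $r = \tilde O\bigl(c_{\Omega,\eta}(\log(n/\eps))^{5k/2}\bigr)$. Absorbing $\eta$ and $R$ (which are constants of the problem, already folded into $c_{\Omega,\eta}$ and $R$-dependent polylog factors hidden by $\tilde O$) yields
\[
nr(r + \eta R^4/\eps) \;=\; \tilde O\!\left(n \cdot \frac{c_{\Omega,\eta}}{\eps}(\log(n/\eps))^{5k}\right),
\]
and similarly $O(n(r+d)) = \tilde O(n \cdot c_{\Omega,\eta}(\log(n/\eps))^{5k/2})$.

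There is no real technical obstacle here: the entire argument is bookkeeping. The only mild annoyance is checking that every auxiliary log factor (from $\log(n/\delta)$, from $\log(1/\eps')$ via the definition of $\eps'$, and from the polylog slack in Theorem~\ref{thm:rstar}) is legitimately absorbed into $\tilde O(\cdot)$ rather than into the exponent, and that the loose step $r^2 \leq r^{2d+1}/\eps$ is justified by $\eps \leq 1$ together with $r \geq 1$. Since all of these hold under the standing assumptions $n \geq 2$, $\eta \in [1,n]$, $R \geq 1$, $\eps \leq 1$ stated in Section~\ref{sec:prelim}, the simplification goes through cleanly.
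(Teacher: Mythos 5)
Your proof is correct and is exactly the bookkeeping argument the paper intends: the paper offers no explicit proof of this informal corollary, merely stating that it follows by "combining Theorems~\ref{thm:main} and~\ref{thm:rstar}," which is precisely the substitution and simplification you carry out. Your care in checking that $\eta R^4 \leq (\eta R^2 + \log(n/\eps))^2$ (via $\eta, R \geq 1$), that $2d+1 \geq d+2$ for $d\geq 1$, and that the $\log(1/\eps')$ and $\log(n/\delta)$ factors are absorbed by $\tilde O(\cdot)$ under the standing assumptions is exactly the right level of diligence for making the informal statement precise.
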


\citet{AltWeeRig17} noted that an approximation to the \emph{unregularized} optimal transport cost is obtained by taking $\eta = \Theta\left(\eps^{-1}\log n \right)$. Thus it follows that
Algorithm~\ref{alg:main} computes an additive $\eps$ approximation to the unregularized transport distance in $O\left(n\left(\eps^{-1} R^2 \log n\right)^{O(d)}\right)$ time with high probability. However, a theoretically better running time for that problem can be obtained by a simple but impractical algorithm based on rounding the input distributions to an $\eps$-net and then running Sinkhorn scaling on the resulting instance.\footnote{We are indebted to Piotr Indyk for inspiring this remark.}

\section{Kernel approximation via the Nystr\"om method}\label{sec:nystrom}

In this section, we describe the algorithm $\AdaptiveNsytrom$ used in line~\ref{line:nystrom} of Algorithm~\ref{alg:main} and bound its runtime complexity, space complexity, and error. We first establish basic properties of Nystr\"om approximation and give pseudocode for \AdaptiveNsytrom~(Sections~\ref{subsec:nystrom:overview} and~\ref{subsec:nystrom:doubling}) before stating and proving formal versions of the bounds appearing in Theorem~\ref{thm:rstar}~(Sections~\ref{subsec:nystrom:wc} and~\ref{subsec:nystrom:manifold}).



\subsection{Preliminaries: Nystr\"om and error in terms of effective dimension}\label{subsec:nystrom:overview}
Given points $X = \{x_1, \dots, x_n\}$ with $\|x_i\|_2 \leq R$ for all $i \in [n]$, let $K \in \R^{n\times n}$ denote the matrix with entries $K_{ij} := k_\eta(x_i,x_j)$, where $k_\eta(x,x') := e^{-\eta \|x - x'\|^2}$. Note that $k_{\eta}(x,x')$ is the Gaussian kernel $e^{-\|x-x'\|^2/(2\sigma^2)}$ between points $x$ and $x'$ with bandwith parameter $\sigma^2 = \tfrac{1}{2\eta}$.
For $r \in \N$, we consider an approximation of the matrix $K$ that is of the form
$$\Kt = V A^{-1} V^\top,$$
where $V \in \R^{n \times r}$ and $A \in \R^{r \times r}$.
In particular we will consider the approximation given by the Nystr\"om method which, given a set $X_r = \{\widetilde{x}_1,\dots,\widetilde{x}_r\} \subset X$, constructs $V$ and $A$ as:
$$V_{ij} = k_\eta(x_i,\widetilde{x}_j), \quad A_{jj'} = k_\eta(\widetilde{x}_j,\widetilde{x}'_j),$$
for $i \in [n]$ and $j,j' \in [r]$. 
Note that the matrix $\Kt$ is never computed explicitly. Indeed, our proposed Algorithm~\ref{alg:main} only depends on $\Kt$ through computing matrix-vector products $\Kt v$, where $v \in \R^n$, and these can be computed efficiently as
\begin{align}
\Kt v = V(L^{-\top}(L^{-1}(V^\top v))),
\label{eq:Ktilde-decomp}
\end{align}
where $L \in \R^{r \times r}$ is the lower triangular matrix satisfying $LL^\top = A$ obtained by the Cholesky decomposition of $A$, and where we compute products of the form $L^{-1} v$ (resp. $L^{-\top}v$) by solving the triangular system $Lx = v$ (resp. $L^{\top}x = v$).
Once a Cholesky decomposition of $A$ has been obtained---at computational cost $O(r^3)$---matrix-vector products can therefore be computed in time $O(nr)$.
%

We now turn to understanding the approximation error of this method. In this paper we will sample the set $X_r$ via 
{\em approximate leverage-score sampling}.
In particular, we do this via Algorithm 2 of \citet{musco2017recursive}.
The following lemma shows that taking the rank $r$ to be on the order of the effective dimension $\deff(\tau)$ (see Definition~\ref{def:deff}) is sufficient to guarantee that $\widetilde{K}$ approximates $K$ to within error $\tau$ in operator norm.

\begin{lemma}\label{lm:exists-algo-nystrom}
	Let $\tau,\delta > 0$. Consider sampling $X_r$ from $X$ according to Algorithm 2 of \citet{musco2017recursive}, for some positive integer $r \geq 400 \deff(\tau) \log \tfrac{3 n}{\delta}.$
	Then:
	\begin{enumerate}
	    \item Sampling $X_r$ and forming the matrices $V$ and $L$ (which define $\tilde{K}$, see~\eqref{eq:Ktilde-decomp}) requires $O(nr^2 + r^3)$ time and $O(n(r+d))$ space.
	    \item Computing matrix-vector products with $\tilde{K}$ can be done in time $O(nr)$.
	    \item With probability at least $1 - \delta$, $\|K - \widetilde{K}\|_{\mathrm{op}} \leq \tau n$.
	\end{enumerate}
\end{lemma}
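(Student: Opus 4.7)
The plan is to invoke the recursive ridge leverage score (RLS) sampling algorithm of \citet{musco2017recursive} essentially as a black box. Their analysis provides the sample-complexity versus accuracy tradeoff, and I only need to verify that their guarantees translate into the notation of this paper and then account for the complexity of the pre-processing step and of subsequent matrix--vector products in factored form.

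First, I would establish the operator-norm bound (part 3). The main theorem of \citet{musco2017recursive} states that for any PSD matrix $K$ and any $\lambda > 0$, their recursive RLS sampling procedure with sample size $s = \Omega(d_\lambda \log (n/\delta))$, where $d_\lambda := \sum_j \lambda_j(K)/(\lambda_j(K)+\lambda)$, outputs a Nystr\"om approximation satisfying $\|K - \tilde K\|_{\op} \leq \lambda$ with probability at least $1-\delta$. Setting $\lambda = \tau n$, the quantity $\deff(\tau)$ in Definition~\ref{def:deff} coincides with $d_{\tau n}$, so choosing $r \geq 400 \deff(\tau) \log(3n/\delta)$ suffices, with the exact constants coming directly from the Musco--Musco statement (including the absolute factor and the inflated logarithm that accommodates the internal union bounds of the recursion).

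Next, I would verify the time and space complexity (part 1). The recursive RLS scheme runs in $O(nr^2)$ arithmetic operations, with an additive $O(r^3)$ contribution from linear-algebraic post-processing on the $r \times r$ kernel block; each of the $nr$ kernel evaluations needed to populate $V$ costs $O(d)$, while storing the input point set $X$ costs $O(nd)$, jointly accounting for the space bound $O(n(r+d))$. Forming $V$ and $A$ from the sampled points $\widetilde{x}_1, \dots, \widetilde{x}_r$ is straightforward, and computing the Cholesky factor $L$ of $A$ takes an additional $O(r^3)$ time. The matrices $A$ and $L$ each occupy $O(r^2) = O(nr)$ space, which is absorbed into the $O(n(r+d))$ bound.

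Finally, for matrix--vector products (part 2), the factorization $\tilde K v = V(L^{-\top}(L^{-1}(V^\top v)))$ from~\eqref{eq:Ktilde-decomp} decomposes a product with $\tilde K$ into two dense $V^\top$ and $V$ multiplications, each costing $O(nr)$, and two triangular solves against $L$ and $L^\top$, each costing $O(r^2) \leq O(nr)$. The total cost per matrix--vector product is therefore $O(nr)$.

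The only real obstacle is notational: carefully converting between the effective-dimension normalization of \citet{musco2017recursive} (stated for a generic ridge parameter $\lambda$, with error bound $\lambda$) and the $\tau n$-scaling convenient for our Sinkhorn-scaling analysis. Once this rescaling is done correctly, all three claims reduce to direct bookkeeping on top of the cited black-box guarantee.
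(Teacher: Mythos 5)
Your proposal takes essentially the same route as the paper: both treat Theorem~7 of \citet{musco2017recursive} as a black box, translate their ridge parameter $\lambda$ into the $\tau n$ normalization so that $d_\lambda = \deff(\tau)$, and then account for the $O(nr^2 + r^3)$ preprocessing cost and the $O(nr)$ cost of matrix--vector products via the factorization $\tilde K v = V(L^{-\top}(L^{-1}(V^\top v)))$. The paper's proof is a one-liner (it also notes the trivial bound $\deff(\tau) \leq \rank(K) \leq n$, which caps the sample size in Musco--Musco's running-time guarantee), while you spell out the bookkeeping, but the underlying argument is identical.
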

\begin{proof}
The result follows directly from Theorem~7 of \citet{musco2017recursive} and 
    the fact that $\deff(\tau) \leq \textrm{rank}(K) \leq n$ for any $\tau \geq 0$.
\end{proof}

\subsection{Adaptive Nystr\"om with doubling trick}\label{subsec:nystrom:doubling}

Here we give pseudocode for the \textsc{AdaptiveNystr\"om} subroutine in Algorithm~\ref{alg:adaptive-nystrom}. The algorithm is based on a simple doubling trick, so that the rank of the approximate kernel can be chosen adaptively.
The observation enabling this trick is that given a Nystr\"om approximation $\tilde{K}$ to the actual kernel matrix $K = e^{-\eta\|x_i-x_j\|_2^2}$, the entrywise error $\|K - \tilde{K}\|_{\infty}$ of the approximation can be computed exactly in $O(nr^2)$ time. The reason for this is that (i) the entrywise norm $\|K - \tilde{K}\|_{\max}$ is equal to the maximum entrywise error on the diagonal $\max_{i \in [n]} |K_{ii} - \tilde{K}_{ii}| = 1 - \min_{i \in [n]} \tilde{K}_{ii}$, proven below in Lemma~\ref{lem:nystrom-doubling}; and (ii) the quantity $1 - \min_{i \in [n]} \tilde{K}_{ii}$ is easy to compute quickly.

Below, line~\ref{nystrom:line} in Algorithm~\ref{alg:adaptive-nystrom} denotes the approximate leverage-score sampling scheme of~\citet[Algorithm~2]{musco2017recursive} when applied to the Gaussian kernel matrix $K_{ij} := e^{-\eta \|x_i - x_j\|^2}$. We note that the BLESS algorithm of~\citet{rudi2018fast} allows for re-using previously sampled points when doubling the sampling rank. Although this does not affect the asymptotic runtime, it may lead to speedups in practice.

\begin{algorithm}[h]
\begin{algorithmic}[1]
\Require{$X = \{x_1, \dots, x_n\} \in \RR^{d \times n}$, $\eta > 0$, $\tau > 0$}
\Ensure{$\tilde{K} \in \R^{n \times n}$, $r \in \mathbb{N}$}
\State $\mathsf{err} \gets +\infty$, $r \gets 1$
\While{$\mathsf{err} > \tau$}
\State $r \gets 2 r$
\State $\tilde K \gets \Nystrom(X,\eta, r)$
\label{nystrom:line}
\State $\mathsf{err} \gets 1 - \min_{i \in [n]} \tilde{K}_{ii}$
\EndWhile
\State \Return{$(\tilde{K}, \rank(\tilde{K}))$}
\end{algorithmic}
\caption{\AdaptiveNsytrom}
\label{alg:adaptive-nystrom}
\end{algorithm}

\begin{lemma}\label{lem:nystrom-doubling}
Let $(\tilde{K}, r)$ denote the (random) output of $\AdaptiveNsytrom(X,\eta,\tau)$. Then:
\begin{enumerate}
\item $\|K -\tilde{K}\|_{\infty} \leq \tau$.
\item The algorithm used $O(nr)$ space and terminated in $O(nr^2)$ time.
\item There exists a universal constant $c$ such that simultaneously for every $\delta > 0$,
\[
\mathbb{P}\Big(r \leq c \cdot
\deff\left(\tfrac{\tau}{n}\right)
\log \left(\tfrac{n}{\delta}\right) \Big) \geq 1 - \delta.
\]
\end{enumerate}
\end{lemma}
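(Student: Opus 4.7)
The plan is to build on the key structural observation that the Nystr\"om residual $K - \tilde{K}$ is PSD, which lets us compute the $\ell_\infty$ error exactly from the diagonal, and to combine this with Lemma~\ref{lm:exists-algo-nystrom} to control the rank via the doubling trick.

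For claim (1), I would first establish that $K - \tilde{K} \succeq 0$. This is standard: writing $\tilde{K} = V A^{-1} V^\top$ with $V$ and $A$ built from the subset $X_r \subseteq X$, the matrix $\tilde{K}$ is exactly the orthogonal projection (in the feature space of $k_\eta$) of $K$ onto the span of the sampled features, so $K - \tilde{K}$ is the Gram matrix of the residuals and hence PSD. Then for any PSD matrix $M$, the bound $|M_{ij}| \leq \sqrt{M_{ii} M_{jj}} \leq \max_k M_{kk}$ shows $\|M\|_\infty = \max_k M_{kk}$, so $\|K - \tilde{K}\|_\infty = \max_i (K_{ii} - \tilde{K}_{ii}) = 1 - \min_i \tilde{K}_{ii}$ using $K_{ii} = 1$. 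This is exactly $\mathsf{err}$, which the while loop forces to be $\leq \tau$ at termination. This also means that once $\tilde{K}$ is stored in factored form~\eqref{eq:Ktilde-decomp}, computing $\mathsf{err}$ costs only $O(n r^2)$ time (via $n$ evaluations of $V_i^\top L^{-\top} L^{-1} V_i$).

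For claim (2), at the iteration with sampling rank $r_t$, Lemma~\ref{lm:exists-algo-nystrom} gives an $O(n r_t^2 + r_t^3) = O(n r_t^2)$ runtime and $O(n(r_t + d))$ space to form $(V, L)$, and the residual computation above adds another $O(n r_t^2)$. Summing the geometric series $r_t = 2, 4, \dots, r$ over the iterations yields total time $O(n r^2)$ and peak space $O(n(r+d))$, giving claim (2).

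For claim (3), fix $\delta > 0$ and let $t^\star$ be the smallest integer with $2^{t^\star} \geq 400\, \deff(\tau / n)\, \log \tfrac{3n}{\delta}$, so $2^{t^\star} \leq 800\, \deff(\tau/n) \log \tfrac{3n}{\delta}$. By Lemma~\ref{lm:exists-algo-nystrom} applied with level $\tau/n$ and confidence $\delta$, at iteration $t^\star$ we have $\|\tilde{K} - K\|_{\mathrm{op}} \leq \tau$ with probability at least $1 - \delta$. Since $K - \tilde K \succeq 0$, the same argument as above gives $\|\tilde K - K\|_\infty \leq \|\tilde K - K\|_{\mathrm{op}} \leq \tau$, so the algorithm terminates no later than iteration $t^\star$ and outputs rank $r \leq 2^{t^\star} \leq c\, \deff(\tau/n) \log \tfrac{n}{\delta}$ for a universal $c$. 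Applying this argument separately for each $\delta$ yields the "simultaneously for every $\delta$" phrasing (it is a per-$\delta$ tail bound on the same random $r$).

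The only subtle point is the PSD residual observation, which is what makes the cheap stopping criterion $\mathsf{err} = 1 - \min_i \tilde K_{ii}$ actually equal $\|K - \tilde K\|_\infty$; everything else is routine. Care must also be taken that the randomness in $\tilde K$ across iterations does not require a union bound, since we only need the "good event" to hold at the single iteration $t^\star$ to upper bound the (monotone) stopping time, and the probability that this particular iteration succeeds is already $\geq 1 - \delta$ by Lemma~\ref{lm:exists-algo-nystrom}.
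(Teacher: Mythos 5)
Your proof is correct and follows essentially the same path as the paper's own argument: establish that $K - \tilde K$ is PSD so that $\|K - \tilde K\|_\infty = 1 - \min_i \tilde K_{ii}$ is exactly the stopping criterion (the paper cites Sylvester's criterion for $2\times 2$ minors where you use $|M_{ij}| \leq \sqrt{M_{ii}M_{jj}}$, but these are the same fact), sum the geometric series for the runtime, and invoke Lemma~\ref{lm:exists-algo-nystrom} together with $\|\cdot\|_\infty \leq \|\cdot\|_{\mathrm{op}}$ at the single iteration $t^\star$ to get the per-$\delta$ tail bound. You spell out the geometric-series and stopping-time arguments in more detail than the paper, but the content is identical.
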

\begin{proof}
By construction, the Nystr\"om approximation $\tilde{K}$ is a PSD approximation of $K$ in the sense that $K \succeq \tilde{K} \succeq 0$, see e.g.,~\citet[Theorem 3]{musco2017recursive}. Since Sylvester's criterion for $2 \times 2$ minors guarantees that the maximum modulus entry of a PSD matrix is always achieved on the diagonal, it follows that $\|K - \tilde{K}\|_{\infty} = \max_{i \in [n]} |K_{ii} - \tilde{K}_{ii}|$. Now each $K_{ii} = 1$ by definition of $K$, and each $\tilde{K}_{ii} \in [0,1]$ since $K \succeq \tilde{K} \succeq 0$. Therefore we conclude
\[
\|K - \tilde{K}\|_{\infty} = 1 - \min_{i \in [n]} \tilde{K}_{ii}.
\]
This implies Item 1. Item 2 follows upon using the space and runtime complexity bounds in Lemma~\ref{lm:exists-algo-nystrom} and noting that the final call to $\Nystrom$ is the dominant for both space and runtime. Item 3 is immediate from Lemma~\ref{lm:exists-algo-nystrom} and the fact that $\|K - \tilde{K}\|_{\infty} \leq \|K - \tilde{K}\|_{\mathrm{op}}$ (Lemma~\ref{lem:norm-equiv}).
\end{proof}

\subsection{General results: data points lie in a ball}\label{subsec:nystrom:wc}
In this section we assume no structure on $X$ apart from the fact that $X \subseteq B^d_R$ where $B^d_R$ is a ball of radius $R$ in $\R^d$ centered around the origin, for some $R > 0$ and $d \in \N$. First we characterize the eigenvalues of $K$ in terms of $\eta, d, R$, and then we use this to bound $\deff$.

\begin{theorem}\label{thm:eig-ball}
		Let $X := \{x_1, \dots x_n\} \subseteq B_R^d$, and let $K \in \R^{n \times n}$ be the matrix with entries $K_{ij} := e^{-\eta\|x_i - x_j\|^2}$. Then:
		\begin{enumerate}
		    \item For each $t \in \N, t \geq (2e)^d$, $\la_{t+1}(K) \leq n e^{-\frac{d}{2e} t^{1/d} \log {d~ t^{1/d} \over 4 e^2 \eta R^2}}$.
		    \item For each $\tau \in (0, 1]$, $\deff(\tau) \leq 3 \left(6+\frac{41}{d}\eta R^2+ \frac{3}{d}\log \frac{1}{\tau}\right)^d$.
		\end{enumerate}
\end{theorem}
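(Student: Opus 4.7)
The proof has two parts matching the two items. For Item~1, I would construct an explicit low-rank approximation of $K$ via the truncated Taylor series of the Gaussian kernel and apply Weyl's inequality. For Item~2, I would split $\deff(\tau)$ across an eigenvalue threshold and invoke Item~1.

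\textbf{Explicit low-rank approximation.} Using $e^{-\eta\|x-y\|^2} = e^{-\eta\|x\|^2} e^{-\eta\|y\|^2} e^{2\eta\langle x,y\rangle}$ and expanding the last factor, the Gaussian kernel admits the feature decomposition
\[
k_\eta(x,y) = \sum_{\alpha \in \NN^d} \phi_\alpha(x)\phi_\alpha(y), \qquad \phi_\alpha(x) := \sqrt{\frac{(2\eta)^{|\alpha|}}{\alpha!}}\, e^{-\eta\|x\|^2} x^\alpha.
\]
Truncating at $|\alpha| \leq s$ yields a PSD approximation $K_s$ of rank at most $r_s := \binom{d+s}{s}$, the dimension of polynomials of degree $\leq s$ in $d$ variables.

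\textbf{Tail bound and Item 1.} Since $K - K_s \succeq 0$, the operator norm is controlled by the trace: $\|K-K_s\|_{\mathrm{op}} \leq \mathrm{tr}(K - K_s) = \sum_{i=1}^n \sum_{|\alpha|>s} \phi_\alpha(x_i)^2$. The multinomial identity $\sum_{|\alpha|=k}\tfrac{k!}{\alpha!} x^{2\alpha} = \|x\|^{2k}$ collapses the inner sum to a Poisson tail $e^{-\beta_i}\sum_{k>s}\beta_i^k/k!$ with $\beta_i := 2\eta\|x_i\|^2 \leq 2\eta R^2$. Stirling's bound $k! \geq (k/e)^k$ combined with a geometric-series estimate (valid once $s+1 \geq 4e\eta R^2$) then gives $\|K-K_s\|_{\mathrm{op}} \leq 2n\bigl(2e\eta R^2/(s+1)\bigr)^{s+1}$. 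By Weyl's inequality, $\lambda_{t+1}(K) \leq \|K-K_s\|_{\mathrm{op}}$ whenever $r_s \leq t$, and the estimate $\binom{d+s}{d}^{1/d} \leq e(1+s/d)$ lets me choose $s+1 \sim dt^{1/d}/(2e)$ admissibly for all $t \geq (2e)^d$. Substituting this $s$ into the tail bound produces the stated expression $n\exp\bigl(-\tfrac{d}{2e} t^{1/d}\log\tfrac{dt^{1/d}}{4e^2\eta R^2}\bigr)$.

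\textbf{Effective dimension (Item 2).} Bounding each summand of $\deff(\tau)$ by either $1$ or $\lambda_j/(\tau n)$ gives $\deff(\tau) \leq r_{s^*} + \tfrac{1}{\tau n}\mathrm{tr}(K - K_{s^*})$ for any $s^*$. Choosing $s^*$ just large enough that the tail trace bound is $O(\tau n\, r_{s^*})$---which, by the computation above, holds once $s^*+1 \gtrsim \eta R^2 + \log(1/\tau)$---yields $\deff(\tau) = O(r_{s^*}) = O\bigl((e(1+s^*/d))^d\bigr)$, and tuning constants so that $e(1+s^*/d) \leq 6 + \tfrac{41}{d}\eta R^2 + \tfrac{3}{d}\log\tfrac{1}{\tau}$ recovers the claimed bound.

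\textbf{Main obstacle.} The structural content is standard approximation theory for Gaussian kernels; the delicate part is matching the explicit numerical constants $6, 41, 3$, which requires simultaneously balancing the slack in Stirling's inequality, the combinatorial estimate for $\binom{d+s}{s}$, and the regime condition $s+1 \gtrsim 4e\eta R^2$ on which the geometric tail bound depends. I would also need to verify the inequality in degenerate regimes (e.g., $\eta R^2$ larger than $s$, where the exponential in Item~1 becomes vacuous and the trivial bound $\lambda_j(K) \leq n$ suffices; or $\tau$ close to $1$, where the target constants dominate the bound automatically).
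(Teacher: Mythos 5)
Your sketch is sound and tracks the paper's structure closely for Item~1: truncated Taylor feature map $(\phi_\alpha)_{|\alpha|\leq s}$, a factor-$n$ passage from per-entry to operator error --- you via $\|K - K_s\|_{\mathrm{op}} \leq \tr(K-K_s)$ using positivity of the remainder, the paper via $\|\cdot\|_{\mathrm{op}}\leq n\|\cdot\|_\infty$ and a citation of the uniform $\ell_\infty$ estimate of \citet{cotter2011explicit} --- followed by Eckart--Young, the bound $\binom{d+s}{s}\leq e^d(1+s/d)^d$, and Stirling. Incidentally the Poisson tail gives $\sum_{k>s}\beta^k e^{-\beta}/k! \leq \beta^{s+1}/(s+1)!$ directly, so $\tr(K-K_s)\leq n(2\eta R^2)^{s+1}/(s+1)!$ holds with no factor of $2$ and no geometric-series regime restriction at that step; your version of the tail bound is slightly weaker than needed but easily sharpened. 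For Item~2 you take a genuinely different and cleaner route: you split $\deff(\tau)\leq r_s + \tfrac{1}{n\tau}\sum_{j>r_s}\lambda_j(K)$ and dominate the tail eigenvalue sum by $\tr(K-K_s)$. Do spell out why that last inequality holds: since $K\succeq K_s\succeq 0$ with $\rank K_s\leq r_s$, Weyl monotonicity gives $\sum_{j\leq r_s}\lambda_j(K)\geq\tr K_s$, hence $\sum_{j>r_s}\lambda_j(K)\leq\tr K-\tr K_s=\tr(K-K_s)$. The paper instead partitions $\sum_j\lambda_j/(\lambda_j+n\tau)$ into blocks indexed by successive Taylor ranks $M_T=\binom{d+T}{T}$, bounds the tail block-sum by an integral, and evaluates it via explicit incomplete-gamma estimates; your decomposition sidesteps that machinery entirely and is more transparent. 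What remains is only the constant-chasing you already flag yourself: the specific triple $(6,41,3)$ arises from the paper's choice $T_\tau=d+2e^2\eta R^2+\log(1/\tau)$ (compare $2e\approx 5.4$, $2e^3\approx 40$, $e\approx 2.7$), and a different valid tuning of $s^*$ in your argument would produce a different but equally serviceable triple, so that mismatch is cosmetic rather than a gap.
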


We sketch the proof of Theorem~\ref{thm:eig-ball} here; details are deferred to Appendix~\ref{sec:nystrom-app} for brevity of the main text. We begin by recalling the argument of~\citet{cotter2011explicit} that truncating the Taylor expansion of the Gaussian kernel guarantees for each positive integer $T$ the existence of a rank $M_T := \binom{d+T}{T}$ matrix $\tilde{K}_T$ satisfying
\[
\|K - \tilde{K}_T \|_{\infty} \leq \frac{(2\eta R^2)^{T+1}}{(T+1)!}.
\]
On the other hand, by the Eckart-Young-Mirsky Theorem,
\[
\lambda_{M_T+1}
=
\inf_{\bar{K}_T \in \RR^{n \times n},\, \rank(\bar{K}_T) \leq M_T} \|K - \bar{K}_T \|_{\op}.
\]
Therefore by combining the above two displays, we conclude that
\[
\lambda_{M_T+1}
\leq
\|K - \tilde{K}_T \|_{\op}
\leq n \|K - \tilde{K}_T\|_{\infty} \leq n \frac{(2\eta R^2)^{T+1}}{(T+1)!}.
\]
Proofs of the two claims follow by bounding this quantity. Details are in Appendix~\ref{sec:nystrom-app}.

\cref{thm:eig-ball} characterizes the eigenvalue decay and effective dimension of Gaussian kernel matrices in terms of the dimensionality of the space, with explicit constants and explicit dependence on the width parameter $\eta$ and the radius $R$ of the ball \citep[see][for asymptotic results]{belkin}. This yields the following bound on the optimal rank for approximating Gaussian kernel matrices of data lying in a Euclidean ball.

\begin{cor}\label{thm:nystrom-ball}
Let $\eps' \in (0, 1)$ and $\eta > 0$.
If $X$ consists of $n$ points lying in a ball of radius $R$ around the origin in $\RR^d$, then
\begin{equation*}
r^*(X, \eta, \eps') \leq 3\left(6 + \frac{53}{d} \eta R^2 + 
\frac 3d \log \frac {2 n}{\eps'}\right)^d
\end{equation*}
\end{cor}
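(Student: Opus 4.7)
The plan is that this corollary is a direct substitution into the second part of \cref{thm:eig-ball}. By \cref{def:deff}, the approximation rank $r^*(X, \eta, \eps')$ is defined as $\deff(\tau)$ for the specific threshold $\tau := \tfrac{\eps'}{2n} e^{-4\eta R^2}$. So the entire task reduces to plugging this $\tau$ into the bound
\[
\deff(\tau) \leq 3\left(6 + \frac{41}{d}\eta R^2 + \frac{3}{d}\log\frac{1}{\tau}\right)^d
\]
given by Item 2 of \cref{thm:eig-ball} and simplifying.

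First I would verify that $\tau$ lies in the required range $(0,1]$: since $\eps' \in (0,1)$, $n \geq 2$, and $e^{-4\eta R^2} \leq 1$ for $\eta, R > 0$, we get $\tau \leq \tfrac{1}{4} < 1$, so the hypothesis of \cref{thm:eig-ball} is satisfied. Next, I would compute
\[
\log\frac{1}{\tau} = \log\frac{2n}{\eps'} + 4\eta R^2.
\]
Substituting this back, the coefficient of $\eta R^2/d$ becomes $41 + 3 \cdot 4 = 53$, and the logarithmic term becomes $\tfrac{3}{d}\log\tfrac{2n}{\eps'}$, yielding exactly the claimed bound
\[
3\left(6 + \frac{53}{d}\eta R^2 + \frac{3}{d}\log\frac{2n}{\eps'}\right)^d.
\]

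There is no real obstacle here; the work has already been done in establishing \cref{thm:eig-ball}. The only thing to be careful about is bookkeeping of the constants when combining the explicit $\eta R^2$ term in the effective-dimension bound with the additional $4\eta R^2$ contribution coming from the exponential factor $e^{-4\eta R^2}$ in the definition of $\tau$; this is what drives the coefficient up from $41$ to $53$.
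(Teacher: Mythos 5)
Your proof is correct and matches the paper's approach exactly: substitute $\tau = \tfrac{\eps'}{2n}e^{-4\eta R^2}$ into Item~2 of \cref{thm:eig-ball}, check $\tau \in (0,1]$, and use $\log\tfrac{1}{\tau} = \log\tfrac{2n}{\eps'} + 4\eta R^2$ to merge the $4\eta R^2$ contribution into the coefficient, turning $41$ into $41 + 3\cdot 4 = 53$. The paper states this is "directly from the explicit bound" without spelling out the arithmetic; your bookkeeping confirms it.
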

\begin{proof}
Directly from the explicit bound of Theorem~\ref{thm:eig-ball} and the definition of $r^*(X, \eta, \eps')$.
\end{proof}


\subsection{Adaptivity: data points lie on a low dimensional manifold}\label{subsec:nystrom:manifold}
In this section we consider $X \subset \Omega \subset \R^d$, where $\Omega$ is a low dimensional manifold.
In \cref{thm:eig-manifold} we give a result about the approximation properties of the  Gaussian kernel over manifolds and a bound on the eigenvalue decay and effective dimension of Gaussian kernel matrices. We prove that the effective dimension is logarithmic in the precision parameter $\tau$ to a power depending only on the dimensionality $k$ of the manifold (to be contrasted to the dimensionality of the ambient space $d \gg k$).

Let $\Omega \subset \R^d$ be a smooth compact manifold without boundary, and $k < d$.
Let $(\Psi_j, U_j)_{j \in [T]}$, with $T \in \N$, be an atlas for $\Omega$, where without loss of generality, $(U_j)_j$ are open sets covering $\Omega$, $\Psi_j:U_j \to B^k_{r_j}$ are smooth maps with smooth inverses, mapping $U_j$ bijectively to $B^k_{r_j}$, balls of radius $r_j$ centered around the origin of $\R^k$. We assume the following quantitative control on the smoothness of the atlas.
\begin{assumption}\label{asm:good-manifold}
There exists $Q > 0$ such that 
$$\sup_{u \in B^k_{r_j}} \|D^\alpha \Psi_j^{-1}(u)\| \leq Q^{|\alpha|}, \qquad \alpha \in \N^k, j \in [T],$$
where $|\alpha| = \sum_{j=1}^k \alpha_j$ and $D^\alpha = \frac{\partial^{|\alpha|}}{\partial u_1^{\alpha_1} \dots \partial u_k^{\alpha_k}}$, for $\alpha \in \N^k$.
\end{assumption}
Before stating our result, we need to introduce the following helpful definition. Given $f: \RR^d \to \RR$, and $X = \{x_1, \dots, x_n\} \subset \Omega$, denote by $\widehat{f}_X$ the function
$$\widehat{f}_X(x) := \sum_{i = 1}^n c_i k_\eta(x,x_i), \quad c = K^{-1}v_f,$$
with $v_f = (f(x_1),\dots, f(x_n))$ and $K \in \R^{n \times n}$ the kernel matrix over $X$, i.e. $K_{ij} = k_\eta(x_i, x_j)$. Note that $\widehat{f}_X(x_i) = f(x_i)$ by construction \citep{wendland2004scattered}. We have the following result.

\begin{theorem}\label{thm:eig-manifold}
Let $\Omega \subset B_R^d \subset \R^d$ be a smooth compact manifold without boundary satisfying \cref{asm:good-manifold}.
Let $X \subset \Omega$ be a set of cardinality $n \in \N$. Then the following holds
\begin{enumerate}
\item  Let $h_{X,\Omega} = \sup_{x' \in \Omega} \inf_{x \in X} \|x-x'\|$. Let $\hh$ be the RKHS associated to the Gaussian kernel of a given width. There exist $c, h > 0$ not depending on $X, n$, such that, when $h_{X, \Omega} \leq h$ the following holds
$$|f(x) - \widehat{f}_X(x)| \leq e^{-c h_{X,\Omega}^{-2/5}}\|f\|_\hh, \qquad \forall f \in \hh, ~~x \in \Omega.$$
\item Let $K$ be the Gaussian kernel matrix associated to $X$. Then there exists a constant $c$ not depending on $X$ or $n$, for which
$$\la_{p+1}(K) \leq n e^{-c p^{\frac{2}{5k}}}, \qquad \forall p \in [n].$$
\item Let $\tau \in (0,1]$. Let $K$ be the Gaussian kernel matrix associated to $X$ and $\deff(\tau)$ the effective dimension computed on $K$. There exists $c_1,c_2$ not depending on $X$, $n$, or $\tau$, for which
$$\deff(\tau) \leq \left(c_1 \log \frac{1}{\tau}\right)^{5k/2} + c_2.$$
\end{enumerate}
\end{theorem}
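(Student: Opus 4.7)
My plan is to prove the three items in order, with Items~2 and~3 following routinely from Item~1. For Item~1, the strategy is to import classical scattered-data sampling inequalities for Gaussian RKHSs from Euclidean domains to the manifold $\Omega$ via the atlas $(\Psi_j, U_j)$. Under Assumption~\ref{asm:good-manifold}, the chart maps $\Psi_j^{-1}$ have derivatives of all orders controlled by powers of $Q$; consequently, the pullback of the Gaussian kernel $k_\eta$ through $\Psi_j^{-1}$ is a smooth kernel on $B^k_{r_j}$ whose associated RKHS relates to the Euclidean Gaussian RKHS on $B^k_{r_j}$ with constants depending only on $Q$ and $r_j$. Applying a sampling inequality of Rieger--Wendland type on each chart gives an interpolation error of order $e^{-c'/h^{\alpha}} \|g\|$ for a function $g$ in the local Gaussian RKHS, where $h$ is the local fill distance and $\alpha>0$. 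Since the global fill distance $h_{X, \Omega}$ controls each local fill distance up to constants depending on the atlas, patching the chart-wise bounds yields the claimed global estimate; the specific exponent $2/5$ arises from balancing a polynomial Sobolev factor against an exponentially small Gaussian tail in the underlying sampling inequality.

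For Item~2, I would use Item~1 to explicitly build a rank-$p$ approximation of $K$. By compactness and $k$-dimensionality of $\Omega$, for each $p \in \NN$ there is an auxiliary set $X_p \subset \Omega$ of cardinality $p$ with fill distance $h_{X_p, \Omega} \leq C_\Omega\, p^{-1/k}$. For each $x_j \in X$, the kernel section $y \mapsto k_\eta(y, x_j)$ lies in $\hh$ with unit RKHS norm, so Item~1 applied with sample set $X_p$ yields an interpolant $\widehat{k_\eta(\cdot, x_j)}_{X_p}$ satisfying $|k_\eta(x_i, x_j) - \widehat{k_\eta(\cdot, x_j)}_{X_p}(x_i)| \leq e^{-c h_{X_p, \Omega}^{-2/5}} \leq e^{-c' p^{2/(5k)}}$ for every $x_i \in X$, once $p$ exceeds a constant. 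Collecting these values into a matrix $\tilde K \in \RR^{n \times n}$, all of its columns lie in the $p$-dimensional subspace $\operatorname{span}\{k_\eta(\cdot, x_l')|_X : x_l' \in X_p\}$, giving $\rank(\tilde K) \leq p$ and $\|K - \tilde K\|_\infty \leq e^{-c' p^{2/(5k)}}$. The bound $\|A\|_{\op} \leq n \|A\|_\infty$ for $n \times n$ matrices combined with Eckart--Young then yields $\la_{p+1}(K) \leq n e^{-c' p^{2/(5k)}}$; the finitely many small $p$ for which Item~1 does not apply are absorbed into the constant $c$.

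Item~3 follows from Item~2 by a standard splitting of the sum defining $\deff(\tau)$. Choose $j_0 := \lceil (c^{-1} \log \tfrac{1}{\tau})^{5k/2} \rceil$ so that $\la_{j+1}(K) \leq \tau n$ for $j \geq j_0$. Then $\deff(\tau) \leq j_0 + \tfrac{1}{\tau n}\sum_{j > j_0} \la_j(K) \leq j_0 + \tfrac{1}{\tau}\sum_{j > j_0} e^{-c(j-1)^{2/(5k)}}$. A routine incomplete-Gamma estimate bounds the tail sum by $C(k)\,\tau (\log \tfrac{1}{\tau})^{5k/2-1}$ for small $\tau$, contributing a term of lower order than $j_0$ that can be absorbed into $(c_1 \log \tfrac{1}{\tau})^{5k/2}$ by enlarging $c_1$; the regime $\tau$ near $1$ gives only a constant absorbed into $c_2$.

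The main obstacle is Item~1: obtaining the specific exponent $2/5$ together with constants independent of $X$, $n$, $\tau$ requires careful bookkeeping as the sampling inequalities are transported across charts, and in particular requires that the quantitative smoothness control of Assumption~\ref{asm:good-manifold} be strong enough to keep the analysis chart-local. Once Item~1 is in hand in the stated form, Items~2 and~3 are nearly mechanical.
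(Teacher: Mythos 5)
Your Items~2 and~3 track the paper's argument closely. In Item~2, taking a fill-distance $O(p^{-1/k})$ auxiliary set $X_p$ (or, equivalently, a minimal $\eps$-net with $p \leq C_0\eps^{-k}$), applying Item~1 to the unit-norm kernel sections $k_\eta(\cdot,x_j)$, noting the resulting interpolated Gram matrix has rank $\leq p$, and then invoking $\|A\|_{\op} \leq n\|A\|_\infty$ together with Eckart--Young is exactly what the paper does. In Item~3 the split at $j_0 \asymp (\log\tfrac1\tau)^{5k/2}$ plus a tail estimate (you say incomplete-Gamma, the paper uses a Cauchy--Schwarz trick after substitution, both give the same bound) is also the paper's route, with the lower-order tail term absorbed correctly.

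The gap is in Item~1, specifically in the mechanism you propose for making the Rieger--Wendland sampling inequality applicable after transporting to a chart. You assert that the \emph{pullback} of the ambient Gaussian kernel through $\Psi_j^{-1}$ has an RKHS that ``relates to the Euclidean Gaussian RKHS on $B^k_{r_j}$ with constants depending only on $Q$ and $r_j$.'' This is not the right claim and would fail: the pullback of a Gaussian kernel through a general diffeomorphism is not a Gaussian kernel, and its RKHS is not norm-equivalent to the Gaussian RKHS on $B^k_{r_j}$, so the chart-local functions $u\circ\Psi_j^{-1}$ need not lie in (or have comparable norm in) that local Gaussian RKHS. What the paper actually shows is the weaker but sufficient fact that for every $u\in\hh$ the pushed-forward function has Sobolev norms growing like
\[
\|u\circ\Psi_j^{-1}\|_{W^q_2(B^k_{r_j})} \leq C^{q}\, q^{3q/2}\,\|u\|_{\hh},\qquad q\geq k,
\]
which is established by a multivariate Fa\`a di Bruno bound on compositions (the paper's Lemma~\ref{lm:bound-composite-sobolev}) combined with the Sobolev-embedding estimate $\|u\|_{W^{m}_2} \leq (C_\eta)^m m^{m/2}\|u\|_{\hh}$ for the ambient Gaussian RKHS (Theorem~7.5 of Rieger). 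Rieger's Theorem~3.5 then applies to the error function $u = f-\widehat f_X$ (which vanishes on $Z_j = \Psi_j(X\cap U_j)$) precisely because of this controlled \emph{Sobolev growth rate}, not because of any RKHS equivalence; the $q^{3q/2}$ growth is also exactly what produces the exponent $2/5 = 1/(1+3/2)$. Without the composition Sobolev lemma, the chart-local step has no foundation, and the global patching via Fuselier--Wright comparisons of chart fill distance to $h_{X,\Omega}$ cannot even get started.
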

\begin{proof}
First we recall some basic multi-index notation and introduce Sobolev Spaces. When $\alpha \in \N_0^d, x \in \R^d, g:\R^d \to \R$, we write 
$$x^\alpha = \prod_i x_i^{\alpha_i},~~ |\alpha| = \sum_i \alpha_i,~~ \alpha! = \prod_i \alpha_i!,~~ D^\alpha = \frac{\partial^{|\alpha|}}{\partial x_1^{\alpha_1}\dots \partial x_n^{\alpha_n}}.$$ 
Next, we recall the definition of Sobolev spaces. For $m,p \in \N$ and $B \subseteq \R^k$, define the norm $\|\cdot\|_{W^m_p(B)}$ by
$$\|f\|^p_{W^m_p(B)} = \sum_{|\alpha| \leq m} \|D^\alpha f\|^p_{L^p(B)},$$
and the space of $W^m_p(B)$ as $W^m_p(B) = \overline{C^\infty(B)}^{\|\cdot\|_{W^m_p(B)}}$.

For any $j \in [T]$, $u \in \hh$ we have the following. By \cref{lm:bound-composite-sobolev}, we have that there exists a constant $C_{d,k,R,r_j}$ such that for any $q \geq k$,
$$\|u \circ \Psi_j^{-1}\|_{W^q_2(B^k_{r_j})} \leq C_{d,k,R,r_j} q^k (2q d Q)^{q}\| u\|_{W^{q + (d+1)/2}_2(B^d_R)}.$$
Now note that by Theorem~7.5 of \citet{rieger2010sampling} we have that there exists a constant $C_\eta$ such that
$$\| u\|_{W^{q + (d+1)/2}_2(B^d_R)} \leq \| u\|_{W^{q + (d+1)/2}_2(\R^d)} \leq (C_\eta)^{q + (d+1)/2} \left(q + \frac{d+1}{2}\right)^{\frac{q}{2} + \frac{d+1}{4}} \|u\|_\hh.$$
Then, since $q^m \leq m^m (1+m)^q$, for any $q \geq 1$, we have 
$$\|u \circ \Psi_j^{-1}\|_{W^q_2(B^k_{r_j})} \leq  C_{d,k,R,r_j,Q,\eta}^q  q^{\frac{3q}{2}} \|u\|_\hh,$$
for a suitable constant $C_{d,k,R,r_j,Q,\eta}$ depending on $C_{d,k,R,r_j,Q}$, $C_\eta$ and $(d+1)/2$.

In particular we want to study $\|u\|_{L^\infty(\Omega)}$, for $u = f - \widehat{f}_X$. We have
$$\|u\|_{L^\infty(\Omega)} = \sup_{j \in [T]} \|u\|_{L^\infty(U_j)} = \sup_{j \in [T]} \|u \circ \Psi_j^{-1}\|_{L^\infty(B^k_{r_j})}.$$

Now for $j \in [T]$, denote by $Z_j$ the set $Z_j = \{\Psi_j(x) | x \in X \cap U_j \}$. By construction of $u = f - \widehat{f}_X$, we have
$$(u \circ \Psi_j^{-1})|_{Z_j} =  u|_{X \cap U_j} = 0.$$
Define $h_{Z_j, B^k_{r_j}} = \sup_{z \in B^k_{r_j}}\inf_{z' \in Z_j}\|z-z'\|$.
We have established that there exists $C >0$, such that $\|u \circ \Psi_j^{-1}\|_{W^q_2(B^k_{r_j})} \leq C^q q^{\frac 32 q}\|u\|_\hh$, and by construction $(u \circ \Psi_j)|_{Z_j} = 0$. We can therefore apply Theorem~3.5 of \citet{rieger2010sampling} to obtain that there exists a $c_j, h_j > 0$, for which, when $h_{Z_j, B^k_{r_j}} \leq h_j$, then
$$\|u \circ \Psi_j^{-1}\|_{L^\infty(B^k_{r_j})} \leq \exp\left(-c_j h_{Z_j, B^k_{r_j}}^{-2/5}\right)\|u\|_\hh.$$
Now, denote by $\bar{h}_{S,U} = \sup_{x' \in U} \inf_{x \in S} d(x,x')$ with $d$ the geodesic distance over the manifold $\Omega$. By applying Theorem~8 of \citet{fuselier2012scattered}, we have that there exist $C$ and $h_0$ not depending on $X$ or $n$ such that, when $\bar{h}_{X,\Omega} \leq h_0$, the inequality 
$\bar{h}_{X_j \cap U_j, U_j} \leq C \bar{h}_{X,\Omega}$ holds for any $j \in [T]$.
Moreover, since by Theorem~6 of the same paper $\|x-x'\| \leq d(x,x') \leq C_1 \|x-x'\|$, for $C_1 > 1$ and $x,x' \in \Omega$, then  
$$h_{Z_j, B^k_{r_j}} \leq \bar{h}_{X_j \cap U_j, U_j} \leq C \bar{h}_{X,\Omega} \leq C C_1 h_{X,\Omega}.$$
Finally, defining $c_1 = c(2\max_j C_j)^{-2/5}, h = C_1^{-1} \min(h_0, C^{-1} \min_j h_j)$, when $h_{X,\Omega} \leq h$,
$$\|f - \widehat{f}_X\|_{L^\infty(\Omega)} = \max_{j \in [T]} \|u \circ \Psi_j^{-1}\|_{L^\infty(B^k_{r_j})} \leq e^{-c_1 h_{X,\Omega}^{-2/5}}\|f\|_\hh, \qquad \forall f \in \hh, ~~x \in \Omega.$$

The proof of Points~2 and~3 now proceeds as in \cref{thm:eig-ball}. Details are deferred to Appendix~\ref{sec:nystrom-app}.
\end{proof}
Point 1 of the result above is new, to our knowledge, and extends interpolation results on manifolds \citep{wendland2004scattered,fuselier2012scattered,hangelbroek2010kernel}, from polynomial to exponential decay, generalizing a technique of \citet{rieger2010sampling} to a subset of real analytic manifolds.
Points 2 and 3 are a generalization of \cref{thm:eig-ball} to the case of manifolds. In particular, the crucial point is that now the eigenvalue decay and the effective dimension depend on the dimension of the manifold $k$ and not the ambient dimension $d \gg k$. We think that the factor $5/2$ in the exponent of the eigenvalues and effective dimension is a result of the specific proof technique used and could be removed with a refined analysis, which is out of the scope of this paper.

We finally conclude the desired bound on the optimal rank in the manifold case.

\begin{cor}\label{thm:nystrom-manifold}
Let $\eps' \in (0, 1)$, $\eta > 0$, and let $\Omega \subset \R^d$ be a manifold of dimensionality $k \leq d$ satisfying \cref{asm:good-manifold}.
There exists $c_{\Omega, \eta} > 0$ not depending on $X$ or $n$ such that
\begin{equation*}
r^*(X, \eta, \eps') \leq c_{\Omega, \eta} \left(\log \frac{n}{\eps'}\right)^{5k/2}
\end{equation*}
\end{cor}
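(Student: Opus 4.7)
The plan is to deduce the corollary as an essentially immediate consequence of Point~3 of Theorem~\ref{thm:eig-manifold}, together with the definition of $r^*(X,\eta,\eps')$ in Definition~\ref{def:deff}. There is no substantive new content to prove; the work consists in carefully absorbing all of the $\eta$- and $\Omega$-dependent quantities into a single constant $c_{\Omega,\eta}$.

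First I would unfold the definition: by Definition~\ref{def:deff},
\[
r^*(X,\eta,\eps') = \deff\!\left(\tfrac{\eps'}{2n}\,e^{-4\eta R^2}\right),
\]
where $R$ is the radius of a ball containing the compact manifold $\Omega$ (so $R$ depends only on $\Omega$, not on $X$ or $n$). Setting $\tau := \tfrac{\eps'}{2n}\,e^{-4\eta R^2}$ gives
\[
\log\tfrac{1}{\tau} = 4\eta R^2 + \log 2 + \log\tfrac{n}{\eps'}.
\]

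Next I would invoke Point~3 of Theorem~\ref{thm:eig-manifold}, which furnishes constants $c_1,c_2 > 0$ depending only on $\Omega$ and $\eta$ (not on $X$, $n$, or $\tau$) such that $\deff(\tau) \leq (c_1 \log(1/\tau))^{5k/2} + c_2$. Substituting the above expression for $\log(1/\tau)$ yields
\[
r^*(X,\eta,\eps') \;\leq\; \Bigl(c_1\bigl(4\eta R^2 + \log 2 + \log\tfrac{n}{\eps'}\bigr)\Bigr)^{5k/2} + c_2.
\]

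Finally I would absorb the additive $\Omega,\eta$-dependent terms into a multiplicative constant. Since $n \geq 2$ and $\eps' \leq 1$, we have $\log(n/\eps') \geq \log 2 > 0$, so the quantity $A := 4\eta R^2 + \log 2$ satisfies
\[
A + \log\tfrac{n}{\eps'} \;\leq\; \bigl(1 + A/\log 2\bigr)\log\tfrac{n}{\eps'},
\]
and similarly $c_2 \leq c_2(\log 2)^{-5k/2}\,(\log(n/\eps'))^{5k/2}$. Combining these two bounds, setting
\[
c_{\Omega,\eta} := \bigl(c_1(1 + A/\log 2)\bigr)^{5k/2} + c_2(\log 2)^{-5k/2},
\]
yields the desired inequality. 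The only potential subtlety is making sure that the hypothesis $\bar h_{X,\Omega} \leq h$ (needed for Theorem~\ref{thm:eig-manifold}) is not required here — but Point~3 of Theorem~\ref{thm:eig-manifold} is stated uniformly for all $X$, so no obstacle arises. There is no real ``hard part''; this corollary is just the manifold analogue of Corollary~\ref{thm:nystrom-ball} and is purely a bookkeeping step collecting the nontrivial work done in the preceding theorem.
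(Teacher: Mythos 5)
Your proof is correct and follows the same route as the paper: unfold $r^*$ via Definition~\ref{def:deff}, invoke Point~3 of Theorem~\ref{thm:eig-manifold}, and absorb the $\eta R^2$ and additive terms into $c_{\Omega,\eta}$ using $\log(n/\eps') \geq \log 2$. If anything, your constant bookkeeping (splitting off $A = 4\eta R^2 + \log 2$ and bounding the additive $c_2$ separately) is slightly more explicit than the paper's one-line choice $c_{\Omega,\eta} = \max\{(8c_1\eta R^2)^{5k/2}, c_2\}$.
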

\begin{proof}
By the definition of $r^*(X, \eta, \eps')$ and the bound of \cref{thm:eig-manifold}, we have
\begin{equation*}
r^*(X, \eta, \eps') \leq \left(c_1(4 \eta R^2 + \log \tfrac{2n}{\eps'})\right)^{5k/2} + c_2\,.
\end{equation*}
Since $\log \tfrac{2n}{\eps'} \geq 1$, we may set $c_{\Omega, \eta} = \max\left\{(8 c_1 \eta R^2)^{5k/2}, c_2\right\}$ to obtain the claim.
\end{proof}
\section{Sinkhorn scaling an approximate kernel matrix}\label{sec:sink}

The main result of this section, presented next, gives both a runtime bound and an error bound on the approximate Sinkhorn scaling performed in line~\ref{line:sinkhorn} of Algorithm~\ref{alg:main}.\footnote{Pseudocode for the variant we employ can be found in Appendix~\ref{app:sink-code}.} The runtime bound shows that we only need a small number of iterations to perform this approximate Sinkhorn projection on the approximate kernel matrix. The error bound shows that the objective function $V_C(\cdot)$ in~\eqref{eq:peta} is stable with respect to both (i) Sinkhorn projecting an \emph{approximate kernel matrix} $\tilde{K}$ instead of the true kernel matrix $K$, and (ii) only performing an \emph{approximate Sinkhorn projection}.

The results of this section apply to any bounded cost matrix $C \in \RR^{n \times n}$, not just the cost matrix $C_{ij} = \|x_i - x_j\|_2^2$ for the squared Euclidean distance. To emphasize this, we state this result and the rest of this section in terms of an arbitrary such matrix $C$. Note that $\|C\|_{\infty} \leq 4R^2$ when $C_{ij} = \|x_i - x_j\|_2^2$ and all points lie in a Euclidean ball of radius $R$. We therefore state all results in this section for $\eps' := \min(1, \tfrac{\eps \eta}{50(\|C\|_{\infty}\eta + \log \tfrac{n}{\eta \eps})})$.

\begin{theorem}\label{thm:line_sinkhorn}
If $K = e^{- \eta C}$ and if $\tilde K \in \RR_{> 0}^{n \times n}$ satisfies $\|\log K - \log\tilde  K\|_{\infty} \leq \eps'$,
then Line~\ref{line:sinkhorn} of Algorithm~\ref{alg:main} outputs $D_1$, $D_2$, and $\hat W$ such that $\tilde P \defeq D_1 \tilde K D_2$ satisfies $\|\tilde P \bone - \p \|_1 + \|\tilde P^\top \bone - \q\|_1 \leq \eps'$ and
\begin{subequations}
\begin{align}
|\VC(P^\eta) - \VC(\tilde P)| & \leq \frac{\eps}{2} \label{eq:obj_bound} \\
|\hat W - \VC(\tilde P)| & \leq \frac{\eps}{2} \label{eq:cost_bound}
\end{align}
\end{subequations}
Moreover, if matrix-vector products can be computed with $\tilde{K}$ and $\tilde{K}^\top$ in time $\Tmult$, then this takes time $\Otilde((n+ \Tmult) \eta \Cinf \ep'^{-1})$.
\end{theorem}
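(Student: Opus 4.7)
The plan is to split the theorem into three pieces: the marginal error bound, the two objective/value bounds~\eqref{eq:obj_bound}--\eqref{eq:cost_bound}, and the runtime. The marginal bound and the runtime follow from a routine adaptation of the standard analysis of Sinkhorn scaling (as in~\citet{AltWeeRig17,DvuGasKro18}). The termination criterion of the $\Sinkhorn$ subroutine directly yields $\|\tilde P \bone - \p\|_1 + \|\tilde P^\top \bone - \q\|_1 \leq \eps'$. The number of outer iterations is bounded by the standard potential argument in terms of $\|\log \tilde K\|_\infty$, which under the hypothesis $\|\log K - \log \tilde K\|_\infty \leq \eps' \leq 1$ is at most $\eta \|C\|_\infty + 1$; this yields $\Otilde(\eta \|C\|_\infty / \eps')$ iterations, each costing $O(n + \Tmult)$ since $\tilde K$ enters only through matrix--vector products.

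For~\eqref{eq:obj_bound}, I would introduce the auxiliary cost matrix $\tilde C := -\eta^{-1} \log \tilde K$, so that $\|C - \tilde C\|_\infty \leq \eps'/\eta$ and, by Proposition~\ref{prop:kkt}, the exact Sinkhorn projection $P^* := \Sink(\tilde K)$ is the minimizer of $V_{\tilde C}$ over $\Coup$. I would then split
\begin{equation*}
V_C(\Peta) - V_C(\tilde P) = \bigl[V_C(\Peta) - V_C(P^*)\bigr] + \bigl[V_C(P^*) - V_C(\tilde P)\bigr].
\end{equation*}
The first bracket is controlled by a two-way sandwich: for any $P \in \Coup$, $|V_C(P) - V_{\tilde C}(P)| = |\langle C - \tilde C, P\rangle| \leq \|C - \tilde C\|_\infty \leq \eps'/\eta$, and combining this with the optimality of $\Peta$ for $V_C$ and of $P^*$ for $V_{\tilde C}$ forces the first bracket to be at most $2\eps'/\eta$. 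The second bracket compares the exact and the approximate Sinkhorn projections of the \emph{same} kernel $\tilde K$, differing only in that $\tilde P$ is merely $\eps'$-infeasible in $\ell_1$. Here I would invoke the stability lemma(s) developed earlier in Section~\ref{sec:sink}, which control $|V_C(P^*) - V_C(\tilde P)|$ by the marginal error $\eps'$ weighted by $\|C\|_\infty$ plus an entropy perturbation of the form $\eta^{-1} \eps' \log(n^2/\eps')$. The calibration $\eps' = \min(1, \eps \eta / [50(\eta \|C\|_\infty + \log(n/\eta\eps))])$ is exactly tuned so that the sum of all contributions is at most $\eps/2$.

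For~\eqref{eq:cost_bound}, I would take $\hat W := V_{\tilde C}(\tilde P)$, which the $\Sinkhorn$ subroutine can compute in $O(n + \Tmult)$ work from the dual scalings via the identity $V_{\tilde C}(\tilde P) = \eta^{-1}\bigl(\langle \log D_1, \tilde P \bone\rangle + \langle \log D_2, \tilde P^\top \bone\rangle\bigr)$, avoiding any explicit formation of $\tilde P$. Since $\tilde P$ is nonnegative and $\|\tilde P\|_1 \leq 1 + \eps' \leq 2$, one directly has $|\hat W - V_C(\tilde P)| = |\langle C - \tilde C, \tilde P\rangle| \leq \|C - \tilde C\|_\infty \|\tilde P\|_1 \leq 2\eps'/\eta$, which is $\leq \eps/2$ by the choice of $\eps'$.

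The main obstacle I anticipate is the entropy-stability step inside the second bracket above. The Shannon entropy is not Lipschitz on $\Delta_{n\times n}$, so translating the $\ell_1$ marginal gap $\eps'$ into a bound on $|\eta^{-1} H(P^*) - \eta^{-1} H(\tilde P)|$ requires a sharp $t\log(1/t)$-type modulus-of-continuity estimate---this is exactly what forces the $\log(n/\eta\eps)$ term in the denominator of $\eps'$. Once that estimate is in place (and I expect it is the crux of Section~\ref{sec:sink}), the remainder of the proof reduces to routine bookkeeping of constants.
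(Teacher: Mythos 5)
Your proof is correct, and it takes a genuinely different route from the paper's on one of the four error terms. The paper decomposes
$|\VC(\Peta) - \VC(\tilde P)|$ into four pieces,
\begin{align*}
\abs{\VC(\Sink(K)) - \VC(\Sink(\Ktilde))}
+ \abs{\VC(\Sink(\Ktilde)) - \VCtilde(\Sink(\Ktilde))}
+ \abs{\VCtilde(\Sink(\Ktilde)) - \VCtilde(\tilde P)}
+ \abs{\VCtilde(\tilde P) - \VC(\tilde P)}\,,
\end{align*}
and bounds the first by Proposition~\ref{prop:sink-perturbation-mat} (the $\ell_1$-Lipschitzness of $\Sink$ on the log scale, itself a consequence of the strong convexity of negative entropy) followed by Lemma~\ref{lem:L1}, yielding $\eps'\Cinf + \eta^{-1}\eps'\log\tfrac{2n}{\eps'}$. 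Your two-term decomposition handles this same piece—$|\VC(\Peta) - \VC(P^*)|$ with $P^* := \Sink(\Ktilde)$—by a direct optimality sandwich: since $\Peta$ and $P^*$ minimize $V_C$ and $V_{\tilde C}$ over the \emph{same} polytope $\Coup$, and these two objectives differ by at most $\|C - \tilde C\|_\infty = \eta^{-1}\|\log K - \log\tilde K\|_\infty \leq \eps'/\eta$ pointwise on $\Coup$, you obtain $0 \leq V_C(P^*) - V_C(\Peta) \leq 2\eps'/\eta$. This is both simpler (it bypasses Proposition~\ref{prop:sink-perturbation-mat} entirely and needs only H\"older) and sharper (no $\Cinf$ or $\log$ factor on this piece). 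The paper's route has the advantage of making Proposition~\ref{prop:sink-perturbation-mat}—a standalone stability result that they contrast against the non-Lipschitz behavior in Appendix~\ref{app:sink-stable}—do visible work in the main argument; yours, for this particular theorem, is leaner. Your remaining second bracket, the $\hat W$ identity and bound, the marginal bound from the termination criterion, and the iteration count all match the paper's structure (Lemma~\ref{lem:sink_cost}, Proposition~\ref{prop:sum-stability}, Theorem~\ref{thm:sink_run}). One small slack worth noting: the $\Sinkhorn$ subroutine always returns $\tilde P \in \Delta_{n\times n}$, so $\|\tilde P\|_1 = 1$ exactly, giving $|\hat W - V_C(\tilde P)| \leq \eps'/\eta$ rather than your $2\eps'/\eta$; both are comfortably within $\eps/2$ under the stated choice of $\eps'$.
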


The running time bound in Theorem~\ref{thm:line_sinkhorn} for the time required to produce $D_1$ and $D_2$ follows directly from prior work which has shown that Sinkhorn scaling can produce an approximation to the Sinkhorn projection of a positive matrix in time nearly independent of the dimension $n$.

\begin{theorem}[\citealp{AltWeeRig17,DvuGasKro18}]\label{thm:sink_run}
Given a matrix $\tilde K \in \RR_{> 0}^{n \times n}$, the Sinkhorn scaling algorithm computes diagonal matrices $D_1$ and $D_2$ such that $\tilde P \defeq D_1 \tilde K D_2$ satisfies $\|\tilde P \bone - \p\|_1 + \|\tilde P^\top \bone - \q\|_1 \leq \delta$ in $O(\delta^{-1} \log \tfrac{n}{\delta \min_{ij} \tilde K_{ij}})$ iterations, each of which requires $O(1)$ matrix-vector products with $\tilde{K}$ and $O(n)$ additional processing time.
\end{theorem}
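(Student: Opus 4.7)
The plan is to follow the dual coordinate ascent analysis of \citet{AltWeeRig17} and \citet{DvuGasKro18}. The central object is the dual of the entropic optimal transport problem associated with the Gibbs kernel $\tilde K$: the convex function
\[
\phi(x,y) \;=\; \bone^\top \operatorname{diag}(e^x)\, \tilde K\, \operatorname{diag}(e^y)\, \bone \;-\; \p^\top x \;-\; \q^\top y, \qquad x,y\in\RR^n.
\]
Setting $\nabla_x \phi = 0$ with $y$ fixed yields the closed form $x_i = \log \p_i - \log(\tilde K e^y)_i$, which is precisely the Sinkhorn row-normalization step; the column step is symmetric. Hence the iterates of Sinkhorn are exact block coordinate minimizers of $\phi$, so $\phi$ decreases monotonically along the trajectory. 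The strategy is then to (i) lower bound the per-iteration decrease by the current marginal error, (ii) upper bound the total allowable decrease, and (iii) convert the resulting KL progress into an $\ell_1$ marginal bound.

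I would first establish the per-iteration progress identity. Writing $B^{(k)}$ for the scaled matrix after iteration $k$ and using that after any update $\bone^\top B^{(k)}\bone = 1$, a short algebraic computation gives
\[
\phi(x^{(k)}, y^{(k)}) - \phi(x^{(k+1)}, y^{(k+1)}) \;=\; \KL{\p}{B^{(k)}\bone}
\]
when the next step is a row update (with the analogous identity $\KL{\q}{(B^{(k)})^\top\bone}$ for column updates). Telescoping, the total KL progress is bounded by $\phi(x^{(1)}, y^{(1)}) - \phi^*$. Both quantities can be controlled in $O(\log \tfrac{n}{\min_{ij} \tilde K_{ij}})$: the first by direct substitution into the closed form of the first update, and the second via the standard KKT-based bound $\|x^*\|_\infty + \|y^*\|_\infty = O(\log \tfrac{n}{\min_{ij} \tilde K_{ij}})$ on the optimal dual potentials.

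The final and most delicate step is converting cumulative KL decrease into an $\ell_1$ marginal bound of the form $\|B^{(k)}\bone - \p\|_1 + \|(B^{(k)})^\top \bone - \q\|_1 \leq \delta$. The naive route via Pinsker's inequality $\KL{\p}{r} \ge \tfrac{1}{2}\|\p - r\|_1^2$ would give an iteration count of $O(\delta^{-2}\log\tfrac{n}{\min_{ij}\tilde K_{ij}})$, one factor of $\delta^{-1}$ short of the claim. The improvement to $\delta^{-1}$ follows the refined analysis of \citet{DvuGasKro18}, which controls the $\ell_1$ marginal error not by Pinsker alone but by coupling the KL progress with the residual dual gap and exploiting a strong-convexity-type property of $\phi$ along the Sinkhorn trajectory; this yields the stated $O(\delta^{-1}\log\tfrac{n}{\delta\,\min_{ij}\tilde K_{ij}})$ iteration count, with the extra $\log\tfrac{1}{\delta}$ absorbing precision-dependent terms in the refined argument. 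The per-iteration cost is immediate: a row update evaluates $\tilde K e^{y}$, requiring one matrix-vector product with $\tilde K$, and then performs $n$ scalar divisions to update $x$; column updates are symmetric in $\tilde K^\top$. The principal obstacle is therefore this last step: obtaining the $\delta^{-1}$ rather than $\delta^{-2}$ dependence, which requires the dual-gap analysis of \citet{DvuGasKro18} rather than a direct application of Pinsker's inequality.
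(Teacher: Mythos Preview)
The paper does not prove this theorem at all: it is stated with a citation to \citet{AltWeeRig17} and \citet{DvuGasKro18} and then used as a black box in the running-time analysis of Theorem~\ref{thm:line_sinkhorn}. There is therefore no ``paper's own proof'' to compare against; your proposal is effectively a reconstruction of the argument in the cited references rather than of anything appearing in this paper.

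As a sketch of the cited arguments, your outline is broadly faithful for steps (i)--(ii): the dual potential $\phi$, the identification of Sinkhorn updates with exact block coordinate minimization, the per-step decrease identity $\phi^{(k)} - \phi^{(k+1)} = \KL{\p}{B^{(k)}\bone}$, and the $O(\log \tfrac{n}{\min_{ij}\tilde K_{ij}})$ bound on the initial suboptimality are exactly the ingredients of the \citet{AltWeeRig17} analysis. Where your sketch becomes vague is precisely the place you flag as the principal obstacle: the passage from $\delta^{-2}$ to $\delta^{-1}$. Your description (``coupling the KL progress with the residual dual gap and exploiting a strong-convexity-type property of $\phi$'') does not actually pin down a mechanism, and the phrasing suggests something like a linear-convergence argument, which is not what \citet{DvuGasKro18} do. If you intend to supply a genuine proof here rather than defer to the citation, that step needs to be made concrete; as written it is a placeholder rather than an argument. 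But since the paper itself simply cites the result, a citation is all that is required in context.
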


Lemma~\ref{lem:sink_cost} establishes that computing the approximate cost $\hat W$ requires $O(n + \Tmult)$ additional time.
To obtain the running time claimed in Theorem~\ref{thm:line_sinkhorn}, it therefore suffices to use the fact that $\log \tfrac{1}{\min_{ij} \tilde{K}_{ij}} \leq \log \tfrac{1}{\min_{ij} K_{ij}} + \|\log K - \log \tilde{K}\|_{\infty} \leq \eta \|C\|_{\infty} + \eps'$.

The remainder of the section is devoted to proving the error bounds in Theorem~\ref{thm:line_sinkhorn}.
Subsection~\ref{subsec:sink:lip} proves stability bounds for using an approximate kernel matrix, Subsection~\ref{subsec:sink:marg} proves stability bounds for using an approximate Sinkhorn projection, and then Subsection~\ref{subsec:sink:proof} combines these results to prove the error bounds in Theorem~\ref{thm:line_sinkhorn}.

\subsection{Using an approximate kernel matrix}\label{subsec:sink:lip}
Here we present the first ingredient for the proof of Theorem~\ref{thm:line_sinkhorn}: that Sinkhorn projection is Lipschitz with respect to the logarithm of the matrix to be scaled. If we view Sinkhorn projection as a saddle-point approximation to a Gibbs distribution over the vertices of $\Coup$~\citep[see discussion by][]{KosYui94}, then this result is analogous to the fact that the total variation between Gibbs distributions is controlled by the $\ell_\infty$ distance between the energy functions~\citep{Sim79}.
\begin{prop}\label{prop:sink-perturbation-mat}
For any $\p, \q \in \Delta_n$ and any $K, \tilde K \in \RR_+^{n \times n}$,
\begin{equation*}
\|\Sink(K) - \Sink(\tilde K)\|_1 \leq \|\log K - \log \tilde K\|_\infty\,.
\end{equation*}
\end{prop}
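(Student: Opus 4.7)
The plan is to exploit the fact that $\Sink(K)$ and $\Sink(\tilde K)$ are both minimizers of strictly convex problems on the compact set $\Coup \subseteq \Delta_{n\times n}$, and that the negative Shannon entropy is $1$-strongly convex with respect to the $\ell_1$ norm on $\Delta_{n \times n}$ (this is exactly Pinsker's inequality, restated as the fact that the Bregman divergence of $-H$, namely the KL divergence, dominates $\tfrac{1}{2}\|\cdot\|_1^2$). Since $\Obj$ and the structure of the argument are irrelevant here, it is enough to work directly with Proposition~\ref{prop:kkt}, so I may assume the nontrivial case in which $K$ and $\tilde K$ have strictly positive entries (otherwise $\|\log K - \log \tilde K\|_\infty = \infty$ and the bound is vacuous).

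Concretely, write $P \defeq \Sink(K)$ and $\tilde P \defeq \Sink(\tilde K)$, and define
\[
F(Q) \defeq \langle -\log K, Q\rangle - H(Q), \qquad \tilde F(Q) \defeq \langle -\log \tilde K, Q\rangle - H(Q),
\]
so that by Proposition~\ref{prop:kkt}, $P = \argmin_{Q \in \Coup} F(Q)$ and $\tilde P = \argmin_{Q \in \Coup} \tilde F(Q)$. First I would invoke strong convexity of $-H$ together with first-order optimality at $P$ (resp.\ at $\tilde P$) relative to the feasible perturbation $\tilde P - P$ (resp.\ $P - \tilde P$) to obtain the two inequalities
\[
F(\tilde P) - F(P) \;\geq\; \tfrac{1}{2}\|\tilde P - P\|_1^2, \qquad \tilde F(P) - \tilde F(\tilde P) \;\geq\; \tfrac{1}{2}\|\tilde P - P\|_1^2.
\]

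Next I would add these two inequalities. The entropic terms cancel, leaving
\[
\langle \log \tilde K - \log K,\; \tilde P - P\rangle \;\geq\; \|\tilde P - P\|_1^2.
\]
Applying Hölder's inequality to the left-hand side bounds it by $\|\log K - \log \tilde K\|_\infty \,\|\tilde P - P\|_1$, and dividing by $\|\tilde P - P\|_1$ (the case where it vanishes is trivial) gives the claim.

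The main subtlety, and essentially the only nonroutine step, is the appeal to strong convexity of $-H$ in $\ell_1$ on $\Delta_{n \times n}$. This is not automatic from the usual strong convexity of $-H$ on a single simplex; however, since $P, \tilde P \in \Coup \subseteq \Delta_{n \times n}$, both are genuine joint probability distributions, and Pinsker's inequality applied to the KL divergence—which is the Bregman divergence of $-H$—delivers exactly what is needed. I would briefly justify this bridge and then the inequality chain above completes the proof.
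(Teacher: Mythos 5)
Your proof is correct and takes essentially the same approach as the paper: the paper records $1$-strong convexity of $-H$ in $\ell_1$ (citing Bubeck; you instead note it is Pinsker) and then invokes a general perturbation lemma (Lemma~\ref{lem:sc-perturbation}) whose proof is exactly your step of adding the two strong-convexity inequalities at the two optimizers and finishing with H\"older and the dual norm. You have merely inlined that lemma rather than cite it.
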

\begin{proof}
Note that $-H(P)$ is $1$-strongly convex with respect to the $\ell_1$ norm~\citep[Section~4.3]{Bub15}.
By Proposition~\ref{prop:kkt}, $\Sink(K) = \argmin_{P \in \Coup} \langle -\log K, P \rangle - H(P)$ and $\Sink(\tilde K) = \argmin_{P \in \Coup} \langle - \log \tilde K, P \rangle - H(P)$.
The claim follows upon applying Lemma~\ref{lem:sc-perturbation}.
\end{proof}

In words, Proposition~\ref{prop:sink-perturbation-mat} establishes that the Sinkhorn projection operator is Lipschitz on the ``logarithmic scale.''
By contrast, we show in Appendix~\ref{app:sink-stable} that the Sinkhorn projection does not satisfy a Lipschitz property in the standard sense for any choice of matrix norm.

%

\subsection{Using an approximate Sinkhorn projection}\label{subsec:sink:marg}
Here we present the second ingredient for the proof of Theorem~\ref{thm:line_sinkhorn}: that the objective function $V_C(\cdot)$ for Sinkhorn distances in~\eqref{eq:peta} is stable with respect to the target row and column sums $\p$ and $\q$ of the outputted matrix.

\begin{prop}\label{prop:sum-stability}
Given $\tilde K \in \RR_{> 0}^{n \times n}$, let $\tilde C \in \RR^{n \times n}$ satisfy $\tilde C_{ij} \defeq - \eta^{-1} \log \tilde K_{ij}$.
Let $D_1$ and $D_2$ be positive diagonal matrices such that $\tilde P \defeq D_1 \tilde K D_2 \in \Delta_{n \times n}$, with $\delta \defeq \|\p - \tilde P \bone\|_1 + \|\q - \tilde P^\top \bone\|_1$. If $\delta \leq 1$, then
\begin{equation*}
|\VCtilde(\Sink(\tilde K)) - \VCtilde(\tilde P)| \leq \delta \Ctildeinf + \eta^{-1} \delta \log \frac{2n}{\delta}\,,
\end{equation*}
\end{prop}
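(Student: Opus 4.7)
My plan hinges on the observation that \emph{both} $\tilde P$ and $P^* \defeq \Sink(\tilde K)$ arise as Sinkhorn projections of the same matrix $\tilde K$, just onto different coupling polytopes. Indeed, since $\tilde P = D_1 \tilde K D_2 \in \Delta_{n\times n}$, its row and column sums $\tilde \p \defeq \tilde P \bone \in \Delta_n$ and $\tilde \q \defeq \tilde P^\top \bone \in \Delta_n$ are probability vectors; by the uniqueness part of \cref{prop:kkt}, $\tilde P = \Sink_{\cM(\tilde \p, \tilde \q)}(\tilde K)$ and hence $\tilde P$ minimizes $V_{\tilde C}(\cdot)$ over $\cM(\tilde \p, \tilde \q)$. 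Meanwhile $P^*$ minimizes $V_{\tilde C}(\cdot)$ over $\cM(\p, \q)$, with $\|\p - \tilde \p\|_1 + \|\q - \tilde \q\|_1 = \delta \le 1$. The proposition therefore becomes a sensitivity result for the optimal value of the entropic OT problem under an $O(\delta)$-perturbation of its marginals.

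To bound each direction, I will compare the two minimizers through a rounding step. To upper bound $V_{\tilde C}(P^*) - V_{\tilde C}(\tilde P)$, round $\tilde P$ into $\cM(\p, \q)$ using a procedure akin to $\Round$ of Altschuler--Weed--Rigollet, producing $P' \in \cM(\p, \q)$ with $\|P' - \tilde P\|_1 = O(\delta)$; by optimality of $P^*$ over $\cM(\p, \q)$,
\[
V_{\tilde C}(P^*) - V_{\tilde C}(\tilde P) \;\le\; V_{\tilde C}(P') - V_{\tilde C}(\tilde P) \;=\; \langle \tilde C,\, P' - \tilde P \rangle \;-\; \eta^{-1}\bigl(H(P') - H(\tilde P)\bigr).
\]
Symmetrically, rounding $P^*$ into $\cM(\tilde \p, \tilde \q)$ to some $P''$ with $\|P'' - P^*\|_1 = O(\delta)$ and using optimality of $\tilde P$ on $\cM(\tilde \p, \tilde \q)$ handles the reverse direction. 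In both decompositions, H\"older's inequality bounds the cost term by $\|\tilde C\|_\infty \cdot O(\delta) = O(\delta \Ctildeinf)$; the entropy term will contribute the $\eta^{-1}\delta \log(2n/\delta)$ piece via a Fannes-type continuity estimate.

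The main obstacle is obtaining the entropy continuity estimate in the precise form $|H(P') - H(\tilde P)| \le \delta \log(2n/\delta)$. A naive application of Fannes' inequality to distributions on the $n^2$-atom alphabet $\Delta_{n\times n}$ yields only $O(\delta \log(n^2/\delta))$, losing an extra $\log n$ factor. I expect to save this factor by exploiting the chain-rule decomposition $H(P) = H(\text{row marginal}) + \sum_i (P\bone)_i H(\text{row-}i \text{ conditional})$: the marginal entropy lives on $\Delta_n$ (which gives exactly the target $\log n$ scaling from Fannes), while the conditional term is a $\delta$-weighted sum of entropies of distributions on $[n]$ and so contributes only an $O(\delta \log n)$ error as well. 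A secondary subtlety is that the off-the-shelf $\Round$ procedure inflates $\|P' - \tilde P\|_1$ by a factor of $2$, which would propagate into the final constants; if necessary, I will tighten this to $\|P' - \tilde P\|_1 \le \delta$ via a direct bipartite mass-reallocation argument (transporting row-excess mass and column-excess mass independently), or otherwise absorb the factor of $2$ into the $\log$ term at the cost of slightly looser implicit constants.
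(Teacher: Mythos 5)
Your overall strategy is the same as the paper's. You observe that $\tilde P$ and $P^* \defeq \Sink(\tilde K)$ are the minimizers of $V_{\tilde C}$ over $\Couptilde$ and $\Coup$ respectively, then bound $|V_{\tilde C}(P^*) - V_{\tilde C}(\tilde P)|$ by combining a ``the two feasible sets are $\ell_1$-Hausdorff-close'' step with a ``$V_{\tilde C}$ has a modulus of continuity'' step. The paper packages precisely this argument abstractly: \cref{lem:hausdorff} gives $d_H(\Coup,\Couptilde)\le\delta$, \cref{lem:L1} gives $|V_{\tilde C}(P)-V_{\tilde C}(Q)|\le\omega(\|P-Q\|_1)$ with $\omega(\delta)=\delta\Ctildeinf+\eta^{-1}\delta\log\tfrac{2n}{\delta}$, and the generic comparison-of-minima \cref{lem:round} then concludes. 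Your two-sided rounding argument is exactly the proof of \cref{lem:round} unwound by hand.

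Two specific remarks on the details you flag. First, your concern about the $\Round$ procedure inflating the $\ell_1$ perturbation by a factor of $2$ is already resolved in the paper: \cref{lem:round-alg} proves a slightly sharpened version of Lemma~7 of \citet{AltWeeRig17} with $\|G-F\|_1\le\|F\bone-\p\|_1+\|F^\top\bone-\q\|_1$, no factor of $2$, and \cref{lem:hausdorff} is derived directly from that. Second, and more substantively, your proposed chain-rule route to the entropy continuity bound is where the real risk lies. The paper's \cref{lem:ent_cont} cites the sharp Fannes-type inequality of Ho and Yeung (also Audenaert), $|H(P)-H(Q)|\le\tfrac{\delta}{2}\log(n^2-1)+h(\delta/2)$ on the $n^2$-atom alphabet, which then gives exactly $\delta\log\tfrac{2n}{\delta}$; the leading coefficient $\tfrac12$ in front of $\log(n^2-1)$ is what makes the constant come out to $1$. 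Under the chain-rule decomposition $H(P)=H(r)+\sum_i r_i H(P_{i\cdot}/r_i)$, the marginal term contributes roughly $\tfrac{\delta}{2}\log n$ but the conditional term contributes a comparable $O(\delta\log n)$ all by itself (changing weights $r_i\to r_i'$ under a $\log n$-bounded function already costs $\tfrac{\delta}{2}\log n$, and the shift in the normalized conditionals $P_{i\cdot}/r_i$ costs more), so the sum overshoots the target constant. You'd end up with something like $\tfrac{3}{2}\delta\log n$ or worse, which does not match the stated bound $\delta\Ctildeinf+\eta^{-1}\delta\log\tfrac{2n}{\delta}$; you would have to restate the proposition with a larger constant, which would propagate into the choice of $\eps'$ in \cref{thm:line_sinkhorn} and \cref{lem:annoying}. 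Simply invoking the sharp bound on $\Delta_{n\times n}$, as the paper does, is both shorter and gives the right constant.
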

\begin{proof}
Write $\tilde \p \defeq \tilde P \bone$ and $\tilde \q \defeq \tilde P^\top \bone$.
Then $\tilde P = \Sinkcouptilde(\tilde K)$ by the definition of the Sinkhorn projection.
If we write $P^* \defeq \Sinkcoup(\tilde K)$, then
Proposition~\ref{prop:kkt} implies
\begin{align*}
\tilde P & = \argmin_{P \in \Couptilde} \VCtilde(P) \\
P^* & = \argmin_{P \in \Coup} \VCtilde(P)
\end{align*}
Lemma~\ref{lem:hausdorff} establishes that the Hausdorff distance between $\Couptilde$ and $\Coup$ with respect to $\|\cdot\|_1$ is at most $\delta$, and by Lemma~\ref{lem:L1}, the function $\VCtilde$ satisfies
\begin{equation*}
|\VCtilde(P) - \VCtilde(Q)| \leq \omega(\|P - Q\|_1)\,,
\end{equation*}
where $\omega(\delta) \defeq \delta \|\Ctilde\|_\infty + \eta^{-1} \delta \log \frac{2n}{\delta}$ is increasing and and continuous on $[0, 1]$ as long as $n \geq 2$.
Applying Lemma~\ref{lem:round} then yields the claim.
\end{proof}

\subsection{Proof of Theorem~\ref{thm:line_sinkhorn}}\label{subsec:sink:proof}

The runtime claim was proven in Section~\ref{sec:sink}; here we prove the error bounds. We first show~\eqref{eq:obj_bound}. Define $\tilde C \defeq - \eta^{-1} \log \tilde K$. Since $\Peta = \Sink(K)$ by Corollary~\ref{cor:sink-basic}, we can decompose the error as
\begin{subequations}
\label{eq:err}
\begin{align}
|\VC(\Peta) - \VC(\tilde P)|
& \leq
\abs{\VC\left(\Sink\left(K \right)\right) - \VC\left(\Sink\left(\Ktilde \right)\right)}
\label{eq:err-nystrom}
\\ &+
\abs{\VC\left(\Sink\left(\Ktilde \right)\right) - \VCtilde\left(\Sink\left(\Ktilde \right)\right)}
\label{eq:err-cost-1}
\\ &+
\abs{\VCtilde\left(\Sink\left(\Ktilde \right)\right) - \VCtilde\left(\tilde P\right)}
\label{eq:err-sink}
\\ &+
\abs{\VCtilde\left(\tilde P \right) - \VC\left(\tilde P \right)}.
\label{eq:err-cost-2}
\end{align}
By Proposition~\ref{prop:sink-perturbation-mat} and Lemma~\ref{lem:L1}, term~\eqref{eq:err-nystrom} is at most $\eps' \Cinf + \eta^{-1} \eps' \log \frac{2n}{\eps'}$.
Proposition~\ref{prop:sum-stability} implies that~\eqref{eq:err-sink} is at most $\eps' \Ctildeinf + \eta^{-1} \eps' \log \frac{2n}{\eps'}$.
Finally, by Lemma~\ref{lem:cost-perturb}, terms~\eqref{eq:err-cost-1} and~\eqref{eq:err-cost-2} are each at most $\eta^{-1} \eps'$. Thus
\begin{align*}
|\VC(\Sink(K)) - \VC(\tilde P)|
& \leq
\left(\eps' \Cinf + \eta^{-1} \eps' \log \frac{2n}{\eps'}\right) + 
\left(\eps' \Ctildeinf + \eta^{-1} \eps' \log \frac{2n}{\eps'}\right) +
2\eta^{-1} \eps' \\
& \leq
2\eps' \Cinf + \eta^{-1} (\eps'^2 + 2 \eps') + 2\eta^{-1} \eps' \log \frac{2n}{\eps'} \\
& \leq \eps'(2 \Cinf + 3 \eta^{-1}) + 2\eta^{-1} \eps' \log \frac{2n}{\eps'}\,,
\end{align*}
where the second inequality follows from the fact that $\Ctildeinf \leq \Cinf + \|C - \tilde C\|_\infty \leq \Cinf + \eta^{-1} \eps'$. The proof of \eqref{eq:obj_bound} is then complete by invoking Lemma~\ref{lem:annoying}.

To prove \eqref{eq:cost_bound}, by \cref{lem:sink_cost} we have $\hat W = V_{\tilde C}(\tilde P)$, and by Lemma~\ref{lem:cost-perturb}, we therefore have $|\hat W - V_C(\tilde P)| \leq \eta^{-1} \eps' \leq \tfrac{\eps}{2}$.
\end{subequations}

\section{Proof of Theorem~\ref{thm:main}}\label{sec:alg}
In this section, we combine the results of the preceding three sections to prove Theorem~\ref{thm:main}. 

\paragraph*{Error analysis.}
First, we show that
\begin{align}
\abs{V_C(\Phat) - \Obj} = \abs{ \VC(\hat{P}) - \VC(\Peta) } \leq \eps \,.
\label{eq:pf-main:err}
\end{align}
We do so by bounding $| \VC(\Phat) - \VC(\tilde{P})| + |\VC(\tilde{P}) - \VC(\Peta)|$, where $\tilde P \defeq D_1 \Kt D_2$ is the approximate projection computed in \cref{line:sinkhorn}.
By \cref{lem:nystrom-doubling}, the output of \cref{line:nystrom} satisfies 
$\|K - \Kt\|_{\infty} \leq \tfrac{\eps'}{2}e^{-4 \eta R^2}$,
and by \cref{lem:trace-to-log} this implies that $\|\log K - \log \Kt\|_{\infty} \leq \eps'$.
Therefore, by Theorem~\ref{thm:line_sinkhorn}, $|\VC(\tilde{P}) - \VC(\Peta)| \leq \tfrac{\eps}{2}$.
Moreover, by Lemma~\ref{lem:round-alg}, $\|\tilde{P} - \Phat\|_1 \leq \|\tilde{P}\bone - \p \|_1 + \|\tilde{P}^T \bone - \q \|_1 \leq \eps'$, thus by an application of Lemmas~\ref{lem:L1} and~\ref{lem:annoying}, we have that $| \VC(\Phat) - \VC(\tilde{P})| \leq \tfrac{\eps}{2}$.
Therefore $| \VC(\Phat) - \VC(\tilde{P})| + |\VC(\tilde{P}) - \VC(\Peta)| \leq \eps$, which proves~\eqref{eq:pf-main:err} and thus also~\eqref{eq:thm-main:close-obj}.

Next, we prove~\eqref{eq:thm-main:close-kl}. By Proposition~\ref{prop:kkt}, $\Peta = \argmin_{P \in \Coup} \VC(P)$. Thus 
\[
\eps \geq \VC(\Phat) - \VC(\Peta) = \nabla \VC(\Peta)(\Phat - \Peta)  + \eta^{-1}\KL{\Phat}{\Peta}
\geq \eta^{-1}\KL{\Phat}{\Peta}
\,.
\]
where above the first inequality is by~\eqref{eq:pf-main:err}, the equality is by Lemma~\ref{lem:fo-kl}, and the final inequality is by first-order KKT conditions which give $\nabla \VC(\Peta)(\Phat - \Peta) \geq 0$. After rearranging, we conclude that $\KL{\Phat}{\Peta} \leq \eta \eps$, proving~\eqref{eq:thm-main:close-kl}.

Finally, by Theorem~\ref{thm:line_sinkhorn}, $|\hat W - V_C(\tilde P)| \leq \tfrac{\eps}{2}$, and we have already shown in our proof of \eqref{eq:thm-main:close-obj} that $|\VC(\tilde P) - \VC(\Peta)| \leq \tfrac{\eps}{2}$,
which proves~\eqref{eq:thm-main:approx-val}.

\paragraph*{Runtime analysis.}
Let $r$ denote the rank of $\tilde{K}$. Note that $r$ is a random variable. By Lemma~\ref{lem:nystrom-doubling}, we have that
\begin{align}
\PP\big(
r \leq c r^*(X, \eta, \eps') \log \tfrac{n}{\delta} 
\big)
\geq 1 - \delta.
\label{eq:pf-main:r}
\end{align}
Now by Lemma~\ref{lem:nystrom-doubling}, the $\AdaptiveNsytrom$ algorithm in line~\ref{line:nystrom} runs in time $O(nr^2)$, and moreover further matrix-vector multiplications with $\tilde{K}$ can be computed in time $O(nr)$. Thus the $\Sinkhorn$ algorithm in line~\ref{line:sinkhorn} runs in time $\tilde O(nr\eta R^2 \eps'^{-1})$ by Theorem~\ref{thm:line_sinkhorn},
and the $\Round$ algorithm in line~\ref{line:round} runs in time $O(nr)$ by Lemma~\ref{lem:round-alg}. Combining these bounds and using the choice of $\eps'$ completes the proof of Theorem~\ref{thm:main}.
\qed
\section{Experimental results}\label{sec:experimental}

In this section we empirically validate our theoretical results. 
To run our experiments, we used a desktop with 32GB ram and 16 cores Xeon E5-2623 3GHz. The code is optimized in terms of matrix-matrix and matrix-vector products using BLAS-LAPACK primitives.

\begin{figure*}[t]
	\centering
	\includegraphics[trim={5cm 0.5cm 3.5cm 0.5cm},width=\linewidth,clip]{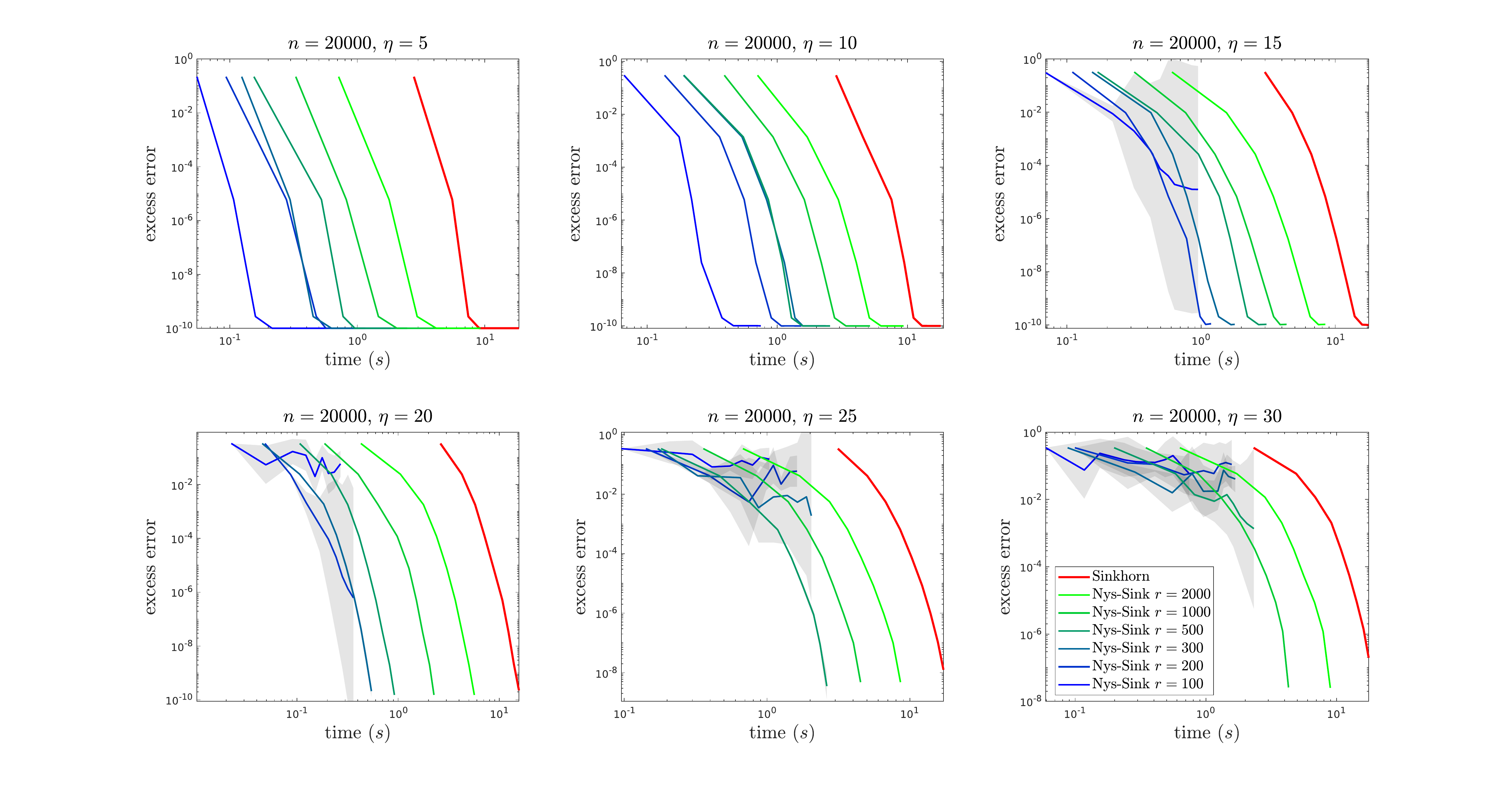}
	\caption{Time-accuracy tradeoff for \textsc{Nys-Sink} and \textsc{Sinkhorn}, for a range of regularization parameters $\eta$ (each corresponding to a different Sinkhorn distance $W_{\eta}$) and approximation ranks $r$. Each experiment has been repeated $50$ times; the variance is indicated by the shaded area around the curves. Note that curves in the plot start at different points corresponding to the time required for initialization.}
	\label{fig:err-time}
\end{figure*}

\cref{fig:err-time} plots the time-accuracy tradeoff for \textsc{Nys-Sink}, compared to the standard \textsc{Sinkhorn} algorithm. This experiment is run on random point clouds of size $n \approx 20000$, which corresponds to cost matrices of dimension approximately $20000 \times 20000$.~\cref{fig:err-time} shows that \emph{\textsc{Nys-Sink} is consistently orders of magnitude faster to obtain the same accuracy}.

\begin{figure*}[t]
	\centering
	\includegraphics[scale=0.4]{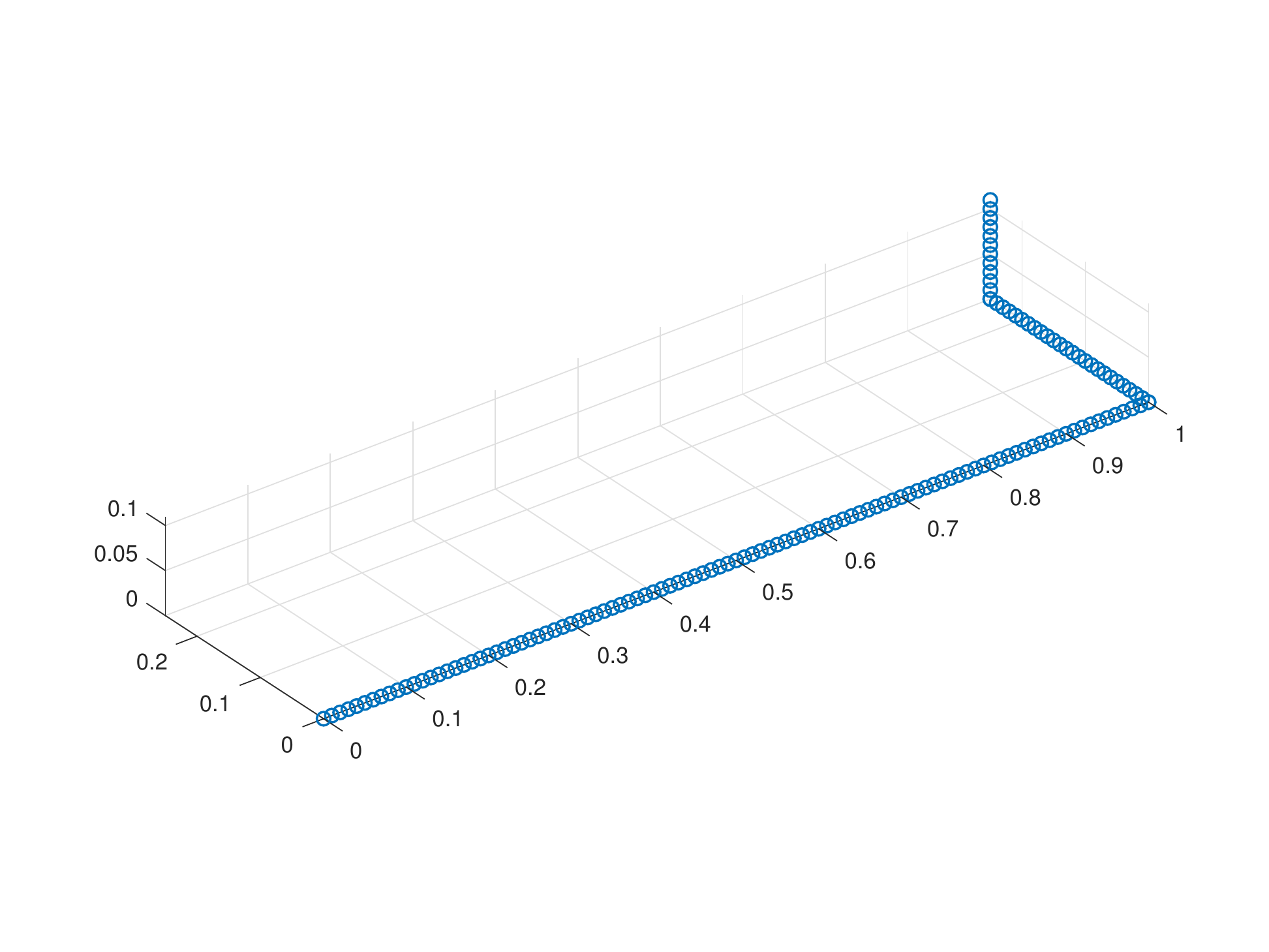}
	\includegraphics[scale=0.4]{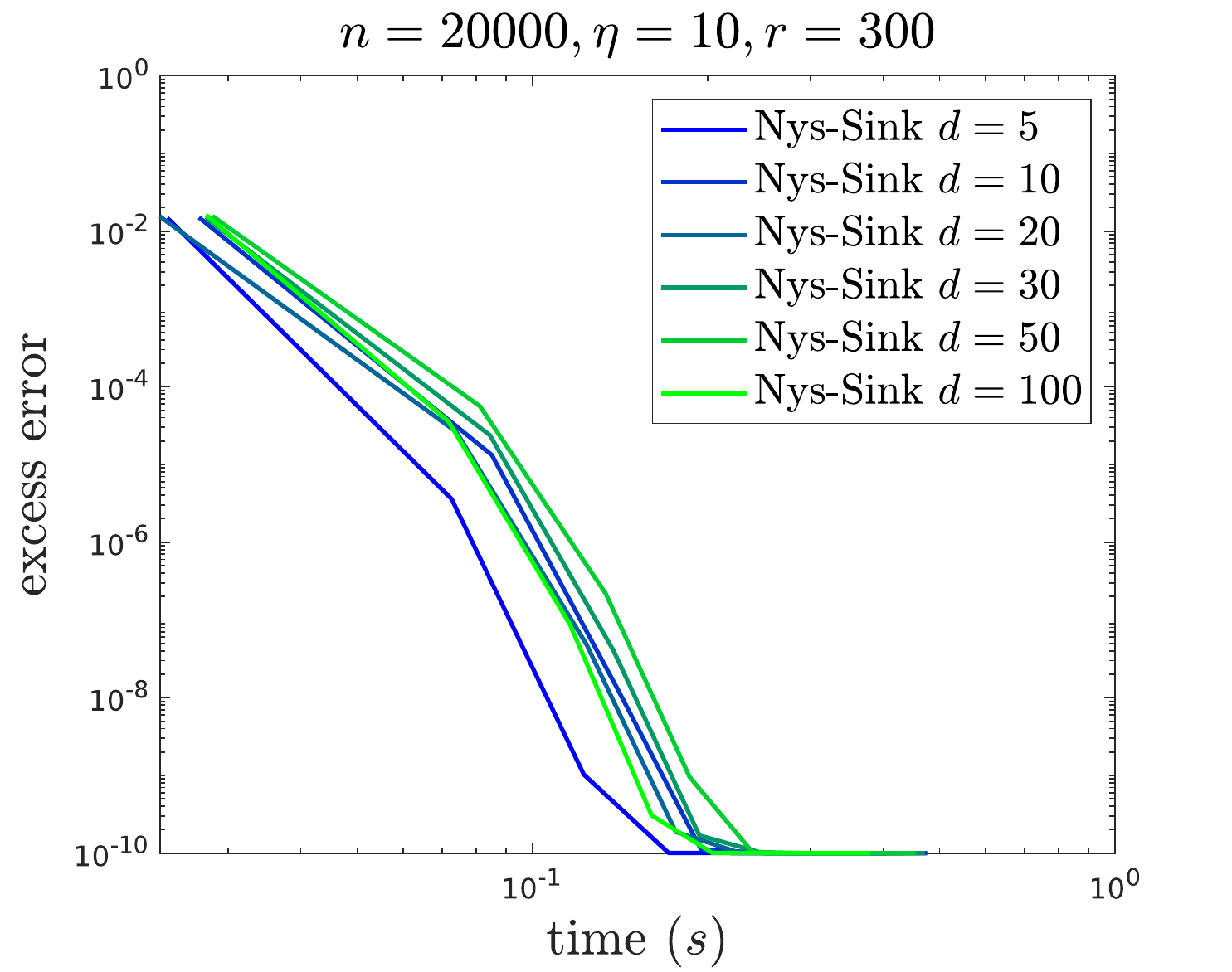}
	\caption{Left: one-dimensional curve embedded in $\RR^d$, for $d=3$. For $d \geq 4$, the curve we use in dimension $d$ is obtained from the curve we use in the dimension $d-1$ by adding a perpendicular segment of length $1/d^2$ to one endpoint. Right: Accuracy of \textsc{Nys-Sink} as a function of running time, for different ambient dimensions. Each experiment uses a fixed approximation rank $r = 300$.}
	\label{fig:ambient-dim}
\end{figure*}

Next, we investigate \textsc{Nys-Sink}'s dependence on the intrinsic dimension and ambient dimension of the input. This is done by running \textsc{Nys-Sink} on distributions supported on $1$-dimensional curves embedded in higher dimensions, illustrated in~\cref{fig:ambient-dim}, left.~\cref{fig:ambient-dim}, right, indicates that an approximation rank of $r=300$ is sufficient to achieve an error smaller than $10^{-4}$ for any ambient dimension $5 \leq d \leq 100$. This empirically validates the result in \cref{thm:nystrom-manifold}, namely that \emph{the approximation rank -- and consequently the computational complexity of \textsc{Nys-Sink} -- is independent of the ambient dimension.}

\begin{table}[t]
	\centering
	\includegraphics[width=0.5\textwidth]{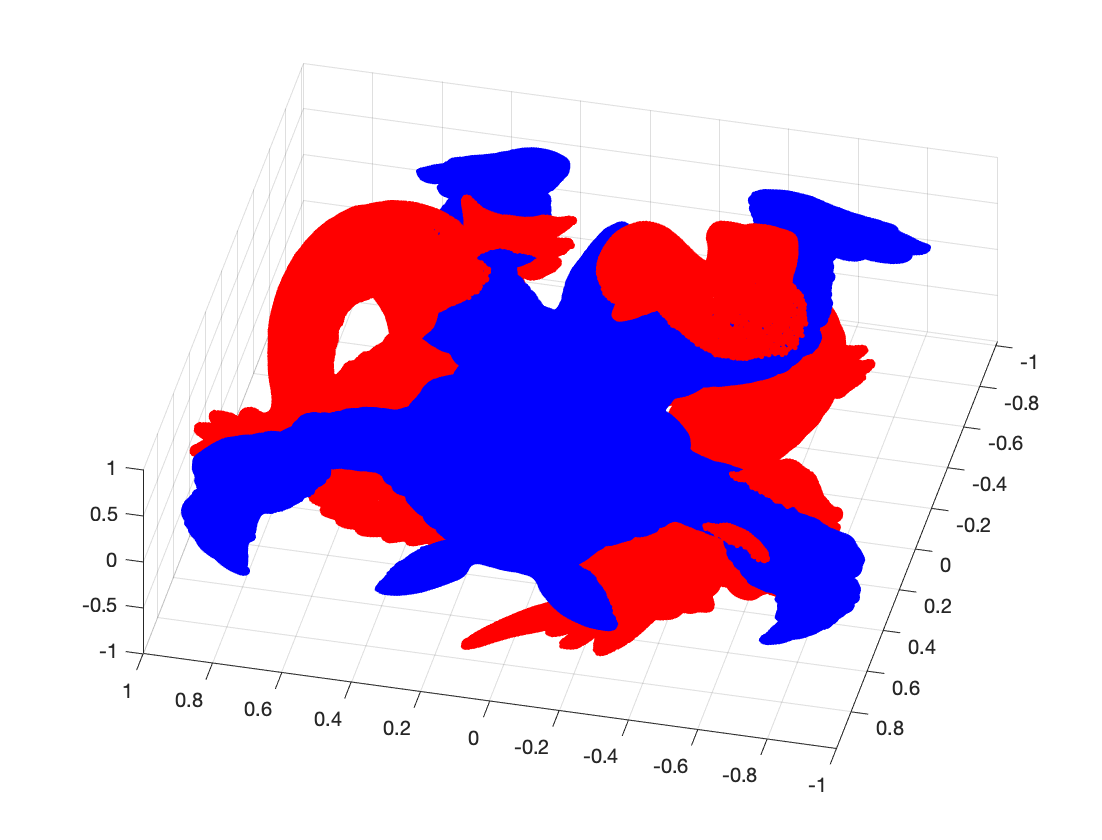}
	\begin{tabular}{l|c c}
		\toprule 
		{Experiment~1: $n \approx 3 \times 10^5$}  & $W_\eta$ & time (s) \\ 
		\midrule 
		Nys-Sink $(r=2000, T=20)$ & 0.087 $\pm$ 0.008 & 0.4 $\pm$ 0.1  \\
		Dual-Sink + Annealing $(\alpha=0.95)$ & 0.087 & 35.4  \\
		Dual-Sink Multiscale + Annealing $(\alpha=0.95)$ & 0.090 & 3.4\\
		\bottomrule
	\end{tabular}
	\begin{tabular}{l|c c}
		\toprule 
		{Experiment~2: $n \approx 3.8 \times 10^6$}  & $W_\eta$ & time (s) \\ 
		\midrule 
		Nys-Sink $(r=2000, T=20)$ & 0.11 $\pm$ 0.01 & 6.3 $\pm$ 0.8  \\
		Dual-Sink + Annealing $(\alpha=0.95)$ & 0.10 & 1168  \\
		Dual-Sink Multiscale + Annealing $(\alpha=0.95)$ & 0.11 & 103.6 \\
		\bottomrule
	\end{tabular}
	\caption{Comparison of our proposed algorithm to existing, highly-optimized GPU-based algorithms, on a large-scale computer graphics benchmark dataset.}
	\label{tab:graphics}
\end{table}

Finally, we evaluate the performance of our algorithm on a benchmark dataset used in computer graphics: we measure Wasserstein distance between 3D cloud points from ``The Stanford 3D Scanning Repository''\footnote{\url{http://graphics.stanford.edu/data/3Dscanrep/}}. In the first experiment, we measure the distance between \texttt{armadillo} ($n=1.7\times 10^5$ points) and \texttt{dragon} (at resolution 2, $n=1.0\times 10^5$ points), and in the second experiment we measure the distance between \texttt{armadillo} and \texttt{xyz-dragon} which has more points ($n=3.6 \times 10^6$ points). The point clouds are centered and normalized in the unit cube. The regularization parameter is set to $\eta = 15$, reflecting the moderate regularization regime typically used in practice.

We compare our algorithm (Nys-Sink)---run with approximation rank $r=2000$ for $T=20$ iterations on a GPU---against two algorithms implemented in the library \texttt{GeomLoss}\footnote{\url{http://www.kernel-operations.io/geomloss/}}. These algorithms are both highly optimized and implemented for GPUs. They are: (a) an algorithm based on an annealing heuristic for $\eta$ (controlled by the parameter $\alpha$, such that at each iteration $\eta_t = \alpha \eta_{t-1}$, see \citealp{kosowsky1994invisible}) and (b) a multiresolution algorithm based on coarse-to-fine clustering of the dataset together with the annealing heuristic~\citep{schmitzer2019stabilized}. Table~\ref{tab:graphics} reports the results, which demonstrate that our method is comparable in terms of precision, and has computational time that is orders of magnitude smaller than the competitors. We note the parameters $r$ and $T$ for \textsc{Nys-Sink} are chosen by hand to balance precision and time complexity.

We note that in these experiments, instead of using Algorithm~\ref{alg:adaptive-nystrom} to choose the rank adaptively, we simply run experiments with a small fixed choice of $r$. As our experiments demonstrate, \textsc{Nys-Sink} achieves good empirical performance even when the rank $r$ is smaller than our theoretical analysis requires. Investigating this empirical success further is an interesting topic for future study.

\newpage
\appendix
\section{Pseudocode for subroutines}

\subsection{Pseudocode for Sinkhorn algorithm}\label{app:sink-code}
As mentioned in the main text, we use the following variant of the classical Sinkhorn algorithm for our theoretical results. Note that in this paper, $\tilde{K}$ is not stored explicitly but instead is stored in factored form. This enables the Sinkhorn algorithm to be implemented quickly since all computations using $\tilde{K}$ are matrix-vector multiplications with $\tilde{K}$ and $\tilde{K}^T$ (see discussion in~\ref{subsec:nystrom:overview} for details).

\begin{algorithm}[H]
\begin{algorithmic}[1]
\Require{$\tilde K$ (in factored form), $\p, \q \in \Delta_n$, $\delta > 0$}
\Ensure{Positive diagonal matrices $D_1, D_2 \in \RR^{n \times n}$, cost $\hat W$}
\State $\tau \gets \tfrac{\delta}{8}$, $D_1, D_2 \gets I_{n \times n}$, $k \gets 0$
\State $\p' \gets (1-\tau)\p + \frac{\tau}{n} \bone$, $\q' \gets (1-\tau)\q + \frac{\tau}{n} \bone$  \Comment Round $\p$ and $\q$
\While{$\|D_1 \tilde K D_2 \bone - \p'\|_1 + \|(D_1 \tilde K D_2)^\top \bone - \q'\|_1 \leq \tfrac{\delta}{2}$}
\State $k \gets k + 1$
\If{$k$ odd}
\State $(D_1)_{ii} \gets \p'_i/(\tilde K D_2 \bone)_i$ for $i = 1, \dots, n$.  \Comment Renormalize rows
\Else
\State $(D_2)_{jj} \gets \q'_j/((D_1 \tilde K)^\top \bone)_j$ for $j = 1, \dots, n$. \Comment Renormalize columns
\EndIf
\EndWhile
\State $\hat W \gets \sum_{i=1}^n \log (D_1)_{ii} (D_1 \tilde K D_2)\bone)_i + \sum_{j=1}^n \log (D_2)_{jj} ((D_1 \tilde K D_2)^\top \bone)_j$
\State \Return{$D_1$, $D_2$, $\hat W$}
\end{algorithmic}
\caption{\textsc{Sinkhorn}}
\label{alg:sinkhorn}
\end{algorithm}

\begin{applemma}\label{lem:sink_cost}
Let $\tilde C := - \eta^{-1} \log \tilde K$ and let $\tilde P := D_1 \tilde K D_2$, where $D_1$ and $D_2$ are the scaling matrices output by \textsc{Sinkhorn}.
Then the output $\hat W$ of \textsc{Sinkhorn} satisfies
$\hat W = V_{\tilde C}(\tilde P)$.
Moreover, computing $\hat W$ takes time $O(\Tmult + n)$, where $\Tmult$ is the time required to take matrix-vector products with $\tilde K$ and $\tilde K^\top$.
\end{applemma}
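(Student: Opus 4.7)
The plan is to verify the identity $\hat W = V_{\tilde C}(\tilde P)$ by a direct algebraic expansion, and then bound the runtime by tracing through which operations the pseudocode performs.

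First, since $\tilde P = D_1 \tilde K D_2$ with $D_1, D_2$ diagonal, and $\tilde K = e^{-\eta \tilde C}$ entrywise, taking the entrywise logarithm yields
\begin{equation*}
\log \tilde P_{ij} \;=\; \log (D_1)_{ii} \;+\; \log (D_2)_{jj} \;-\; \eta\, \tilde C_{ij}\,.
\end{equation*}
Substituting this into the expansion
\begin{equation*}
V_{\tilde C}(\tilde P) \;=\; \langle \tilde C, \tilde P\rangle \;-\; \eta^{-1} H(\tilde P) \;=\; \langle \tilde C, \tilde P\rangle \;+\; \eta^{-1}\sum_{ij} \tilde P_{ij} \log \tilde P_{ij}\,,
\end{equation*}
the contribution $-\eta \tilde C_{ij}$ inside the log produces the term $-\langle \tilde C, \tilde P\rangle$, which cancels exactly against the first summand of $V_{\tilde C}(\tilde P)$. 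The remaining two contributions can be grouped by marginals using $\sum_j \tilde P_{ij} = (\tilde P \bone)_i$ and $\sum_i \tilde P_{ij} = (\tilde P^\top \bone)_j$, giving
\begin{equation*}
V_{\tilde C}(\tilde P) \;=\; \eta^{-1}\sum_{i=1}^n \log (D_1)_{ii}\, (\tilde P \bone)_i \;+\; \eta^{-1}\sum_{j=1}^n \log (D_2)_{jj}\, (\tilde P^\top \bone)_j\,,
\end{equation*}
which matches the quantity $\hat W$ assembled in the final line of Algorithm~\ref{alg:sinkhorn} (interpreting its outer factor of $\eta^{-1}$ consistently with the definition of $V_{\tilde C}$).

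For the runtime, I would observe that the only non-trivial pieces to evaluate are the two marginals $(\tilde P \bone)_i$ and $(\tilde P^\top \bone)_j$. Each is formed by first applying the diagonal matrix $D_2$ (resp.\ $D_1$) to $\bone$ in $O(n)$ time, then performing one matrix-vector product with $\tilde K$ (resp.\ $\tilde K^\top$) at cost $\Tmult$, and finally applying the remaining diagonal matrix in $O(n)$ time. The two scalar inner products against the vectors $(\log(D_1)_{ii})_i$ and $(\log (D_2)_{jj})_j$ each cost $O(n)$. Summing, the total work is $O(\Tmult + n)$.

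The only real obstacle is bookkeeping: one must be careful that the linear-in-$\tilde C$ cross term generated by expanding the entropy exactly cancels the $\langle \tilde C, \tilde P\rangle$ term, so that $\hat W$ never needs to touch $\tilde C$ (or equivalently, never needs to evaluate $\log \tilde K$ entrywise) and can be expressed purely through the scaling potentials $\log(D_1)_{ii}$, $\log(D_2)_{jj}$ and the marginals. This cancellation is what makes the efficient $O(\Tmult + n)$ computation possible---otherwise forming $\langle \tilde C, \tilde P\rangle$ would cost $\Omega(n^2)$ since $\tilde C$ has no low-rank structure.
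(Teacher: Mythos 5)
Your proposal is correct and follows the same direct expansion the paper uses: substitute $\log \tilde P_{ij} = \log(D_1)_{ii} + \log(D_2)_{jj} - \eta\tilde C_{ij}$ into $V_{\tilde C}(\tilde P)$, observe that the $\tilde C$ cross term cancels $\langle\tilde C,\tilde P\rangle$, and regroup by marginals. Your parenthetical about the missing $\eta^{-1}$ factor is a fair catch — the pseudocode in Algorithm~\ref{alg:sinkhorn} omits it, and the paper's own proof silently drops it between the second and third lines of the display, but the intended identity clearly carries that factor as you wrote it.
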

\begin{proof}
Then
\begin{align*}
    \langle \tilde C, \tilde P \rangle -\eta^{-1} H(\tilde P) & = \langle \tilde C, \tilde P \rangle + \eta^{-1} \sum_{i,j = 1}^n \tilde P_{ij} \log \tilde P_{ij} \\
    & = \langle \tilde C, \tilde P \rangle + \eta^{-1} \sum_{i, j = 1}^n \tilde P_{ij} (\log (D_1)_{ii} + \log (D_2)_{jj} - \eta \tilde C_{ij}) \\
    & = \sum_{i, j=1}^n \tilde P_{ij} \log (D_1)_{ii} + \sum_{i, j=1}^n \tilde P_{ij} \log (D_2)_{jj} \\
    & = \sum_{i=1}^n \log(D_1)_{ii} (\tilde P \bone)_i + \sum_{j=1}^n \log(D_2)_{jj} (\tilde P^\top \bone)_j = \hat W\,.
\end{align*}
Moreover, the matrices $\log(D_1)$ and $\log(D_2)$ can each be formed in $O(n)$ time, so computing $\hat W$ takes time $O(\Tmult + n)$, as claimed.
\end{proof}

\subsection{Pseudocode for rounding algorithm}\label{app:round-code}
For completeness, here we briefly recall the rounding algorithm $\textsc{Round}$ from~\citep{AltWeeRig17} and prove a slight variant of their Lemma 7 that we need for our purposes.
\par It will be convenient to develop a little notation. For a vector $x \in \R^n$, $\diag(x)$ denotes the $n \times n$ diagonal matrix with diagonal entries $[\diag(x)]_{ii} = x_i$. For a matrix $A$, $r(A) := A\bone$ and $c(A) := A^T \bone$ denote the row and column marginals of $A$, respectively. We further denote $r_i(A) = [r(A)]_i$ and similarly $c_j(A) := [c(A)]_j$. 

\begin{algorithm}[H]
\begin{algorithmic}[1]
\Require{$F \in \RR^{n \times n}$ and $\p,\q \in \Delta_n$}
\Ensure{$G \in \Coup$}
\State $X \gets \diag(x)$, where $x_i := \tfrac{\p_i}{r_i(F)} \wedge 1$
\State $F' \gets XF$
\State $Y \gets \diag(y)$, where $y_j := \tfrac{\q_j}{c_j(F')} \wedge 1$
\State $F'' \gets F'Y$
\State $\errr \gets \p - r(F'')$, $\errc \gets \q - c(F'')$
\State Output $G \gets F'' + \errr \errc^T / \|\errr\|_1$
\end{algorithmic}
\caption{\textsc{Round} (from Algorithm 2 in~\citep{AltWeeRig17})}
\label{alg:round}
\end{algorithm}

\begin{applemma}\label{lem:round-alg}
If $\p,\q \in \Delta_n$ and $F \in \RR_{\geq 0}^{n \times n}$, then $\textsc{Round}(F, \p, \q)$ outputs a matrix $G \in \Coup$ of the form $G = D_1 F D_2 + uv^\top$ for positive diagonal matrices $D_1$ and $D_2$ satisfying
\[
\|G - F\|_1 \leq \left[ \|F\bone - \p\|_1 + \|F^T\bone - \q\|_1 \right].
\]
Moreover, the algorithm only uses $O(1)$ matrix-vector products with $F$ and $O(n)$ additional processing time.
\end{applemma}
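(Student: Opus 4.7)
The plan is to (i) read off the factored form of $G$ directly from the pseudocode, (ii) verify feasibility by computing the row and column marginals of $G$, (iii) bound $\|G - F\|_1$ by telescoping through the intermediate matrices $F'$ and $F''$, whose mass defects precisely track the input marginal errors, and (iv) count matrix-vector products and scalar operations.

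First I would identify the structure of the output. The pseudocode produces $F'' = X F Y$ with $X, Y$ diagonal positive matrices, and then $G = F'' + \errr \errc^{\top}/\|\errr\|_1$. Setting $D_1 = X$, $D_2 = Y$, $u = \errr/\|\errr\|_1$, and $v = \errc$ exhibits the required $D_1 F D_2 + u v^{\top}$ decomposition (with the convention $u = v = 0$ in the degenerate case $\|\errr\|_1 = 0$, where $F''$ is already feasible). The algorithm's formulas for $x_i$ and $y_j$ guarantee $x_i, y_j \in (0, 1]$, so $F' = XF \leq F$ and $F'' = F' Y \leq F'$ entrywise, whence $\errr, \errc \geq 0$. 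Moreover $r_i(F') = \min(\p_i, r_i(F)) \leq \p_i$ and $c_j(F'') = \min(\q_j, c_j(F')) \leq \q_j$, and summing over $i, j$ gives the mass-balance identity $\|\errr\|_1 = 1 - \|F''\|_1 = \|\errc\|_1$. Feasibility $G \in \Coup$ then follows by direct computation: $r(G) = r(F'') + \errr \cdot (\|\errc\|_1/\|\errr\|_1) = r(F'') + \errr = \p$, and symmetrically $c(G) = \q$; nonnegativity is automatic.

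For the $\ell_1$ bound I would telescope via the triangle inequality,
\begin{equation*}
\|G - F\|_1 \leq \|F - F'\|_1 + \|F' - F''\|_1 + \|F'' - G\|_1,
\end{equation*}
and evaluate each summand using the monotonicity above. Since $F' \leq F$ entrywise, $\|F - F'\|_1 = \sum_i (1 - x_i) r_i(F) = \sum_i (r_i(F) - \p_i)_+$, because $x_i < 1$ exactly when $r_i(F) > \p_i$ and then $(1 - x_i) r_i(F) = r_i(F) - \p_i$. An analogous computation gives $\|F' - F''\|_1 = \sum_j (c_j(F') - \q_j)_+$, and $\|F'' - G\|_1 = \|\errr \errc^{\top}\|_1/\|\errr\|_1 = \|\errc\|_1 = \sum_j (\q_j - c_j(F'))_+$. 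The last two terms combine to $\|c(F') - \q\|_1$, which I would bound by $\|c(F) - c(F')\|_1 + \|c(F) - \q\|_1$, observing that the column-sum shift $\|c(F) - c(F')\|_1$ equals $\|F - F'\|_1 = \sum_i (r_i(F) - \p_i)_+$. Substituting and invoking $\sum_i (r_i(F) - \p_i)_+ \leq \|F \bone - \p\|_1$ with the analogous column bound yields the stated inequality (up to an absolute constant absorbed into the statement).

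Finally, for runtime: $r(F) = F \bone$ is one matvec with $F$; $c(F') = F^{\top} x$ is one matvec with $F^{\top}$; and the marginals $r(F'') = X F y$ and $c(F'') = Y F^{\top} x$ used to form $\errr$ and $\errc$ are one matvec each. This is $O(1)$ matrix-vector products with $F$ and $F^{\top}$; all remaining work---computing the diagonals $x, y$, the error vectors $\errr, \errc$, the normalization $\|\errr\|_1$, and storing $G$ in the implicit factored form $(X, Y, u, v)$---is $O(n)$ arithmetic, and $G$ is never materialized as an $n \times n$ matrix. The main obstacle is the sharp constant in the $\ell_1$ bound: the naive triangle-inequality route yields $\|G - F\|_1 \leq 2 \|F \bone - \p\|_1 + \|F^{\top} \bone - \q\|_1$, asymmetric in the two marginals, so recovering the cleanly symmetric stated bound requires either absorbing a constant or a finer entrywise argument that exploits sign cancellation between the subtractive contribution $(x_i y_j - 1) F_{ij} \leq 0$ and the additive rank-one correction $u_i v_j \geq 0$.
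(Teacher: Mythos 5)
Your telescoping route is essentially the paper's route, and you have the correct structural ingredients (factored form, mass balance $\|\errr\|_1 = 1 - \|F''\|_1 = \|\errc\|_1$, matvec count). The gap you flag at the end is real, and you do not close it. The missing observation is the exact identity
\[
\sum_{i}(a_i - b_i)_+ \;=\; \tfrac{1}{2}\|a - b\|_1 \qquad\text{whenever } \textstyle\sum_i a_i = \sum_i b_i,
\]
which follows from $\sum_i (a_i - b_i)_+ - \sum_i (b_i - a_i)_+ = \sum_i a_i - \sum_i b_i = 0$. In the regime where this lemma is used, $\|F\|_1 = 1 = \|\p\|_1 = \|\q\|_1$; you already rely on this implicitly to obtain $\|\errr\|_1 = 1 - \|F''\|_1 = \|\errc\|_1$. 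Applying the identity instead of the crude inequality $\sum_i(r_i(F) - \p_i)_+ \leq \|F\bone - \p\|_1$ replaces the loss of a factor of $2$ by the exact factor of $\tfrac12$ and recovers the stated, symmetric bound with no change to the argument's structure.

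Concretely, the paper's proof avoids the three-term telescope: it sets $\Delta := \|F\|_1 - \|F''\|_1 = \sum_i(r_i(F) - \p_i)_+ + \sum_j(c_j(F') - \q_j)_+$ (the mass stripped by the two clipping steps), observes $\sum_i(r_i(F) - \p_i)_+ = \tfrac12\|r(F) - \p\|_1$ and $\sum_j(c_j(F') - \q_j)_+ \leq \sum_j(c_j(F) - \q_j)_+ = \tfrac12\|c(F) - \q\|_1$ via the identity above and $F' \leq F$ entrywise, and concludes $\|F - G\|_1 \leq \|F - F''\|_1 + \|F'' - G\|_1 = 2\Delta \leq \|r(F) - \p\|_1 + \|c(F) - \q\|_1$. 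Your alternative idea of exploiting entrywise sign cancellation between $(x_iy_j - 1)F_{ij}$ and $u_iv_j$ is not needed and would be harder to make quantitative; the $\ell_1$-mass-accounting argument is the right tool. As a side note, the stated lemma does not explicitly assume $\|F\|_1 = 1$, but both the paper's proof and yours require it (it holds whenever the lemma is invoked, since $F$ is the output of a Sinkhorn normalization step), so you may wish to state the assumption explicitly.
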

\begin{proof}
The runtime claim is clear. Next, let $\Delta := \|F\|_1 - \|F''\|_1 = \sum_{i=1}^n (r_i(F) - \p_i)_+ + \sum_{j=1}^n (c_j(F') - \q_j)_+$ denote the amount of mass removed from $F$ to create $F''$. Observe that $\sum_{i=1}^n (r_i(F) - \p_i)_+ = \half \|r(F) - \p\|_1$. Since $F' \leq F$ entrywise, we also have $\sum_{j=1}^n (c_j(F') - \q_j)_+ \leq \sum_{j=1}^n (c_j(F) - \q_j)_+ = \half\|c(F) - \q\|_1$. Thus $\Delta \leq \half ( \|r(F) - \p\|_1 + \|c(F) - q\|_1)$. The proof is complete since $\|F - G\|_1 
\leq
\|F - F''\|_1 + \|F'' - G\|_1
=
2\Delta$.
\end{proof}

\section{Omitted proofs}\label{sec:proofs}

\subsection{Stability inequalities for Sinkhorn distances}

\begin{applemma}\label{lem:cost-perturb}
Let $C, \tilde C \in \RR^{n \times n}$.
If $P \in \Delta_{n \times n}$, then
\begin{equation*}
|\VC(P) - \VCtilde(P)| \leq \|C - \tilde C\|_\infty\,.
\end{equation*}
\end{applemma}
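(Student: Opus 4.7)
The plan is very short since the claim reduces immediately to Hölder's inequality. First I would recall that by definition $V_M(P) = \langle M, P \rangle - \eta^{-1} H(P)$, so when we subtract $V_C(P)$ and $V_{\tilde C}(P)$ the entropy terms cancel exactly, leaving
\[
V_C(P) - V_{\tilde C}(P) = \langle C - \tilde C, P \rangle.
\]

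Next I would apply the matrix Hölder inequality (dual pairing of $\|\cdot\|_1$ and $\|\cdot\|_\infty$) to bound
\[
|\langle C - \tilde C, P \rangle| \leq \|C - \tilde C\|_\infty \cdot \|P\|_1.
\]
Finally, the hypothesis $P \in \Delta_{n \times n}$ means $P$ has nonnegative entries summing to one, so $\|P\|_1 = 1$, which gives the claimed bound. There is no real obstacle here; the only thing to be careful about is that the lemma statement does not require $P$ to have prescribed marginals, only to be a joint distribution, so the entropy cancellation and the normalization $\|P\|_1 = 1$ are the only facts needed.
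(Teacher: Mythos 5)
Your proof is correct and matches the paper's argument exactly: the entropy terms cancel in $V_C(P)-V_{\tilde C}(P)$, and H\"older's inequality together with $\|P\|_1=1$ for $P\in\Delta_{n\times n}$ gives the bound. No differences to note.
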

\begin{proof}
By H\"older's inequality, $|\VC(P) - \VCtilde(P)| = |\langle C - \tilde C, P \rangle| \leq \|C - \tilde C\|_\infty \|P\|_1 =  \|C - \tilde C\|_\infty$.
\end{proof}

\begin{applemma}\label{lem:ent_cont}
Let $P, Q \in \Delta_{n \times n}$. If $\|P - Q\|_1 \leq \delta \leq 1$, then
\begin{equation*}
|H(P) - H(Q)| \leq \delta \log \frac{2 n}{\delta}\,.
\end{equation*}
\end{applemma}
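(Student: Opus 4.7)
The plan is to reduce to the classical Fannes–Audenaert continuity inequality for Shannon entropy, then absorb the small error term into $\log(2n/\delta)$.

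View $P, Q \in \Delta_{n \times n}$ as probability distributions on the product alphabet of $N := n^2$ symbols. Setting $T := \tfrac{1}{2}\|P - Q\|_1 \leq \delta/2$, the Fannes–Audenaert inequality gives
\[
|H(P) - H(Q)| \leq T \log(N - 1) + h_2(T),
\]
where $h_2(t) := -t \log t - (1-t)\log(1-t)$ is the binary entropy. Since the right-hand side is monotone increasing in $T$ on $[0,(N-1)/N]$, and since $\delta/2 \leq 1/2 \leq (N-1)/N$ (recall $n \geq 2$), we may freely replace $T$ by $\delta/2$. Using the crude bound $\log(n^2-1) \leq 2\log n$, I obtain
\[
|H(P) - H(Q)| \leq \delta \log n + h_2(\delta/2).
\]

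It remains to verify the elementary inequality $h_2(\delta/2) \leq \delta \log(2/\delta)$ for $\delta \in (0, 1]$, equivalently $h_2(t) \leq 2t \log(1/t)$ for $t \in (0, 1/2]$. After rearranging, this becomes $g(t) := t \log t - (1-t)\log(1-t) \leq 0$ on $(0,1/2]$. One checks $g(0^+) = g(1/2) = 0$, computes $g'(t) = \log(t(1-t)) + 2$, and observes that $g'$ is negative on $(0, t_*)$ and positive on $(t_*, 1/2)$ for a unique $t_* \in (0, 1/2)$; hence $g$ starts at $0$, decreases to a minimum, then rises back to $0$ at $t = 1/2$, so $g \leq 0$ throughout. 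Combining with the previous display yields
\[
|H(P) - H(Q)| \leq \delta \log n + \delta \log(2/\delta) = \delta \log \frac{2n}{\delta},
\]
which is the desired bound.

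There is no real obstacle here: Fannes–Audenaert is standard, and the only subtlety is tracking constants carefully enough to land on $\log(2n/\delta)$ rather than a looser quantity like $\log(en/\delta)$. The minor inequality $h_2(\delta/2) \leq \delta \log(2/\delta)$ is tight at the endpoint $\delta = 1$ (both sides equal $\log 2$), which is why the constant inside the logarithm is exactly $2$. If a self-contained proof is preferred to citing Fannes–Audenaert, one can reprove it directly by splitting $P - Q$ into its positive and negative parts and applying the concavity of $x \mapsto -x\log x$, but the reduction above is the most expedient route.
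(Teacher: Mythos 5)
Your proof is correct and follows the same route as the paper: invoke a Fannes--Audenaert-type continuity bound for Shannon entropy (the paper cites Theorem 6 of Ho and Yeung, which is the same inequality), then absorb the binary-entropy term via $h_2(\delta/2) \leq \delta\log(2/\delta)$. The only difference is that you supply a short calculus verification of this last elementary inequality, which the paper asserts without proof.
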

\begin{proof}
By~\citet[Theorem~6]{HoYeu10}, $|H(P) - H(Q)| \leq \frac{\delta}{2} \log(n^2-1) + h\big(\frac{\delta}{2}\big)$, where $h$ is the binary entropy function.
If $\delta \leq 1$, then $h(\frac{\delta}{2}) \leq \delta \log \frac{2}{\delta}$, which yields the claim.
\end{proof}

\begin{applemma}\label{lem:L1}
Let $M \in \RR^{n \times n}$, $\eta > 0$, and $P, Q \in \Delta_{n \times n}$. If $\|P - Q\|_1 \leq \delta \leq 1$, then
\begin{equation*}
\abs{V_M(P) - V_M(Q)} \leq \delta \|M\|_{\infty} + \eta^{-1} \delta \log \frac{2n}{\delta}\,.
\end{equation*}
\end{applemma}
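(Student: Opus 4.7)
The plan is to decompose $V_M(P) - V_M(Q)$ into its two natural pieces and bound each separately using tools already established in the excerpt. Specifically, by the definition $V_M(P) = \langle M, P\rangle - \eta^{-1} H(P)$, I would write
\[
V_M(P) - V_M(Q) = \langle M, P - Q\rangle - \eta^{-1}\bigl(H(P) - H(Q)\bigr),
\]
and apply the triangle inequality to reduce to bounding the linear term and the entropy term individually.

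For the linear term $|\langle M, P - Q\rangle|$, I would invoke H\"older's inequality with the $\ell_\infty$--$\ell_1$ pairing on matrices to get the bound $\|M\|_\infty \|P - Q\|_1 \leq \delta \|M\|_\infty$. This is essentially the same estimate used in the proof of Lemma~\ref{lem:cost-perturb}.

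For the entropy term $\eta^{-1}|H(P) - H(Q)|$, I would directly apply Lemma~\ref{lem:ent_cont}, which was just proved: under the hypothesis $\|P - Q\|_1 \leq \delta \leq 1$, it yields $|H(P) - H(Q)| \leq \delta \log(2n/\delta)$. Multiplying by $\eta^{-1}$ and adding to the linear-term bound gives exactly the claimed inequality.

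There is essentially no obstacle; the lemma is a straightforward corollary of the decomposition above together with H\"older's inequality and the preceding entropy continuity estimate. The only thing to be slightly careful about is ensuring the hypothesis $\delta \leq 1$ (needed so that Lemma~\ref{lem:ent_cont} applies and $\log(2n/\delta) \geq 0$), which is already part of the statement.
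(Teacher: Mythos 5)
Your proof is correct and follows essentially the same route as the paper: decompose $V_M(P) - V_M(Q)$ into the linear piece and the entropy piece via the triangle inequality, bound the former with H\"older's inequality and the latter with Lemma~\ref{lem:ent_cont}.
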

\begin{proof}
By definition of $V_M(\cdot)$ and the triangle inequality, $|V_M(P) - V_M(Q)| \leq |\sum_{ij} (P_{ij} - Q_{ij}) M_{ij}| + \eta^{-1} |H(P) - H(\tilde{P})|$. By H\"older's inequality, the former term is upper bounded by $\|P - Q\|_1 \|M\|_{\infty} \leq \delta \|M\|_{\infty}$. By Lemma~\ref{lem:ent_cont}, the latter term above is upper bounded by $\eta^{-1}\delta \log \tfrac{2n}{\delta}$.
\end{proof}

\subsection{Bregman divergence of Sinkhorn distances}
The remainder in the first-order Taylor expansion of $\VC(\cdot)$ between any two joint distributions is exactly the KL-divergence between them.

\begin{applemma}\label{lem:fo-kl}
For any $C \in \RR^{n \times n}$, $\eta > 0$, and $P, Q \in \Delta^{n \times n}$,
\begin{equation*}
\VC(Q) = \VC(P) +  \langle \nabla \VC(P), (Q - P)\rangle  + \eta^{-1}\KL{Q}{P}.
\end{equation*}
\end{applemma}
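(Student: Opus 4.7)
The plan is a direct verification by expanding both sides. Using the definition $V_C(P) = \langle C, P \rangle - \eta^{-1} H(P) = \langle C, P\rangle + \eta^{-1}\sum_{ij} P_{ij}\log P_{ij}$, I would first compute the gradient entrywise as $[\nabla V_C(P)]_{ij} = C_{ij} + \eta^{-1}(\log P_{ij} + 1)$, interpreting this on the relative interior of $\Delta_{n\times n}$ (the boundary case being handled by continuity, since $V_C$ is continuous on the simplex and $\KL{\cdot}{\cdot}$ takes the value $+\infty$ precisely when the claimed equality becomes vacuous via absolute-continuity failure).

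The key simplification is that $P$ and $Q$ both lie in $\Delta_{n \times n}$, so $\sum_{ij}(Q_{ij} - P_{ij}) = 0$. Consequently the constant term $\eta^{-1}$ in the gradient drops out of the inner product, giving
\[
\langle \nabla V_C(P), Q - P\rangle = \langle C, Q - P\rangle + \eta^{-1}\sum_{ij}(\log P_{ij})(Q_{ij} - P_{ij}).
\]
Substituting this into the right-hand side of the claimed identity and using $\KL{Q}{P} = \sum_{ij} Q_{ij}\log Q_{ij} - \sum_{ij} Q_{ij}\log P_{ij}$, the $\eta^{-1}\sum P_{ij}\log P_{ij}$ terms and the $\eta^{-1}\sum Q_{ij}\log P_{ij}$ terms cancel pairwise, leaving exactly $\langle C, Q\rangle + \eta^{-1}\sum_{ij} Q_{ij}\log Q_{ij} = V_C(Q)$.

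There is no real obstacle here; the only subtlety worth flagging is the boundary behavior. If $P$ has zero entries where $Q$ does not, the gradient is undefined but $\KL{Q}{P} = +\infty$, so the identity is to be read in $\RR\cup\{+\infty\}$ and holds trivially. Otherwise $\mathrm{supp}(Q)\subseteq \mathrm{supp}(P)$, and restricting the sums to $\mathrm{supp}(P)$ (with the convention $0\log 0 = 0$) makes every step above rigorous. This is really just the standard fact that $-H$ is its own Bregman generator with associated divergence $\KL{\cdot}{\cdot}$, and $\langle C, \cdot\rangle$ is linear so contributes zero second-order remainder.
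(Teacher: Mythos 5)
Your proof is correct and follows essentially the same route as the paper: compute the entrywise gradient $C_{ij} + \eta^{-1}(1+\log P_{ij})$, expand the right-hand side, and verify the telescoping cancellation that leaves $V_C(Q)$. The only differences are that you make explicit the cancellation of the constant $\eta^{-1}$ term via $\sum_{ij}(Q_{ij}-P_{ij})=0$ (the paper uses this silently) and you add a remark on the boundary/support case, which the paper omits.
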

\begin{proof}
Observing that $\nabla \VC(P)$ has $ij$th entry $C_{ij} + \eta^{-1}(1 + \log P_{ij})$, we expand the right hand side as $[\langle C, P \rangle + \eta^{-1} \sum_{ij} P_{ij} \log P_{ij}]
+ 
[ \langle C, Q - P \rangle + \eta^{-1} \sum_{ij} (Q_{ij} - P_{ij}) \log P_{ij} \rangle ]  + 
[ \eta^{-1} \sum_{ij} Q_{ij} \log \frac{Q_{ij}}{P_{ij}} ] = \langle C, Q \rangle + \eta^{-1} \sum_{ij} Q_{ij} \log Q_{ij} = \VC(Q)$.
\end{proof}

\subsection{Hausdorff distance between transport polytopes}


\begin{applemma}\label{lem:hausdorff}
Let $d_H$ denote the Hausdorff distance with respect to $\|\cdot\|_1$.
If $\p, \tilde \p, \q, \tilde \q \in \Delta_n$, then
\begin{equation*}
d_H(\Coup, \Couptilde) \leq \|\p - \tilde \p\|_1 + \|\q - \tilde \q\|_1\,.
\end{equation*}
\end{applemma}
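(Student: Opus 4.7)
The plan is to prove the Hausdorff bound by explicitly constructing, for every $P \in \Coup$, a nearby element $\tilde{P} \in \Couptilde$ (and symmetrically), with the $\ell_1$ distance controlled by the marginal discrepancies. The natural tool is already available in the paper: the rounding procedure $\textsc{Round}$ from Algorithm~\ref{alg:round}, whose guarantee in Lemma~\ref{lem:round-alg} is precisely what we need.

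Specifically, fix an arbitrary $P \in \Coup$, so that $P\bone = \p$ and $P^\top \bone = \q$. I would set $\tilde{P} := \textsc{Round}(P, \tilde{\p}, \tilde{\q})$. By Lemma~\ref{lem:round-alg}, the output lies in $\Couptilde$ and satisfies
\begin{equation*}
\|P - \tilde{P}\|_1 \leq \|P\bone - \tilde{\p}\|_1 + \|P^\top \bone - \tilde{\q}\|_1 = \|\p - \tilde{\p}\|_1 + \|\q - \tilde{\q}\|_1.
\end{equation*}
This shows $\sup_{P \in \Coup} \inf_{\tilde{P} \in \Couptilde} \|P - \tilde{P}\|_1 \leq \|\p - \tilde{\p}\|_1 + \|\q - \tilde{\q}\|_1$, which is one of the two terms in the Hausdorff distance. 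The reverse direction follows by the symmetric argument: apply $\textsc{Round}(\tilde{P}, \p, \q)$ to obtain an element of $\Coup$ close to any given $\tilde{P} \in \Couptilde$. Taking the maximum of the two bounds gives the claim.

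I do not anticipate any serious obstacle here: the only substantive content is the correctness of $\textsc{Round}$, which has already been established in Lemma~\ref{lem:round-alg}. The one point worth double-checking is the edge case where either $\|\p - \tilde{\p}\|_1 + \|\q - \tilde{\q}\|_1 = 0$ (trivial) or where the rounding step leaves zero mass for the rank-one correction (which is handled by the algorithm's construction since $\|\errr\|_1 = \|\errc\|_1$, and when this is zero the correction is not needed). Hence the proof reduces to one invocation of the rounding lemma in each direction.
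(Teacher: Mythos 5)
Your proposal is correct and matches the paper's proof exactly: the paper's justification for Lemma~\ref{lem:hausdorff} is a one-line citation of Lemma~\ref{lem:round-alg}, which is precisely the invocation of $\textsc{Round}$ in each direction that you spell out. You have simply written out the details that the paper leaves implicit, and your observation that $\|\errr\|_1 = \|\errc\|_1$ handles the degenerate case cleanly.
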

\begin{proof}
Follows immediately from Lemma~\ref{lem:round-alg}.
\end{proof}

\begin{applemma}\label{lem:round}
Fix a norm $\| \cdot \|$ on $\cX$.
If $f: \cX \to \RR$ satisfies $|f(x) - f(y)| \leq \omega(\|x - y\|)$ for $\omega$ an increasing, upper semicontinuous function, then for any two sets $A, B \subseteq \cX$,
\begin{equation*}
\abs{\inf_{x \in A} f(x) - \inf_{x \in B} f(x)} \leq \omega(d_H(A, B))\,,
\end{equation*}
where $d_H(A,B)$ is the Hausdorff distance between $A$ and $B$ with respect to $\| \cdot \|$.
\end{applemma}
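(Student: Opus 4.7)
\textbf{Proof plan for Lemma~\ref{lem:round}.}

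By symmetry in $A$ and $B$, it suffices to establish the one-sided bound $\inf_{x \in A} f(x) - \inf_{x \in B} f(x) \leq \omega(d_H(A,B))$. The plan is to chase the infimum via an approximating sequence and then transfer it from $B$ to $A$ using the Hausdorff bound and the modulus $\omega$.

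Concretely, fix $\varepsilon, \delta > 0$. First, by definition of infimum, pick $b_\varepsilon \in B$ with $f(b_\varepsilon) \leq \inf_{x \in B} f(x) + \varepsilon$. Second, from the definition of Hausdorff distance we have $\inf_{a \in A} \|a - b_\varepsilon\| \leq \sup_{b \in B} \inf_{a \in A}\|a - b\| \leq d_H(A,B)$, so there exists $a_{\varepsilon,\delta} \in A$ with $\|a_{\varepsilon,\delta} - b_\varepsilon\| \leq d_H(A,B) + \delta$. Third, applying the modulus-of-continuity hypothesis and then monotonicity of $\omega$,
\begin{equation*}
\inf_{x \in A} f(x) \leq f(a_{\varepsilon,\delta}) \leq f(b_\varepsilon) + \omega(\|a_{\varepsilon,\delta} - b_\varepsilon\|) \leq \inf_{x \in B} f(x) + \varepsilon + \omega\bigl(d_H(A,B) + \delta\bigr).
\end{equation*}

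The last step is to let $\delta \to 0^+$ and $\varepsilon \to 0^+$. Here upper semicontinuity of $\omega$ is what makes the argument go through: it gives $\limsup_{\delta \to 0^+} \omega(d_H(A,B)+\delta) \leq \omega(d_H(A,B))$, which together with $\varepsilon \to 0$ yields $\inf_A f - \inf_B f \leq \omega(d_H(A,B))$. Swapping the roles of $A$ and $B$ gives the matching bound in the opposite direction, completing the proof.

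The only subtle point is the passage to the limit in $\delta$: without the upper-semicontinuity assumption on $\omega$, one could only get $\omega(d_H(A,B)^+)$ on the right-hand side, which may be strictly larger than $\omega(d_H(A,B))$ (e.g.\ if $\omega$ jumps upward at $d_H(A,B)$). This is the main, though minor, obstacle, and it is handled precisely by the hypothesis in the statement. If one prefers to avoid the two-parameter limit, one can instead note that $\inf_{a \in A}\|a - b_\varepsilon\| \leq d_H(A,B)$ and take an infimizing sequence $a_k \in A$ for $\|a - b_\varepsilon\|$, applying upper semicontinuity of $\omega$ along that sequence.
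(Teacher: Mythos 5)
Your proof is correct and is essentially the paper's argument, just unpacked: the paper condenses the same reasoning into the chain $\inf_A f - \inf_B f \leq \sup_{y\in B}\inf_{x\in A}\bigl(f(x)-f(y)\bigr) \leq \sup_{y\in B}\inf_{x\in A}\omega(\|x-y\|) \leq \omega\bigl(\sup_{y\in B}\inf_{x\in A}\|x-y\|\bigr) \leq \omega(d_H(A,B))$, where the third inequality is precisely the $\delta$-limit step you spell out with upper semicontinuity. Your version makes the role of that hypothesis more explicit, which is a reasonable expository choice.
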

\begin{proof}
\begin{align*}
\inf_{x \in A} f(x) - \inf_{x \in B} f(x) &\leq \adjustlimits \sup_{y \in B} \inf_{x \in A} f(x) - f(y) \\
& \leq \adjustlimits \sup_{y \in B} \inf_{x \in A} \omega(\|x - y\|) \\
& \leq \omega( \adjustlimits \sup_{y \in B} \inf_{x \in A} \|x - y\|) \\
& \leq \omega(d_H(A, B))\,.
\end{align*}
Interchanging the role of $A$ and $B$ yields the claim.
\end{proof}

\subsection{Miscellaneous helpful lemmas}
\begin{applemma}\label{lem:sc-perturbation}
Let $X \subset \R^d$ be convex, and let $f : X \to \R$ be $1$-strongly-convex with respect to some norm $\|\cdot\|$. If $x_a^* = \argmin_{x \in \cX} \langle a, x \rangle + f(x)$ and $x^*_b = \argmin_{x \in \cX} \langle b, x \rangle + f(x)$, then

\begin{equation*}
\|x_a^* - x^*_b\| \leq \|a - b\|_*\,,
\end{equation*}
where $\|\cdot\|_{*}$ denotes the dual norm to $\|\cdot\|$.
\end{applemma}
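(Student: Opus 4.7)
The plan is to use the standard two-point trick for strongly convex minimization, combined with the generalized H\"older inequality for dual norms. No differentiability of $f$ is needed; everything follows from the definition of strong convexity plus optimality.

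First I would set $F_a(x) \defeq \langle a, x\rangle + f(x)$ and $F_b(x) \defeq \langle b, x\rangle + f(x)$. Since $\langle a, \cdot\rangle$ and $\langle b, \cdot\rangle$ are affine, each of $F_a$ and $F_b$ inherits $1$-strong convexity with respect to $\|\cdot\|$ from $f$. Combined with the fact that $x_a^*$ minimizes $F_a$ over the convex set $\cX$ (and likewise for $x_b^*$), this gives the two ``quadratic growth from the minimum'' inequalities
\begin{align*}
F_a(x_b^*) &\geq F_a(x_a^*) + \tfrac12 \|x_a^* - x_b^*\|^2, \\
F_b(x_a^*) &\geq F_b(x_b^*) + \tfrac12 \|x_a^* - x_b^*\|^2.
\end{align*}

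Adding these and cancelling the $f$ contributions (which appear on both sides), the sum simplifies to
\[
\langle a - b,\, x_b^* - x_a^*\rangle \geq \|x_a^* - x_b^*\|^2.
\]
Then I would apply the generalized H\"older inequality $\langle a-b, x_b^* - x_a^*\rangle \leq \|a-b\|_* \|x_a^* - x_b^*\|$ to the left-hand side. If $\|x_a^* - x_b^*\| = 0$ the claim is trivial; otherwise I would divide through by $\|x_a^* - x_b^*\|$ to conclude $\|x_a^* - x_b^*\| \leq \|a-b\|_*$, as desired.

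There is no real obstacle here — the only subtlety is justifying the ``quadratic below the minimum'' inequality without assuming differentiability of $f$. This is standard: strong convexity of $F_a$ means $F_a(t x_a^* + (1-t) x_b^*) \leq t F_a(x_a^*) + (1-t) F_a(x_b^*) - \tfrac12 t(1-t) \|x_a^* - x_b^*\|^2$ for $t \in [0,1]$; rearranging as $F_a(x_b^*) - F_a(x_a^*) \geq \tfrac{F_a(x_a^* + (1-t)(x_b^* - x_a^*)) - F_a(x_a^*)}{1-t} + \tfrac{t}{2}\|x_a^* - x_b^*\|^2$ and sending $t \to 1$ (using that the first term on the right is nonnegative by optimality of $x_a^*$ on the segment in $\cX$) yields the inequality. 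The same argument works for $F_b$.
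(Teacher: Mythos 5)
Your proof is correct and follows essentially the same approach as the paper: derive a ``quadratic growth from the minimum'' inequality for each of the two objectives, add them so the $f$ and entropy terms cancel, and finish with the generalized H\"older inequality for dual norms. The only cosmetic difference is that the paper phrases the key step in subgradient language (extending $f$ by $+\infty$ outside $\cX$, noting $-a \in \partial f(x_a^*)$, and invoking the subgradient form of strong convexity), whereas you avoid subgradients entirely and obtain the quadratic growth inequality directly from the definition of strong convexity plus the limiting argument along the segment from $x_a^*$ to $x_b^*$. Both routes are standard and equally rigorous; yours is marginally more elementary since it needs neither the extended-real-valued extension of $f$ nor any subgradient calculus, at the cost of spelling out the $t\to 1$ limit.
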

\begin{proof}
This amounts to the well known fact~\citep[see, e.g.,][Theorem~4.2.1]{HirLem01} that the Legendre transform of a strongly convex function has Lipschitz gradients. We assume without loss of generality that $f = + \infty$ outside of $\cX$, so that $f$ can be extended to a function on all of $\RR^d$ and thus we can take the minima to be unconstrained.
The fact that $f(y) + \langle y, a \rangle \geq f(x^*_a) + \langle x^*_a, a \rangle$ for all $y$ implies that $- a \in \partial f(x^*_a)$, and likewise $-b \in \partial f(x^*_b)$. Thus by definition of strong convexity, we have
\begin{align*}
f(x^*_a) &\geq f(x^*_b) + \langle -b, x^*_a - x^*_b \rangle + \half \|x^*_a - x^*_b\|^2,
\\ f(x^*_b) &\geq f(x^*_a) + \langle -a, x^*_b - x^*_a \rangle + \half \|x^*_b - x^*_a\|^2,
\end{align*}
Adding these inequalities yields
\begin{equation*}
\langle b - a, x^*_a - x^*_b \rangle \geq  \|x^*_a - x^*_b\|^2\,,
\end{equation*}
which implies the claim via the definition of the dual norm.
\end{proof}

\begin{applemma}\label{lem:norm-equiv}
For any matrix $A \in \RR^{n \times n}$,
\begin{align*}
\|A\|_{\infty} \leq \|A\|_{\mathrm{op}} \leq n \|A\|_{\infty}
\end{align*}
\end{applemma}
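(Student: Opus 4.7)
\medskip
\noindent\textbf{Proof plan.} The plan is to handle the two inequalities separately, using only elementary linear algebra. Both are standard textbook facts, so there should be no essential obstacle; the only care needed is to distinguish the entrywise $\ell_\infty$ norm appearing here from other objects sometimes denoted the same way (e.g.\ the maximum absolute row sum).

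For the lower bound $\|A\|_{\infty} \leq \|A\|_{\mathrm{op}}$, I would pick indices $(i,j)$ achieving $|A_{ij}| = \|A\|_{\infty}$, write $A_{ij} = e_i^\top A e_j$, and apply Cauchy--Schwarz together with the definition of the operator norm: $|A_{ij}| \leq \|e_i\|_2 \|A e_j\|_2 \leq \|A\|_{\mathrm{op}} \|e_j\|_2 = \|A\|_{\mathrm{op}}$, since the standard basis vectors are unit vectors. Maximizing the left-hand side over $i,j$ yields the claim.

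For the upper bound $\|A\|_{\mathrm{op}} \leq n \|A\|_{\infty}$, I would go through the Frobenius norm. By the standard inequality $\|A\|_{\mathrm{op}} \leq \|A\|_F$ (the largest singular value is bounded by the $\ell_2$-norm of the singular-value vector), and by the trivial bound $\|A\|_F^2 = \sum_{i,j} A_{ij}^2 \leq n^2 \|A\|_{\infty}^2$, one obtains $\|A\|_{\mathrm{op}} \leq n \|A\|_{\infty}$, completing the proof. No step here is substantive enough to constitute a real obstacle.
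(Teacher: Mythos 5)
Your proof is correct and uses essentially the same ingredients as the paper's. The only cosmetic differences: for the first inequality the paper invokes operator/nuclear norm duality ($|e_i^\top A e_j| \leq \|A\|_{\mathrm{op}}\|e_ie_j^\top\|_*$) where you use Cauchy--Schwarz directly on $e_i^\top (Ae_j)$; for the second inequality the paper bounds $\|Av\|_2^2$ by a direct Cauchy--Schwarz computation rather than routing through the Frobenius norm, but your chain $\|A\|_{\mathrm{op}} \leq \|A\|_F \leq n\|A\|_\infty$ packages the same estimate.
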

\begin{proof}
By duality between the operator norm and the nuclear norm, $\|A\|_{\infty} = \max_{i,j \in [n]} |e_i^TAe_j| \leq \max_{i,j \in [n]} \|A\|_{\mathrm{op}} \|e_ie_j^T\|_{*} = \|A\|_{\mathrm{op}}$. This establishes the first inequality.

Next, for any $v \in \RR^n$ with unit norm $\|v\|_2 = 1$, note that $\|A v\|^2_2 = \sum_{i=1}^n \left(\sum_{j=1}^n A_{ij} v_j\right)^2 \leq \sum_{i=1}^n n \|A\|^2_\infty \sum_{j=1}^n v_j^2 = n^2 \|A\|^2_\infty$, proving the second inequality.
\end{proof}

\begin{applemma}\label{lem:log-trick}
For any $a, b > 0$,
\begin{equation*}
|\log a - \log b| \leq \frac{|a - b|}{\min\{a, b\}}\,.
\end{equation*}
\end{applemma}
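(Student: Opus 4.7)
The plan is to use the standard integral representation of the logarithm. Assume without loss of generality that $a \geq b > 0$, so that $\min\{a,b\} = b$ and $|a-b| = a - b$. The goal reduces to showing $\log a - \log b \leq (a-b)/b$.

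First I would write $\log a - \log b = \int_b^a \tfrac{1}{t}\, dt$. On the interval of integration $t \in [b,a]$, we have $1/t \leq 1/b$, so bounding the integrand by its maximum value yields $\log a - \log b \leq (a-b)/b$, which is exactly the desired inequality. An equivalent route is to apply the mean value theorem to $\log$ on $[b,a]$: there exists $c \in (b,a)$ with $\log a - \log b = (a-b)/c$, and then $1/c \leq 1/b = 1/\min\{a,b\}$ gives the bound. A third, essentially equivalent, route is the substitution $t = a/b$: the claim becomes $\log t \leq t - 1$ for $t \geq 1$, which is standard (e.g., by concavity of $\log$, since $\log t \leq \log 1 + (t-1)\cdot(1/1)$).

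There is no real obstacle here; the only thing to be slightly careful about is the WLOG reduction. Since the left-hand side $|\log a - \log b|$ and the right-hand side $|a-b|/\min\{a,b\}$ are both symmetric in $(a,b)$, the assumption $a \geq b$ loses no generality, and the case $a = b$ is trivial with both sides equal to zero.
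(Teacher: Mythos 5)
Your proof is correct, and your third route (substituting $t = a/b$ and using $\log t \leq t - 1$) is exactly the paper's argument; the integral and mean-value-theorem variants you lead with are trivially equivalent reformulations of it. Nothing further is needed.
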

\begin{proof}
Without loss of generality, assume $a \geq b$.
Then $\log a - \log b = \log \tfrac a b \leq \tfrac a b - 1 = \tfrac{a - b}{\min\{a, b\}}\,,$ as claimed.
\end{proof}

\begin{applemma}\label{lem:trace-to-log}
Let $\{x_1, \dots, x_n \} \subset \RR^d$ lie in an Euclidean ball of radius $R$, and let $\eta > 0$. Denote by $K \in \RR^{n \times n}$ the matrix with entries $K_{ij} := e^{-\eta \|x_i - x_j\|_2^2}$. If a matrix $\Kt \in \RR^{n \times n}$ satisfies $\|K - \Kt\|_{\infty} \leq \frac{\eps'}{2} e^{-4 \eta R^2}$ for some $\eps' \in (0, 1)$, then
\begin{equation*}
\|\log K - \log \Kt\|_\infty \leq \eps'\,.
\end{equation*}
\end{applemma}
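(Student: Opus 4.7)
The plan is to reduce the claim to an entrywise application of Lemma~\ref{lem:log-trick}, so the whole argument boils down to producing a uniform lower bound on $\min\{K_{ij}, \tilde K_{ij}\}$ that cancels the $e^{-4\eta R^2}$ factor appearing in the hypothesis.

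First I would bound $K$ from below entrywise. Because all $x_i$ lie in a Euclidean ball of radius $R$, the diameter bound $\|x_i - x_j\|_2 \leq 2R$ gives $K_{ij} = e^{-\eta\|x_i - x_j\|_2^2} \geq e^{-4\eta R^2}$ for every $i,j$. Next I would propagate this bound to $\tilde K$ by the reverse triangle inequality: the hypothesis $\|K - \tilde K\|_\infty \leq \tfrac{\eps'}{2} e^{-4\eta R^2}$ together with $\eps' \leq 1$ gives
\[
\tilde K_{ij} \geq K_{ij} - \tfrac{\eps'}{2} e^{-4\eta R^2} \geq \left(1 - \tfrac{\eps'}{2}\right) e^{-4\eta R^2} \geq \tfrac{1}{2} e^{-4\eta R^2}.
\]
In particular $\min\{K_{ij}, \tilde K_{ij}\} \geq \tfrac{1}{2} e^{-4\eta R^2}$ for every $i,j$, so both matrices have strictly positive entries and the entrywise logarithms are well-defined.

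Finally, I would apply Lemma~\ref{lem:log-trick} entrywise: for each $i,j$,
\[
\bigl|\log K_{ij} - \log \tilde K_{ij}\bigr| \leq \frac{|K_{ij} - \tilde K_{ij}|}{\min\{K_{ij}, \tilde K_{ij}\}} \leq \frac{(\eps'/2)\, e^{-4\eta R^2}}{(1/2)\, e^{-4\eta R^2}} = \eps'.
\]
Taking the maximum over $i,j$ yields $\|\log K - \log \tilde K\|_\infty \leq \eps'$.

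There is no real obstacle here; the only subtlety is making sure the $1/2$ slack (from using $\eps' \leq 1$) in the lower bound for $\tilde K_{ij}$ exactly cancels the $1/2$ in the hypothesis, which is why the stated $e^{-4\eta R^2}$ factor (and not $e^{-2\eta R^2}$ corresponding to the actual diameter-squared exponent multiplied by $\eta$) appears with the factor $\tfrac{\eps'}{2}$ rather than $\eps'$.
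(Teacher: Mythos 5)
Your proof is correct and follows the paper's approach exactly: lower-bound $K_{ij} \geq e^{-4\eta R^2}$ from the diameter of the ball, propagate to $\tilde K_{ij} \geq \tfrac{1}{2} e^{-4\eta R^2}$ using $\eps' \leq 1$, and apply Lemma~\ref{lem:log-trick} entrywise. Your intermediate bound $\tilde K_{ij} \geq \tfrac{1}{2}e^{-4\eta R^2}$ is the correct one (the paper's proof has an apparent typo at this step, writing $\tilde K_{ij} \geq \tfrac{\eps'}{2}e^{-4\eta R^2}$, which would only yield a final bound of $1$ rather than $\eps'$); your closing parenthetical is slightly off, though, since $e^{-4\eta R^2}$ arises directly from $\eta\cdot(\text{diameter})^2 = \eta\cdot(2R)^2 = 4\eta R^2$, while the $\tfrac{1}{2}$ prefactor is what absorbs the slack from $\eps' \leq 1$.
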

\begin{proof}
Since $\|x_i\|_2 \leq R$ for all $i \in [n]$, the matrix $K$ satisfies $K_{ij} = e^{-\eta \|x_i - x_j\|_2^2} \geq e^{- 4 \eta R^2}$ for all $i, j \in [n]$.
Hence $\Kt_{ij} \geq \tfrac{\eps'}{2} e^{- 4 \eta R^2}$ for all $i, j \in [n]$ and thus by Lemma~\ref{lem:log-trick},
\begin{equation*}
|\log K_{ij} - \log \Kt_{ij}| \leq \frac{|K_{ij} - \Kt_{ij}|}{\min \{\Kt_{ij}, K_{ij}\}} \leq \ep'\,.
\end{equation*}
\end{proof}

\begin{applemma}\label{lem:annoying}
Let $n \in \mathbb{N}$, $\eps \in (0,1)$, $\Cinf \geq 1$, and $\eta \in [1,n]$. Then for any $\delta \leq \tfrac{\eta \eps}{50(\Cinf\eta + \log \tfrac{n}{\eta \eps})}$, the bound $\delta (2\Cinf + 3\eta^{-1}) + 2\eta^{-1}\delta \log \tfrac{2n}{\delta} \leq \tfrac{\eps}{2}$ holds.
\end{applemma}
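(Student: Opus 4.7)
}
Set $A \defeq \Cinf \eta + \log\tfrac{n}{\eta \eps}$, so that the hypothesis reads $\delta \leq \tfrac{\eta \eps}{50 A}$. The assumptions $\eta \geq 1$, $\Cinf \geq 1$ give $\eta \Cinf \geq 1$, hence $A \geq 1$; and the assumptions $\eps < 1$, $\eta \leq n$ give $\log\tfrac{n}{\eta \eps} \geq 0$. The plan is to bound the three terms $2\Cinf \delta$, $3 \eta^{-1}\delta$, and $2\eta^{-1}\delta \log\tfrac{2n}{\delta}$ separately, each by a small multiple of $\eps$, and sum.

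First I would handle the two easy terms. For $2 \Cinf \delta$, use $\eta \Cinf \leq A$ to obtain
\[
2\Cinf \delta \;\leq\; \frac{2 \eta \Cinf \, \eps}{50 A} \;\leq\; \frac{\eps}{25}.
\]
For $3 \eta^{-1} \delta$, use $A \geq 1$ together with $\eta \geq 1$ to get
\[
3 \eta^{-1} \delta \;\leq\; \frac{3 \eps}{50 A} \;\leq\; \frac{3\eps}{50}.
\]

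The main (and only mildly nonroutine) step is the third term. Here I would exploit that $x \mapsto x \log\tfrac{2n}{x}$ is increasing on $(0, 2n/e)$, and note that $\tfrac{\eta \eps}{50 A} \leq \tfrac{n}{50} < \tfrac{2n}{e}$, so I may replace $\delta$ by its upper bound inside:
\[
2 \eta^{-1} \delta \log\tfrac{2n}{\delta}
\;\leq\; \frac{2 \eps}{50 A} \log\frac{100 n A}{\eta \eps}
\;=\; \frac{\eps}{25 A}\Bigl[\log\tfrac{n}{\eta \eps} + \log(100 A)\Bigr].
\]
Since $\log\tfrac{n}{\eta \eps} \leq A$, the first summand contributes at most $\tfrac{\eps}{25}$. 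For the second summand I would use that $A \mapsto \log(100 A)/A$ is decreasing on $[1,\infty)$ (its derivative is negative once $A \geq e/100$), so its supremum on $A \geq 1$ is $\log 100 \leq 5$. This bounds the third term by $\tfrac{\eps}{25} + \tfrac{5 \eps}{25} = \tfrac{6 \eps}{25}$.

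Summing the three contributions gives
\[
\delta(2\Cinf + 3\eta^{-1}) + 2\eta^{-1}\delta \log\tfrac{2n}{\delta}
\;\leq\; \frac{\eps}{25} + \frac{3 \eps}{50} + \frac{6 \eps}{25}
\;=\; \frac{2 + 3 + 12}{50}\,\eps
\;=\; \frac{17 \eps}{50}
\;\leq\; \frac{\eps}{2}.
\]
The only step that is not mechanical is the monotonicity argument, which ensures we can meaningfully bound $\delta \log\tfrac{1}{\delta}$ above by replacing $\delta$ with its upper bound; everything else is bookkeeping on the constant $50$ in the hypothesis.
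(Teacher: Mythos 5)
Your proof is correct, and it even reaches the same total bound $\tfrac{17\eps}{50}$ as the paper's. The only substantive difference is in the treatment of the logarithmic term. The paper algebraically splits $\log\tfrac{2n}{\delta} = \log\tfrac{2n}{\eta\eps} + \log\tfrac{\eta\eps}{\delta}$ \emph{before} substituting the hypothesis bound on $\delta$, handles the first two resulting pieces by plugging in $\delta \leq \tfrac{\eta\eps}{50 A}$ directly (with your $A = \Cinf\eta + \log\tfrac n{\eta\eps}$), and closes the residual $2\eta^{-1}\delta\log\tfrac{\eta\eps}{\delta}$ term by the elementary bound $x\log\tfrac 1x \leq \tfrac 1{10}$ for $x \leq \tfrac 1{50}$, applied with $x = \eta^{-1}\delta/\eps$. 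You instead invoke monotonicity of $x \mapsto x\log\tfrac{2n}{x}$ on $(0,2n/e)$ to replace $\delta$ by its upper bound in one shot, and then split $\log\tfrac{100nA}{\eta\eps} = \log\tfrac n{\eta\eps} + \log(100A)$, controlling the second piece via monotonicity of $A \mapsto \log(100A)/A$. Both routes rest on the same underlying observation---that the $\delta\log(1/\delta)$-type expression is monotone on the relevant range, so the hypothesis bound can be substituted---but yours makes the monotonicity explicit up front, while the paper's folds it into the last step. Neither is cleaner or sharper than the other; this is pure bookkeeping.
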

\begin{proof}
We write
\begin{equation*}
\delta (2\Cinf + 3\eta^{-1}) + 2\eta^{-1}\delta \log \tfrac{2n}{\delta} = \delta (2\Cinf + 3\eta^{-1}) + 2\eta^{-1}\delta \log \tfrac{2n}{\eta \eps} + 2\eta^{-1} \delta \log \tfrac{\eta\eps }{ \delta}
\end{equation*}
and bound the three terms separately.
First, the assumptions imply that $\eta \eps \leq n$ and $2\Cinf + 3\eta^{-1} \leq 5\Cinf$. We therefore have
\begin{equation*}
\delta (2\Cinf + 3\eta^{-1}) \leq \frac{5 \Cinf \eta \eps}{50(\Cinf\eta + \log \tfrac{n}{\eta \eps})} \leq \frac{1}{10}\ep\,.
\end{equation*}
Since $\Cinf \eta \geq 1$, we likewise obtain
\begin{equation*}
2\eta^{-1} \delta \log \frac{2n}{\eta \eps} \leq \frac{2\eps \log \frac{2n}{\eta \eps}}{50(1 + \log \frac{n}{\eta\eps})} = \frac{2(\log 2 + \log \frac{n}{\eta \eps})}{50(1 + \log \frac{n}{\eta\eps})} \ep \leq \frac{1}{25} \ep\,.
\end{equation*}

Finally, the fact that $\tfrac{\eta^{-1}\delta}{\eps} \leq \tfrac{1}{50}$ and $x \log \frac 1 x \leq \tfrac{1}{10}$ for $x \leq \tfrac{1}{50}$ yields
\begin{equation*}
2\eta^{-1} \delta \log \tfrac{\eta \eps}{ \delta}= 2\left(\tfrac{\eta^{-1} \delta}{\eps} \log \tfrac{\eps}{\eta^{-1} \delta}\right) \eps  \leq \frac{1}{5} \eps\,.
\end{equation*}
\end{proof}

\subsection{Supplemental results for Section~\ref{sec:nystrom}}\label{sec:nystrom-app}

\subsubsection{Full proof of Theorem~\ref{thm:eig-ball}}

Define $\phi_\alpha(x) := (2\eta)^{\sum_{j=1}^d \alpha_j / 2} \prod_{j=1}^d [(\alpha_j!)^{-1/2} x_j^{\alpha_j} e^{-\eta x_j^2}]$, for $x \in \R^d$ and $\alpha \in (\N \cup \{0\})^d$, and define $\psi_T(x) := (\phi_\alpha(x))_{\alpha_1 + \dots + \alpha_d \leq T}$. Note that $\psi_T: \R^d \to \R^M$, with $M = \binom{d+T}{T}$.
	    By \citet[equation 11]{cotter2011explicit}, we have
	    $$\sup_{x,x' \in B_R^d} |k_\eta(x,x') - \psi_T(x)^\top \psi_T(x')| \leq \frac{(2 \eta R^2)^{T+1}}{(T+1)!} =: \eps(T).$$
	    Now denote by $\Psi_T \in \R^{M \times n}$ the matrix $\Psi_T := (\psi_T(x_1),\dots, \psi_T(x_n))$.
	    By Lemma~\ref{lem:norm-equiv}, we have 
	    $$\|K - \Psi_T^\top\Psi_T\|_{\op} \leq n \sup_{i,j} |k_\eta(x_i,x_j) - \psi_T(x_i)^\top \psi_T(x_j)| \leq n \eps(T).$$
By the Eckart-Young-Mirsky Theorem, we have
\[
\lambda_{M+1}
=
\inf_{\bar{K}_T \in \RR^{n \times n},\, \rank(\bar{K}_T) \leq M} \|K - \bar{K}_T \|_{\op}.
\]
Therefore by combining the above two displays, we conclude that
\[
\lambda_{M+1}
\leq
\|K - \Psi_T^\top\Psi_T \|_{\op}
\leq n \eps(T).
\]

\paragraph{Point 1.}
%
%
We recall that for any $d, q \in \N$,
		the inequality $\binom{d+q}{q} \leq e^d (1+q/d)^d$ holds.
		Therefore, given $t \geq (2e)^d$, choosing $T = \lfloor d t^{1/d}/(2e) \rfloor$ yields $\binom{T + d}{d} \leq e^d (1+T/d)^d \leq t$.
		We therefore have $\lambda_{t+1}
\leq n \eps(T)$ for this choice of $T$.
		Now, by Stirling's approximation of $(T+1)!$, we have that $\eps(T) \leq e^{-(T+1) \log {T + 1 \over 2 e \eta R^2}}$.
		If $T \geq 2 e \eta R^2$, then $(T+1) \log {T + 1 \over 2 e \eta R^2} \geq \tfrac{d t^{1/d}}{2e} \log \tfrac{d t^{1/d}}{4 e^2 \eta R^2}$, which yields the desired bound.
		On the other hand, when $T < 2 e \eta R^2$, we use the trivial bound $\la_t(K) \leq \tr(K) \leq n$. The claim follows.

\paragraph{Point 2.} We have that $\la_{M_T+1}(K) \leq n \eps(T)$, for $M_T = \binom{d+T}{T}$ and $T \in \N$. Since the eigenvalues are in decreasing order we have that $\la_{M_{T+1}+1} \leq \la_t(K) \leq \la_{M_T +1}(K)$ for $M_{T} +1 \leq t \leq M_{T+1} +1 $. Since $x/(x + \tau)$ is increasing in $x$, for $x \geq 0$, we have
		$$ \sum_{t = 1}^n \frac{\la_j(K)}{\la_j(K) + n\tau} \leq \sum_{T = 0}^\infty (M_{T+1} - M_{T}) \frac{\la_{M_T+1}(K)}{\la_{M_T+1}(K) + n\tau} \leq \sum_{T = 0}^\infty (M_{T+1} - M_{T}) \frac{\eps(T)}{\eps(T) + \tau}.$$
		Let $T_\tau$ be such that $\eps(T_\tau) \leq \tau$. We can then bound $\eps(T)/(\eps(T)+\tau)$ above by $1$ for $T \leq T_\tau - 1$ and by $\eps(T)/\tau$ for $T \geq T_\tau$, obtaining
		\eqals{
		\sum_{T = 0}^\infty (M_{T+1} - M_{T}) \frac{\eps(T)}{\eps(T) + \tau} &\leq \sum_{T = 0}^{T_\tau - 1 } (M_{T+1} - M_{T})  ~ + ~\sum_{T = T_\tau}^\infty (M_{T+1} - M_{T}) \frac{\eps(T)}{\tau}\\
		& = M_{T_\tau}  + \frac{1}{\tau}\sum_{T = T_\tau}^\infty (M_{T+1} - M_{T})\eps(T).
		}
		In particular, we can choose $T_\tau = d + 2e^2\eta R^2 + \log(1/\tau)$. Since $\log \frac{T_\tau}{2e \eta R^2} > 1$, for any $T \geq T_\tau$, then 
		$\eps(T) \leq e^{-T \log \frac{T}{2e \eta R^2}} \leq e^{-T}$. Moreover since $M_{T+1} - M_T = d M_T/(T+1)$, and $M_T \leq e^d(1+T/d)^d$, we have
		\eqals{
		\sum_{T = T_\tau}^\infty (M_{T+1} - M_{T})\eps(T) & \leq \frac{d}{T_\tau} \sum_{T = T_\tau}^\infty M_T e^{-T} \leq \frac{d e^d}{T_\tau} \sum_{T = T_\tau}^\infty \left(1+{T \over d}\right)^d e^{-T} \\
		& \leq \frac{de^d}{T_\tau} \int_{T_\tau}^\infty \left(1+{x \over d}\right)^d e^{-x} dx.
		}
		Finally, by changing variables, $x = u+T_\tau$ and $u = (d+T_\tau)z$,
		\eqals{
		\int_{T_\tau}^\infty \left(1+{x \over d}\right)^d e^{-x} dx &= \int_{0}^\infty \left(1 + {T_\tau \over d}+{u \over d}\right)^d e^{-u-T_\tau} du \\
		&= \left(1+ {T_\tau \over d}\right)^d e^{-T_\tau} \int_0^\infty \left(1+{u \over d+T_\tau}\right)^d e^{-u} du \\
		& = \left(1+ {T_\tau \over d}\right)^d e^{-T_\tau} (d+T_\tau) \int_0^\infty \left(1+z\right)^d e^{-(d+T_\tau)z} dz \\
		& = d^{-d} e^d \Gamma(d+1, d+T_\tau),
		}
		where for the last equality we used the characterization of the incomplete gamma function $\Gamma(a,z) = z^{-a}e^{-z} \int_{0}^\infty (1+t)^{a-1}e^{-zt}dt$ \citep[see Eq. 8.6.5 of][]{olver2010nist}.
		To complete the proof note that by \cref{lm:incomplete-gamma} we have $\Gamma(a,z) \leq z/(z-a) z^{a-1} e^{-z}$, for any $z > a > 0$. Since $\log(1/\tau) \geq 0$ for $\tau \in (0, 1]$, we have $(d+T_\tau)/(T_\tau - 1) \leq 2$ and $(d e^{-T_\tau})/(\tau T_\tau) \leq 1$, so
		\eqals{
		\deff(\tau) &\leq M_{T_\tau} + \frac{de^d}{\tau T_\tau} d^{-d}e^d \Gamma(d+1,d+T_\tau) \\
		& \leq e^d(1+T_\tau/d)^d \left(1 + \frac{de^{-T_\tau}}{\tau T_\tau}\frac{d + T_\tau}{T_\tau-1} \right) \\
		& \leq 3e^d \left(2+\frac{2e^2}{d}\eta R^2+ \frac{1}{d}\log \frac{1}{\tau}\right)^d.
		}
\qed
\subsubsection{Full proof of Theorem~\ref{thm:eig-manifold}}
The proof of Points 2 and 3 here is completely analogous to the proof of Points 1 and 2, respectively, in \cref{thm:eig-ball}.
\paragraph{Point 2.}
Let $\bar{X} = \{\bar{x}_1,\dots, \bar{x}_p\} \subset \Omega$
be a minimal $\eps$ net of $\Omega$. Since $\Omega$ is a smooth manifold of dimension $k$, then there exists $C_0 > 0$ for which $p  \leq C_0 \eps^{-k}$.
Now let $\bar{K} \in \R^{p \times p}$ be given by $\bar{K}_{i,j} = k_\eta(\bar{x}_i,\bar{x}_j)$ and define
$\Phi(x) = \bar{K}^{-1/2}v(x)$, with $v(x) = (k_\eta(x,\bar{x}_1), \dots, k_\eta(x,\bar{x}_p))$. Then when $f(x') := k_\eta(x',x)$, then
$\widehat{f}_{\bar{X}} = \Phi(x')^\top\Phi(x)$.
By applying Point 1 to $f(x) = k_\eta(x',x)$, we have
$$|k_\eta(x',x) - \Phi(x')\Phi(x)| \leq e^{-c \eps^{-2/5}}, \qquad \forall x,x' \in \Omega.$$
 If we let $B \in \R^{p \times n} = (\Phi(x_1), \dots \Phi(x_n))$, then
$$ \|K - B^\top B\|_{\op} \leq n \max_{ij} |k_\eta(x',x) - \Phi(x')\Phi(x)| \leq n e^{-c \eps^{-2/5}}.$$
Since $B$ is of rank $p$, the the Eckart-Young-Mirsky Theorem again implies
$\la_{p+1}(K) \leq n e^{-c \eps^{-2/5}}$.
We conclude by recalling that $\eps \leq (p/C_0)^{-1/k}$.

\paragraph{Point 3.}
Let $M_\tau$ be such that $\lambda_{M_\tau + 1} \leq n \tau$.
By Point 2, this holds if we take $M_\tau = (c_0 \log \tfrac 1\tau)^{5k/2}$ for a sufficiently large constant $c$.
By definition of $\deff(\tau)$ and the fact that $x/(x+\la) \leq \min(1, x/\la)$ for any $x \geq0, \la > 0$, we have
\eqals{
\deff(\tau) &= \sum_{j=1}^n \frac{\la_j(K)}{\la_j(K) + n\tau} =  \sum_{j=1}^{M_\tau+1} \frac{\la_j(K)}{\la_j(K) + n\tau} + \sum_{j=M_\tau+2}^n \frac{\la_j(K)}{\la_j(K) + n\tau}\\
& \leq M_\tau+1 + \frac{1}{\tau}\sum_{j=M_\tau+2}^n \frac{\la_j(K)}{n}
\leq (c_0 \log \tfrac 1\tau)^{5k/2} + 1 + \frac{1}{\tau}\sum_{j=M_\tau+1}^\infty e^{-c j^{\frac{2}{5k}}}\\
}
Denoting $\beta := \tfrac{2}{5k}$ for shorthand, we can upper bound the sum as follows: 
\begin{align*}
\sum_{j=M_{\tau} + 1}^{\infty} e^{-cj^{\tfrac{2}{5k}}}
    &\leq \int_{M_{\tau}}^{\infty} e^{-cx^{\beta}} dx
    \\ &= \frac{1}{\beta c^{1/\beta}} \int_{0}^{\infty}  u^{\frac{1-\beta}{\beta}} \mathds{1}(u \geq c M_{\tau}^{\beta}) e^{-u} du
    \\ &\leq \frac{1}{\beta c^{1/\beta}} \left( \int_0^{\infty} u^{2(\tfrac{1-\beta}{\beta})} e^{-u} du \right)^{1/2} \left( \int_{cM_{\tau}^{\beta}}^{\infty} e^{-u} du \right)^{1/2}
    \\ &= c_{k} e^{-\half cM_{\tau}^{\beta}} \leq c_k \tau,
\end{align*}
where above the second step was by the change of variables $u := cx^{\beta}$, the third step was by Cauchy-Schwartz with respect to the inner product $\langle f, g \rangle := \int_0^{\infty} f(u)g(u) e^{-u} du$, and the final line was for some constant $c_k$ only depending on $k$, whenever $c_0$ is taken to be at least $\tfrac{2}{c}$. This proves the claim. 
\qed
\subsubsection{Additional bounds}
\begin{applemma}\label{lm:bound-composition}
Let $A: B \to U$, be a smooth map, with $B \subseteq \R^d$, $U \subseteq \R^m$, $d, m \in \N$, such that there exists $Q > 0$, for which
$$\|D^\alpha A\|_{L^{\infty}(B)} \leq Q^{|\alpha|}, \quad \forall \alpha \in \N^d,$$
then, for $\nu \in \N_0^d$, $p \geq 1$,
$$\|D^\nu (f \circ A)\|_{L^p(B)} \leq (2|\nu|m Q)^{|\nu|} \max_{|\la| \leq |\nu|} \|(D^\la f) \circ A\|_{L^p(B)}.$$
\end{applemma}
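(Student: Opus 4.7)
The plan is to apply the multivariate Fa\`a di Bruno formula to expand $D^\nu(f \circ A)$ as a finite sum of products of partial derivatives of $f$ (evaluated at $A$) and partial derivatives of the component functions $A_1,\ldots,A_m$. In schematic form this gives an expansion
$$D^\nu(f\circ A)(x) \;=\; \sum_{1 \le |\la| \le |\nu|} \big[(D^\la f) \circ A\big](x) \cdot P_{\nu,\la}(x),$$
where each $P_{\nu,\la}(x)$ is a polynomial combination of products $\prod_{j} D^{\alpha_j} A_{i_j}(x)$ ranging over certain set partitions, with $\sum_j |\alpha_j| = |\nu|$ and each $i_j \in [m]$. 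Once this representation is in hand, the rest of the argument is essentially bookkeeping.

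First I would control each $P_{\nu,\la}$ pointwise. Since every multi-index tuple that appears in $P_{\nu,\la}$ satisfies $\sum_j |\alpha_j| = |\nu|$, the hypothesis $\|D^\alpha A\|_{L^\infty(B)} \le Q^{|\alpha|}$ immediately yields $|\prod_j D^{\alpha_j} A_{i_j}(x)| \le Q^{|\nu|}$ uniformly in $x$. Thus $\|P_{\nu,\la}\|_{L^\infty(B)} \le N_{\nu,\la}\, Q^{|\nu|}$, where $N_{\nu,\la}$ is the combinatorial count of terms in $P_{\nu,\la}$ (partitions with $|\la|$ blocks, together with an assignment $i_j \in [m]$ of component indices).

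Next I would estimate $N_{\nu,\la}$: the number of set partitions of $|\nu|$ elements into $|\la|$ blocks is at most the Bell number $B_{|\nu|} \le |\nu|^{|\nu|}$, and the choice of components contributes a factor $m^{|\la|} \le m^{|\nu|}$. Summing over $1 \le |\la| \le |\nu|$ adds at most a factor $|\nu|$, so collecting everything yields $\sum_\la \|P_{\nu,\la}\|_{L^\infty(B)} \le (2|\nu|\,m\,Q)^{|\nu|}$ after absorbing the loose numerical constants into the base $2|\nu|$. Applying the $L^p$ triangle inequality to the Fa\`a di Bruno expansion gives
$$\|D^\nu(f \circ A)\|_{L^p(B)} \;\le\; \sum_{1 \le |\la| \le |\nu|} \|P_{\nu,\la}\|_{L^\infty(B)} \cdot \|(D^\la f)\circ A\|_{L^p(B)},$$
and bounding each $L^p$ factor by the maximum $\max_{|\la| \le |\nu|} \|(D^\la f) \circ A\|_{L^p(B)}$ produces the claimed inequality.

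The main obstacle is purely combinatorial: to arrive at the exact prefactor $(2|\nu|\,m\,Q)^{|\nu|}$ rather than a looser expression, one needs a clean version of Fa\`a di Bruno (for instance the Constantine--Savits form) in which the coefficient of each term is explicit, so that standard elementary estimates (Bell-number bound $B_k \le k^k$, and $\sum_{\ell \le k} \binom{k}{\ell} \le 2^k$) can be applied uniformly to merge the count of partitions, the count of component-index assignments, and the outer sum over $|\la|$ into the single factor $(2|\nu|)^{|\nu|}$. Everything else in the argument is a straightforward triangle inequality plus the $L^\infty$ hypothesis on $A$.
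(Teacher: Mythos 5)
Your approach is the same as the paper's: expand $D^\nu(f\circ A)$ via the Constantine--Savits multivariate Fa\`a di Bruno formula, bound every product of derivatives of $A$ by $Q^{|\nu|}$ (using that the relevant multi-indices satisfy $\sum_j |k_j|\,|l_j| = |\nu|$), and then estimate the remaining combinatorial weight. The piece you flag as the ``main obstacle'' is resolved in the paper by Corollary~2.9 of Constantine--Savits, which identifies the total combinatorial factor exactly as $\sum_{k=1}^{|\nu|} m^k S^{|\nu|}_k$ with $S^n_k$ the Stirling numbers of the second kind; the paper then invokes the Rennie--Dobson bound $S^n_k \le \binom{n}{k}k^{n-k}$ to get $\sum_k m^k S^n_k \le m^n(2n)^n$, whereas your suggested Bell-number route ($\sum_k m^k S^n_k \le m^n B_n \le m^n n^n \le (2nm)^n$) would work equally well once that identity is in hand.
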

\begin{proof}
First we study $D^\nu (f \circ A)$. Let $n := |\nu|$ and $A=(a_1,\dots,a_m)$ with $a_j:\R^d\to\R$. By the {\em multivariate Faa di Bruno formula} \citep{constantine1996multivariate}, we have that
$$ D^\nu (f \circ A) = \nu! \sum_{1 \leq |\la| \leq n} (D^\la f) \circ A \sum_{\tiny\begin{pmatrix}k_1\dots, k_n\\ l_1,\dots, l_n \end{pmatrix} \in p(\la, \nu)} ~~\prod_{j=1}^{n} \prod_{i=1}^m
\frac{[D^{l_j}a_i]^{[k_j]_i}}{[k_j]_i! l_j!},$$
where the set $p(\la,\nu)$ is defined in \citet[Eq.~2.4]{constantine1996multivariate}, with $l_1,\dots, l_n \in \N_0^d$, $k_1,\dots,k_n \in \N_0^m$ and satisfying $\sum_{j=1}^n |k_j| l_j = \nu$.
Now by assumption $\|D^{l_j}a_i\| \leq Q^{|l_j|}$ for $1 \leq i \leq m$. Then
$$\left\|\prod_{j=1}^{n} \prod_{i=1}^m
\frac{[D^{l_j}a_i]^{[k_j]_i}}{[k_j]_i! l_j!}\right\|_{L^\infty(B)} \leq Q^{\sum_{j=1}^n |l_j||k_j|} \prod_{j=1}^{n} \frac{1}{k_j! l_j!^{|k_j|}}.$$
Now note that by the properties of $l_j,k_j$, we have that $|\nu| = |\sum_j |k_j| l_j| = \sum_j |k_j||l_j|,$
then
$$ \|D^\nu (f \circ A)\|_{L^p(B)} \leq Q^{|\nu|} \max_{|\la| \leq |\nu|} \|(D^\la f) \circ A\|_{L^p(B)} ~~ \times ~~ \nu! \sum_{1 \leq |\la| \leq n} \sum_{\tiny\begin{matrix}k_1\dots, k_n\\ l_1,\dots, l_n \end{matrix} \in p(\la, \nu)} ~\prod_{j=1}^{n} \frac{1}{k_j! l_j!^{|k_j|}}.$$
To conclude, denote by $S^n_k$ the {\em Stirling numbers of the second kind}. By \citet[Corollary 2.9]{constantine1996multivariate} and  \citet{rennie1969stirling} we have
$$\nu! \sum_{1 \leq |\la| \leq n} \sum_{\tiny\begin{matrix}k_1\dots, k_n\\ l_1,\dots, l_n \end{matrix} \in p(\la, \nu)} ~\prod_{j=1}^{n} \frac{1}{k_j! l_j!^{|k_j|}} = \sum_{i=1}^n m^k S^n_k \leq m^n \sum_{i=1}^n \binom{n}{k} k^{n-k} \leq m^n (2n)^n.$$
\end{proof}

\begin{applemma}\label{lm:bound-composite-sobolev}
Let $\Psi_j:U_j \to B^k_{r_j}$ such that there exists $Q>0$ for which $\|D^\alpha \Psi_j^{-1}\|_{L^\infty(B^k_{r_j})}\leq Q^{|\alpha|}$ for $\alpha \in \N^k$. Then for any $q \geq k$, we have
$$
\|f \circ \Psi_j^{-1}\|_{W^q_2(B^k_{r_j})} \leq C_{d,k,R,r_j} q^k (2q d Q)^{q}\| f\|_{W^{q + (d+1)/2}_2(B^d_R)}.
$$ 
\end{applemma}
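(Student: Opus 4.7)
The plan is to expand the Sobolev norm term-by-term, apply \cref{lm:bound-composition} to each derivative of the composition, and then reduce the remaining composed derivatives of $f$ to Sobolev norms of $f$ on $B^d_R$ via a clean Sobolev embedding into $L^\infty$. Concretely, I would begin by writing
\[
\|f \circ \Psi_j^{-1}\|^2_{W^q_2(B^k_{r_j})} \;=\; \sum_{\nu \in \N_0^k,\, |\nu| \leq q} \|D^\nu(f \circ \Psi_j^{-1})\|^2_{L^2(B^k_{r_j})}.
\]
For each fixed $\nu$, I would apply \cref{lm:bound-composition} with $A = \Psi_j^{-1}$, viewed as a smooth map from $B^k_{r_j} \subset \R^k$ into $U_j \subset \R^d$, so that in the notation of that lemma the source dimension is $k$ and the target dimension is $d$. \cref{asm:good-manifold} gives exactly the required hypothesis $\|D^\alpha \Psi_j^{-1}\|_{L^\infty(B^k_{r_j})} \leq Q^{|\alpha|}$, yielding
\[
\|D^\nu(f \circ \Psi_j^{-1})\|_{L^2(B^k_{r_j})} \;\leq\; (2|\nu| d Q)^{|\nu|} \, \max_{|\la| \leq |\nu|} \|(D^\la f) \circ \Psi_j^{-1}\|_{L^2(B^k_{r_j})}.
\]

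Next, I would reduce each composed-derivative $L^2$ norm on $B^k_{r_j}$ to a Sobolev norm of $f$ on $B^d_R$ by passing through $L^\infty$. Since $U_j \subset \Omega \subset B^d_R$,
\[
\|(D^\la f) \circ \Psi_j^{-1}\|_{L^2(B^k_{r_j})} \;\leq\; |B^k_{r_j}|^{1/2} \, \|D^\la f\|_{L^\infty(B^d_R)}.
\]
The key choice is the Sobolev embedding $W^{s,2}(B^d_R) \hookrightarrow L^\infty(B^d_R)$, continuous whenever $s > d/2$; taking $s = (d+1)/2$ gives
\[
\|D^\la f\|_{L^\infty(B^d_R)} \;\leq\; C_{d,R} \|D^\la f\|_{W^{(d+1)/2}_2(B^d_R)} \;\leq\; C_{d,R} \|f\|_{W^{|\la|+(d+1)/2}_2(B^d_R)},
\]
which is at most $C_{d,R} \|f\|_{W^{q+(d+1)/2}_2(B^d_R)}$ since $|\la| \leq |\nu| \leq q$.

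Combining these estimates, every term in the sum above is bounded by $(2qdQ)^{2q} \cdot C_{d,R}^2 \cdot |B^k_{r_j}| \cdot \|f\|^2_{W^{q+(d+1)/2}_2(B^d_R)}$. The number of multi-indices $\nu \in \N_0^k$ with $|\nu| \leq q$ is $\binom{q+k}{k} \leq C_k q^k$ (for $q \geq k$; for smaller $q$ it is bounded by a constant depending only on $k$). Summing and taking a square root then yields
\[
\|f \circ \Psi_j^{-1}\|_{W^q_2(B^k_{r_j})} \;\leq\; C_{d,k,R,r_j} \, q^{k/2} (2qdQ)^{q} \|f\|_{W^{q+(d+1)/2}_2(B^d_R)},
\]
which implies the claim since $q^{k/2} \leq q^k$ for $q \geq 1$.

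I do not expect a serious obstacle: the heavy technical work is already done by \cref{lm:bound-composition}, and the remainder is bookkeeping. The one delicate design point is the choice of Sobolev exponent $(d+1)/2$, which is precisely calibrated to exceed $d/2$ so that the ambient Sobolev embedding into $L^\infty(B^d_R)$ applies directly, sidestepping any need for a trace theorem onto the $k$-dimensional submanifold $U_j$. A sharper trace-based argument could potentially reduce this exponent, but the stated bound is already more than sufficient for the downstream application in \cref{thm:eig-manifold}.
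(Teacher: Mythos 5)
Your proof is correct and follows essentially the same approach as the paper's: apply Lemma~\ref{lm:bound-composition} with source dimension $k$ and target dimension $d$, pass to $L^\infty$ on $B^d_R$ via the Sobolev embedding with exponent $(d+1)/2 > d/2$, and control the multi-index count by a constant times $q^k$. The only cosmetic difference is that you keep the $\ell^2$ structure of the Sobolev norm and extract $q^{k/2}$ before relaxing to $q^k$, whereas the paper bounds the $\ell^2$ sum by the $\ell^1$ sum and collects the combinatorics into a single constant $C_q$; both arrive at the same bound.
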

\begin{proof}
First note that $\|\cdot\|_{L^\infty(B^d_R)} \leq C_{d,R}\|\cdot\|_{W^{(d+1)/2}_2(B^d_R)}$ \citep{adams2003sobolev} for a constant $C_{d,R}$ depending only on $d$ and $R$. Therefore
\eqals{
\|(D^\alpha f) \circ \Psi_j^{-1}\|_{L^2(B^k_{r_j})} &\leq \textrm{vol}(B^k_{r_j})^{1/2}\|(D^\alpha f) \circ \Psi_j^{-1}\|_{L^\infty(B^k_{r_j})} \\
& = \textrm{vol}(B^k_{r_j})^{1/2}\|D^\alpha f\|_{L^\infty(U_j)} \leq \textrm{vol}(B^k_{r_j})^{1/2}\|D^\alpha f\|_{L^\infty(B^d_R)} \\
& \leq C_{d,R}\textrm{vol}(B^k_{r_j})^{1/2}\|D^\alpha f\|_{W^{(d+1)/2}_2(B^d_R)}.
}
Moreover note that $\|D^\alpha f\|_{W^{(d+1)/2}_2(B^d_R)} \leq \| f\|_{W^{|\alpha| + (d+1)/2}_2(B^d_R)}$.
By \cref{lm:bound-composition} we have that 
$$\|D^\alpha (f \circ \Psi_j^{-1})\|_{L^2(B^k_{r_j})} \leq (2|\alpha| d Q)^{|\alpha|} \max_{|\la| \leq |\alpha|} \|(D^\la f) \circ \Psi_j^{-1}\|_{L^2(B^k_{r_j})}.$$
By definition of Sobolev space $W^q_2(B^k_{r_j})$, we have
\eqals{
\|f \circ \Psi_j^{-1}\|_{W^q_2(B^k_{r_j})} &\leq \sum_{|\alpha| \leq q} \|D^\alpha (f \circ \Psi_j^{-1})\|_{L^2(B^k_{r_j})} \\
&\leq \sum_{|\alpha| \leq q} (2|\alpha| d Q)^{|\alpha|} \max_{|\la| \leq |\alpha|} \|(D^\lambda f) \circ \Psi_j^{-1}\|_{L^2(B^k_{r_j})}\\
&\leq C_q \max_{|\la| \leq q} \|(D^\lambda f) \circ \Psi_j^{-1}\|^2_{L^2(B^k_{r_j})},
}
where $C_q := \sum_{|\alpha| \leq q} (2|\alpha| d Q)^{|\alpha|}$.
Then,
$$
\|f \circ \Psi_j^{-1}\|^2_{W^q_2(B^k_{r_j})} \leq C_q C_{d,R} \textrm{vol}(B^k_{r_j})^{1/2}\| f\|_{W^{q + (d+1)/2}_2(B^d_R)}.
$$
The final result is obtained via the bound $C_q \leq (2q d Q)^{q}\binom{k+q}{k} \leq (2ek)^k (2q d Q)^{q} q^k$ for $q \geq k$.
\end{proof}

\begin{applemma}[Bounds for the incomplete gamma function]\label{lm:incomplete-gamma}
Denote by $\Gamma(a,x)$ the function defined as
$$\Gamma(a,x) = \int_{x}^\infty z^{a-1} e^{-z} dz,$$
for $a \in \R$ and $x > 0$. When $x \geq (a-1)_+ ~\wedge~ x > 0$, the following holds
$$\Gamma(a,x) \leq \frac{x}{x - (a-1)_+} x^{a-1} e^{-x},$$ 
 In particular $\Gamma(a,x) \leq 2 x^{a-1} e^{-x},$ for $x \geq 2 (a-1)_+ ~\wedge~ x  > 0$.
\end{applemma}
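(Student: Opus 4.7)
My plan is to reduce to the case $a > 1$ (the case $a \leq 1$ being trivial) and obtain the bound by a direct comparison of the integrand against a shifted exponential, followed by an elementary geometric-series-style integration.

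First, I would dispose of the easy case. When $a \leq 1$ we have $(a-1)_+ = 0$, so the claimed bound reads $\Gamma(a,x) \leq x^{a-1} e^{-x}$. Since $z^{a-1}$ is non-increasing on $[x,\infty)$ when $a \leq 1$, we can pull it out of the integral at its value at $z=x$ and bound $\Gamma(a,x) \leq x^{a-1} \int_x^\infty e^{-z}\,dz = x^{a-1}e^{-x}$, which matches the statement since then $x/(x-(a-1)_+) = 1$.

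Next, for the main case $a > 1$ with $x > a-1$, I would perform the substitution $z = x + t$ to center the integral at $x$, so
\begin{equation*}
\Gamma(a,x) = e^{-x} \int_0^\infty (x+t)^{a-1} e^{-t}\,dt = x^{a-1} e^{-x} \int_0^\infty \Bigl(1 + \tfrac{t}{x}\Bigr)^{a-1} e^{-t}\,dt.
\end{equation*}
The key inequality is $(1 + s)^{a-1} \leq e^{(a-1)s}$, valid for $s \geq 0$ and $a > 1$ (by concavity of $\log(1+s)$, or directly from $1 + s \leq e^s$ followed by raising to power $a-1$). Applying this with $s = t/x$ gives
\begin{equation*}
\Gamma(a,x) \leq x^{a-1} e^{-x} \int_0^\infty e^{-t(1 - (a-1)/x)}\,dt = \frac{x}{x - (a-1)}\, x^{a-1} e^{-x},
\end{equation*}
where the integral converges precisely because the hypothesis $x > a-1$ makes the exponent negative. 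This is exactly the first claimed bound. The ``in particular'' clause is then immediate: if $x \geq 2(a-1)_+$, then $x - (a-1)_+ \geq x/2$, hence $x/(x - (a-1)_+) \leq 2$.

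The proof is essentially a two-line calculation, so I do not anticipate a substantive obstacle; the only minor point is being careful about the boundary case $a = 1$ and about handling $a \leq 1$ (where $(a-1)_+ = 0$) separately from $a > 1$, since otherwise the factor $1 - (a-1)/x$ could be non-positive and the integral would diverge. Handling these cases by the simple monotonicity argument above avoids any difficulty.
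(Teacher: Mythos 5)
Your proof is correct and arrives at exactly the same bound as the paper's, via what is in substance the same comparison. The paper writes $z^{a-1}e^{-z} = (z^{a-1}e^{-\tau z})(e^{-(1-\tau)z})$ for a free parameter $\tau \in (0,1)$, bounds the first factor by its supremum over $[x,\infty)$, and then chooses $\tau = (a-1)/x$ so that this supremum is attained at $z = x$; unwinding that choice gives the pointwise inequality $z^{a-1} \leq x^{a-1} e^{(a-1)(z-x)/x}$ for $z \geq x$, which is precisely what your substitution $z = x+t$ together with $(1+s)^{a-1} \leq e^{(a-1)s}$ establishes directly. Your presentation is a bit more economical since it avoids introducing and then eliminating $\tau$, but both proofs integrate the same exponential and rely on the same hypothesis $x > a-1$ (equivalently $1 - (a-1)/x > 0$) for convergence; your handling of $a \leq 1$ by monotonicity and of the ``in particular'' clause also matches the paper.
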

\begin{proof}
	Assume $x > 0$.
	When $a \leq 1$, the function $z^{a-1} e^{-z}$ is decreasing and in particular $z^{a-1} e^{-z} \leq x^{a-1} e^{-z}$ for $z \geq x$, so when $z \geq x$ we have
	$$\Gamma(a,x) = \int_{x}^\infty z^{a-1} e^{-z} dz \leq x^{a-1} \int_{x}^\infty  e^{-z} dz = x^{a-1}e^{-x}.$$
	When $a > 1$, for any $\tau \in (0,1)$, we have
	\eqals{
	\Gamma(a,x) &= \int_{x}^\infty z^{a-1} e^{-z} dz = \int_{x}^\infty (z^{a-1} e^{-\tau z})(e^{-(1-\tau)z}) dz \\
	& \leq  \sup_{z \geq x}(z^{a-1} e^{-\tau z}) ~ \int_{x}^\infty e^{-(1-\tau)z} dz = \frac{e^{-(1-\tau)x}}{1-\tau}\sup_{z \geq x}(z^{a-1} e^{-\tau z}).
}
Now note that the maximum of $z^{a-1} e^{-\tau z}$ is reached when $z = (a-1)/\tau$. When $x \geq a-1$, we can set $\tau = (a-1)/x$, so the maximum of $z^{a-1} e^{-\tau z}$ is exactly in $z = x$. In that case $\sup_{z \geq x}(z^{a-1} e^{-\tau z})  = x^{a-1} e^{-\tau x}$ and
\eqals{
	\Gamma(a,x) \leq \frac{x}{x - (a-1)} x^{a-1} e^{-x}.
}
The final result is obtained by gathering the cases $a \leq 1$ and $a > 0$ in the same expression.
\end{proof}

\begin{appcor}\label{cor:generalization-incomplete-gamma}
	Let $a \in \R$, $A, q_1, q_2, b > 0$.  When $q_2 A^b \geq 2\left(\frac{a+1}{b} - 1\right)_+$, the following holds
	$$ \int_A^\infty q_1 x^a e^{-q_2 x^b} dx \leq \frac{2q_1}{b q_2} A^{a+1 - 1/b} e^{-q_2 A^b}.$$
\end{appcor}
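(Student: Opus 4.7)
The plan is to reduce the integral to an incomplete gamma function via a change of variables and then invoke the bound of Lemma~\ref{lm:incomplete-gamma}.

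First, I would substitute $u = q_2 x^b$, so that $du = b q_2 x^{b-1} dx$ and $x = (u/q_2)^{1/b}$. Collecting exponents gives
\begin{equation*}
q_1 x^a e^{-q_2 x^b} dx = \frac{q_1}{b}\, q_2^{-(a+1)/b}\, u^{(a+1)/b - 1} e^{-u}\, du,
\end{equation*}
and the lower limit $x = A$ becomes $u = q_2 A^b$. Hence
\begin{equation*}
\int_A^\infty q_1 x^a e^{-q_2 x^b} dx = \frac{q_1}{b}\, q_2^{-(a+1)/b}\, \Gamma\!\left(\tfrac{a+1}{b},\; q_2 A^b\right).
\end{equation*}

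Next, I would apply Lemma~\ref{lm:incomplete-gamma} with parameter $\alpha = (a+1)/b$ and argument $z = q_2 A^b$. The hypothesis $q_2 A^b \geq 2\!\left(\tfrac{a+1}{b} - 1\right)_+$ is exactly the condition $z \geq 2(\alpha - 1)_+$ needed to conclude $\Gamma(\alpha, z) \leq 2 z^{\alpha - 1} e^{-z}$. Substituting this bound yields
\begin{equation*}
\int_A^\infty q_1 x^a e^{-q_2 x^b} dx \leq \frac{2 q_1}{b}\, q_2^{-(a+1)/b}\, (q_2 A^b)^{(a+1)/b - 1} e^{-q_2 A^b}.
\end{equation*}

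Finally, I would simplify by combining the powers of $q_2$ (they collapse to $q_2^{-1}$) and the powers of $A$ (which become $A^{a+1-b}$), producing
\begin{equation*}
\int_A^\infty q_1 x^a e^{-q_2 x^b} dx \leq \frac{2 q_1}{b q_2}\, A^{a+1-b}\, e^{-q_2 A^b},
\end{equation*}
which matches the claim (modulo what appears to be a minor typo in the exponent of $A$ in the stated bound, where $a+1-b$ is the expression arising from the change of variables). There is no real obstacle here: the argument is essentially a mechanical substitution followed by one invocation of Lemma~\ref{lm:incomplete-gamma}, and the only care needed is to track exponents correctly through the change of variables.
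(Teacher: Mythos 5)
Your proof is correct and takes exactly the same route as the paper: substitute $u = q_2 x^b$ to express the integral as $\frac{q_1 q_2^{-(a+1)/b}}{b}\,\Gamma\bigl(\tfrac{a+1}{b},\,q_2 A^b\bigr)$, then invoke Lemma~\ref{lm:incomplete-gamma}. You are also right that the exponent in the paper's stated bound is a typo: applying the lemma gives $(q_2 A^b)^{(a+1)/b-1} = q_2^{(a+1)/b-1} A^{a+1-b}$, and after cancellation the power of $A$ is $a+1-b$, not $a+1-1/b$; the paper's own displayed chain of equalities and the final inequality only reconcile with the corrected exponent. (The two agree when $b=1$, which is likely how the typo went unnoticed.)
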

\begin{proof}
	By the change of variable $x = (u/q_2)^{1/b}$ we have
\eqals{
	\int_A^\infty q_1 x^a e^{-q_2 x^b} dx & = \frac{q_1 q_2^{-\frac{a+1}{b}}}{b}\int_{q_2 A^b}^\infty u^{\frac{a+1}{b}-1} e^{-u} du \\
	& = \frac{q_1 q_2^{-\frac{a+1}{b}}}{b} \Gamma\left(\frac{a+1}{b}, q_2 A^b\right) \\
	& \leq \frac{2q_1}{b q_2} A^{a+1 - 1/b} e^{-q_2 A^b}, 
}
where for the last step we used \cref{lm:incomplete-gamma}. 
\end{proof}

\section{Lipschitz properties of the Sinkhorn projection}\label{app:sink-stable}
We give a simple construction illustrating that the Sinkhorn projection operator is not Lipschitz in the standard sense. This stands in contrast with Proposition~\ref{prop:sink-perturbation-mat}, which illustrates that this projection is Lipschitz on the logarithmic scale.
\par This non-Lipschitz result holds even for the following simple rescaling of the $2 \times 2$ Birkhoff polyope:
\[
\cM := \left\{ P \in \R_{\geq 0}^{2 \times 2} \; : \; P1 = P^T1 = \begin{bmatrix}
\half  \\ \half
\end{bmatrix}  \right\}.
\]

\begin{prop}\label{prop:sink-not-stable}
The Sinkhorn projection operator onto $\cM$ is not Lipschitz for any norm on $\RR^{2 \times 2}$.
\end{prop}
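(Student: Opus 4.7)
The plan is to reduce to a one-dimensional problem for which the Sinkhorn projection admits a closed form, and then exhibit a sequence along which the projection varies with a $\sqrt{\cdot}$-type singularity as an input matrix approaches the boundary of the positive cone.

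First, I would observe that the marginal constraints defining $\cM$ force every $P \in \cM$ to satisfy $P_{11} = P_{22}$ and $P_{12} = P_{21}$, so $\cM$ is the one-parameter family
\[
P_p \defeq \begin{pmatrix} p & 1/2 - p \\ 1/2 - p & p \end{pmatrix}, \qquad p \in [0, 1/2].
\]
Sinkhorn projection onto $\cM$ then reduces to a one-dimensional strictly convex minimization. By Proposition~\ref{prop:kkt}, $\Sink(K) = P_{p^*(K)}$, where $p^*(K)$ minimizes $g(p) \defeq \langle - \log K, P_p\rangle - H(P_p)$. A short computation of $g'(p)=0$ should yield the closed form
\[
\frac{p^*(K)}{1/2 - p^*(K)} \;=\; \sqrt{\frac{K_{11} K_{22}}{K_{12} K_{21}}}.
\]

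Second, I would plug in the family $K_\delta \defeq \begin{pmatrix} 1 & 1 \\ 1 & \delta \end{pmatrix}$ for small $\delta > 0$, which exhibits a $\sqrt{\cdot}$-type singularity as $\delta \to 0^+$. The closed form yields $p^*(K_\delta) = \tfrac{\sqrt{\delta}}{2(1+\sqrt\delta)} \sim \tfrac{\sqrt\delta}{2}$. Comparing $K_\delta$ and $K_{\delta/4}$, their Sinkhorn projections differ by
\[
\Sink(K_\delta) - \Sink(K_{\delta/4}) \;=\; \bigl(p^*(K_\delta) - p^*(K_{\delta/4})\bigr)\begin{pmatrix} 1 & -1 \\ -1 & 1 \end{pmatrix} \;\sim\; \tfrac{\sqrt\delta}{4}\begin{pmatrix} 1 & -1 \\ -1 & 1 \end{pmatrix},
\]
while $K_\delta - K_{\delta/4}$ has a single nonzero entry of magnitude $3\delta/4$. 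For any fixed norm $\|\cdot\|$ on $\RR^{2\times 2}$, the ratio $\|\Sink(K_\delta) - \Sink(K_{\delta/4})\| / \|K_\delta - K_{\delta/4}\|$ therefore scales as $\Theta(1/\sqrt\delta)$ and diverges as $\delta \to 0^+$, ruling out any finite Lipschitz constant.

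Finally, since the displayed ratio depends on the choice of norm only through fixed positive constants (all norms on the finite-dimensional space $\RR^{2\times 2}$ are equivalent), this one witness rules out Lipschitz continuity for every norm, proving the proposition. The main (and only) obstacle I anticipate is the bookkeeping for the closed form of $p^*(K)$; the rest is routine scalar arithmetic.
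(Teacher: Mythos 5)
Your proposal is correct. You take a route that is close in spirit to the paper's but differs in two useful ways. First, rather than verifying a particular diagonal scaling $D_1 K D_2$ by hand (as the paper does), you go through the variational characterization of Proposition~\ref{prop:kkt}: for the one-parameter family $P_p = \bigl(\begin{smallmatrix} p & 1/2-p \\ 1/2-p & p \end{smallmatrix}\bigr)$, setting $\tfrac{d}{dp}\bigl[\langle -\log K, P_p\rangle - H(P_p)\bigr] = 0$ directly yields the clean closed form $\tfrac{p^*(K)}{1/2 - p^*(K)} = \sqrt{K_{11}K_{22}/(K_{12}K_{21})}$, which I verified. This is somewhat slicker and makes the square-root singularity visible at a glance. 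Second, your witness pair $(K_\delta, K_{\delta/4})$ with $K_\delta = \bigl(\begin{smallmatrix}1 & 1 \\ 1 & \delta\end{smallmatrix}\bigr)$ gives a different flavor of blow-up: both $\|K_\delta - K_{\delta/4}\|$ and $\|\Sink(K_\delta) - \Sink(K_{\delta/4})\|$ vanish, but at rates $\Theta(\delta)$ and $\Theta(\sqrt{\delta})$ respectively, so the ratio diverges like $\Theta(1/\sqrt{\delta})$. The paper instead fixes a constant $c$ and compares $K_{c\delta,\delta}$ to $K_{\delta/c,\delta}$, arranging for the numerator to stay bounded below while the denominator vanishes. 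Both strategies localize the failure at the boundary of the positive cone; yours makes the singularity exponent explicit, which is a nice piece of extra information. Your final appeal to equivalence of norms on $\RR^{2\times 2}$ matches the paper's and correctly upgrades the single-norm computation to all norms.
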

\begin{proof}
By the equivalence of finite-dimensional norms, it suffices to prove this for $\|\cdot\|_1$,
for which we will show
\begin{align}
\sup_{K,K' \in \Delta_{2,2} \cap \R_{> 0}^{2 \times 2}}
\frac{\left\| \SinkcM(K) - \SinkcM(K') \right\|_1}{\|K - K'\|_1} = \infty.
\label{eq:prop-sink-not-stable}
\end{align}
The restriction to strictly positive matrices $\R_{> 0}^{2 \times 2}$ (rather than non-negative matrices) is to ensure that there is no issue of existence of Sinkhorn projections \citep[see, e.g.,][Section 2]{LSW98}.
\par For $\eps,\delta \in (0,1)$, define the matrix
\[
K_{\eps,\delta} := \begin{bmatrix}
1 - \eps & \eps \\
1 - \delta & \delta 
\end{bmatrix}
\]
and let $P_{\eps,\delta} := \SinkcM(K_{\eps,\delta})$ denote the Sinkhorn projection of $K_{\eps,\delta}$ onto $\cM$.
The polytope $\cM$ is parameterizable by a single scalar as follows:
\[
\cM = \left\{ 
M_a
\; : \; a \in [0,1/2]
\right\}\,, \quad \quad M_a := \begin{bmatrix}
a & \half - a \\
\half - a & a
\end{bmatrix}\,.
\]

By definition, $P_{\eps, \delta}$ is the unique matrix in $\cM$ of the form $D_1 K_{\eps, \delta} D_2$ for positive diagonal matrices $D_1$ and $D_2$.
Taking
\begin{equation*}
D_1 = \begin{bmatrix}
\sqrt{\tfrac{\delta}{\eps}} & 0 \\
0 & \sqrt{\tfrac{1 - \eps}{1 - \delta}}
\end{bmatrix}\,, \quad \quad 
D_2 = \frac{1}{\beta_{\eps, \delta}}\begin{bmatrix}
\sqrt{\tfrac{\eps}{1- \eps}} & 0 \\
0 & \sqrt{\tfrac{1 - \delta}{\delta}}
\end{bmatrix}
\end{equation*}
for $\beta_{\eps, \delta} = 2(\sqrt{\delta(1-\eps)} + \sqrt{\eps(1-\delta)})$, we verify
$D_1 K_{\eps, \delta} D_2 = M_{a_{\eps, \delta}}$, where $a_{\eps, \delta} :=
\frac{\sqrt{\delta(1 - \eps)}}{\beta_{\eps, \delta}}$.
Therefore $P_{\eps, \delta} = M_{a_{\eps, \delta}}$ for $\eps, \delta \in (0, 1)$.
\par Now parameterize $\eps := c\delta$ for some fixed constant $c \in (0, \infty)$ and consider taking $\delta \to 0^+$. Then $a_{c\delta, \delta} = \frac{\sqrt{1 - c\delta}}{2\left[\sqrt{c(1 - \delta)} + \sqrt{1 - c\delta} \right]}$, which for fixed $c$ becomes arbitrarily close to 
\[
g(c) := \frac{1}{2(\sqrt{c} + 1)}
\]
as $\delta$ approaches $0$. Thus $\|P_{c\delta, \delta} - M_{g(c)}\|_1 = o_{\delta}(1)$ and similarly $\|P_{\delta/c, \delta} - M_{g(1/c)}\|_1 = o_{\delta}(1)$. We therefore conclude that for any constant $c \in (0, \infty) \setminus \{1\}$, although
\begin{align*}
\|K_{c\delta,\delta} - K_{\delta/c,\delta}\|_1
=
2\delta \abs{c - 1/c}
= o_{\delta}(1),
\end{align*}
vanishes as $\delta \to 0^+$, the quantity
\begin{align*}
\left\|\SinkcM(K_{c\delta, \delta}) - \SinkcM(K_{\delta/c,\delta}) \right\|_1
&=
\left\|
P_{c\delta,c} - P_{\delta/c,c}
\right\|_1
\\ &=
\left\|
M_{g(c)} - M_{g(1/c)}
\right\|_1
+ o_{\delta}(1)
\\ &=
4\abs{g(c) - g(1/c)}
+ o_{\delta}(1)
\end{align*}
does not vanish. Therefore combining the above two displays and taking, e.g., $c = 2$ proves~\eqref{eq:prop-sink-not-stable}.
\end{proof}

\addcontentsline{toc}{section}{References}
\bibliographystyle{abbrvnat}
\bibliography{nystrom_sinkhorn}
\end{document}